\theoremstyle{plain}
\newtheorem{theorem}{Theorem}[section]
\newtheorem{lemma}[theorem]{Lemma}
\newtheorem{definition}[theorem]{Definition}
\newtheorem{assumption}[theorem]{Assumption}
\theoremstyle{definition}
\DeclareMathOperator*{\argmin}{argmin}
\DeclareMathOperator*{\argmax}{argmax}
\newcommand{\reals}{\mathbb{R}}
\newcommand{\br}{\text{BR}}
\newcommand{\Hs}{\mathcal{H}}
\newcommand{\X}{\mathcal{X}}
\title{Minimax Group Fairness in Strategic Classification}
\author{Emily Diana \thanks{Carnegie Mellon University (CMU). Email: ediana@andrew.cmu.edu} \and Saeed Sharifi-Malvajerdi \thanks{Toyota Technological Institute at Chicago (TTIC). Email: {saeed@ttic.edu}}  \and Ali Vakilian \thanks{Toyota Technological Institute at Chicago (TTIC). Email: {vakilian@ttic.edu}}}
\date{}
\begin{document}

\maketitle
 
\begin{abstract}
In strategic classification, agents manipulate their features, at a cost, to receive a positive classification outcome from the learner's classifier. The goal of the learner in such settings is to learn a classifier that is robust to strategic manipulations. While the majority of works in this domain consider accuracy as the primary objective of the learner, in this work, we consider learning objectives that have group fairness guarantees in addition to accuracy guarantees. We work with the minimax group fairness notion that asks for minimizing the maximal group error rate across population groups.

We formalize a fairness-aware Stackelberg game between a population of agents consisting of several groups, with each group having its own cost function, and a learner in the agnostic PAC setting in which the learner is working with a hypothesis class $\Hs$. When the cost functions of the agents are separable, we show the existence of an efficient algorithm that finds an approximately optimal deterministic classifier for the learner when the number of groups is small. This algorithm remains efficient, both statistically and computationally, even when $\Hs$ is the set of all classifiers. We then consider cost functions that are not necessarily separable and show the existence of \emph{oracle-efficient} algorithms that find approximately optimal randomized classifiers for the learner when $\Hs$ has finite strategic VC dimension. These algorithms work under the assumption that the learner is fully transparent: the learner draws a classifier from its distribution (randomized classifier) \emph{before} the agents respond by manipulating their feature vectors. We highlight the effectiveness of such transparency in developing oracle-efficient algorithms. We conclude with verifying the efficacy of our algorithms on real data by conducting an experimental analysis.
\end{abstract}

\section{Introduction}
Although traditional machine learning and statistics has focused on cases where the testing and training samples are drawn from the same underlying distribution, there are many application scenarios that differ from this setting. Take, for instance, credit scoring. Although the population may not have the precise equation that is used to calculate their credit score by different agencies, there is a good understanding of the basic factors that contribute to it. Many articles online may weigh the pros and cons of strategies to increase one's credit score -- will the hard pull from applying for a new credit card outweigh the benefit of having more credit? Can adding certain types of loans increase the diversity of your portfolio? Are people actually improving their ability to pay back a loan by \textit{applying for more} loans to diversify their credit history? Does having more credit available actually affect credit \textit{worthiness}? As Goodhart's law is stated in \cite{Strathern_1997}, ``When a measure becomes a target, it ceases to be a good measure.''

Given the knowledge that people will typically try to respond strategically in the face of such measures, can this measurement be designed to be as accurate as possible even in the face of manipulation? Furthermore, can this be done in a way that satisfies some notion of statistical \textit{fairness} across sensitive groups? Given the increasing automation of many decision-making procedures in high-stakes contexts, issues of equity and fairness in machine learning applications are increasingly important. Applications such as voice recognition software, credit lending, college admissions, criminal recidivism, online advertising, and many more can be studied through this lens of fairness and equity. Algorithmic and machine learning fairness is a nuanced field, and a constant challenge which typically involves domain experts and context specific reasoning is how to mathematically define fairness in a given context. Would we like a model that equalizes error rates across groups? Might we prefer one that treats similar individuals similarly? Would we rather have that the model exhibits the same true positive rates between sensitive groups? Even when these objectives have been settled, there are varied approaches to intervening in the model training pipeline. One could examine the quality of data collected (do the data exhibit historical biases or are certain groups underrepresented in the data), one could add constraints to the statistical learning process or choose a bespoke method to specifically optimize for a specified version of fairness, or one could post-process the model to correct imbalances after training has been completed.

In this paper, we tackle the challenge of producing minimax fair models in the strategic classification setting. Minimax group fairness, closely related to Rawlsian fairness and the difference principle, seeks solutions that make the worst off group as well off as possible. In the strategic setting, we are interested in situations in which certain groups are fundamentally more challenging to predict or have a small representation in the dataset, but we are also motivated by settings in which certain groups may have fewer resources with which to adapt to a policy or rule published by a firm. This could be because of financial, educational, cultural, or time availability reasons, among many others. For example, time and financial resources may allow individuals from one group to afford a SAT tutor and dramatically increase their score on the SAT without significantly increasing their college readiness, whereas individuals from another group with the same college readiness may have limited time to study on their own and perform much more poorly on the test. By designing rules that can account for such differences between groups and minimize its predictive error on the worst-off group, we can help account for such disparities and potentially dampen as opposed to amplify them.

\subsection{Summary of Contributions}
\begin{itemize}[leftmargin=*]
\item We extend the notion of minimax group fairness to strategic learning settings where a learner is interacting with a population of strategic agents, each of which belong to (only) one of $G$ groups. Given this notion, we formalize a fairness-aware Stackelberg game between the learner and the agents in which the agents manipulate their feature vector to maximize their utility which is measured by their classification outcome minus the cost of manipulation. In our model, we allow each group to have its own cost function.

Given a hypothesis class $\Hs$, the learner's objective in the game is one of the following: I) learn a minimax fair classifier in $\Hs$, II) learn a classifier in $\Hs$ that minimizes the overall error rate subject to minimax fairness constraints. We consider learning in the PAC model and work in the \emph{agnostic} setting in which the underlying ground-truth function that maps agents' feature vectors to labels may not necessarily belong to the hypothesis class of the learner $\Hs$.

\item When the cost functions of the agents are separable, we show the existence of a learning algorithm that solve both objectives of the learner \emph{even when $\Hs$ is the set of all classifiers}. This is because, as we show, the separability assumption makes the learning problem essentially $G$-dimensional \emph{regardless of how complex $\Hs$ is}. The algorithm that we propose for separable costs runs in time that is exponential in the number of groups $G$, so it is efficient only when $G$ is small.

\item We then consider cost functions that are not necessarily separable and ask for efficient reductions from fair learning to standard learning in the strategic setting. In other words, given an oracle that solves learning problems absent fairness constraints in the strategic setting, we ask whether the fair learning objectives of the learner can be solved by calling the oracle only polynomially many times, i.e., whether there exist \emph{oracle-efficient} algorithms. Here, we consider learning randomized classifiers (distributions over $\Hs$) by extending the hypothesis class of the learner $\Hs$ to the probability simplex over $\Hs$.

While often in machine learning the use of randomized classifiers can help convexify (linearize) the objective of the learner, we show how the standard model of strategic classification leads to non-convex optimization problems for the learner \emph{even when randomized classifiers are used}. We then propose a fully \emph{transparent} model of strategic classification to circumvent this convexity issue. In this transparent model, the random classifier of the learner is drawn from the learner's distribution $p$ before the interaction between the learner and the agents occurs. In other words, the learner first draws its classifier $h$ from its distribution $p$ and then reveals $h$ to the agents.

Given the convexity of the learner's objectives, we use techniques from online learning and game theory to develop oracle-efficient learning algorithms for the learner \emph{when $\Hs$ has finite Strategic VC dimension}. The first algorithm that we propose solves the first objective of the learner by making only $O \left( \log G \right)$ calls to the learning oracle. Our second algorithm makes $O \left(G^2 \right)$ oracle calls and solves the second objective of the learner. We note that unlike our algorithm for the case of separable costs, the running time of these algorithms are polynomial in the number of groups. %$G$.

\item We conclude by conducting experiments on real data, evaluating both objectives described earlier: learning a minimax fair classifier, and learning a classifier that minimizes the overall error rate while satisfying our minimax fairness constraint. 
One implementation challenge we face in practice is selecting a heuristic to replace the learning oracle required by the theory. 
In our empirical studies, we use the $\ell_2$-distance as the agent cost function, with different groups having different manipulation budgets. We consider the set of linear classifiers as the hypothesis class $\mathcal{H}$, and employ a simple heuristic---``{\em shifting the optimal non-strategic linear classifier}''---as our learning oracle in the strategic setting.
Our results demonstrate that, even with this simple heuristic, our proposed algorithm outperforms both non-strategic learners and a na\"ive post-processing strategic learner with respect to the fairness objective.
\end{itemize}

\subsection{Related Work}
This work most closely aligns with the fields of strategic classification and algorithmic fairness in machine learning. We cover works that are closely related to this paper below.

\paragraph*{Strategic Classification.}
Strategic classification  was first formalized by \cite{bruckner2011stackelberg, hardt2016strategic}.
\cite{hardt2016strategic} is perhaps the most seminal work in the area of strategic classification. They provide the first computationally efficient algorithms to learn an approximately optimal classifier in strategic settings under the assumption that the agents' cost function is separable. Part of our work assumes separability of the costs and  builds directly upon the algorithm and setting of \cite{hardt2016strategic}. In particular, we extend their results from the single-group to the multiple-group setting in which the population is partitioned into several groups and that each group can have its own separable cost function. Furthermore, in our multiple-group setting, we consider objectives that constrain the learned classifier to satisfy the notion of strategic minimax group fairness that we define.

Following \cite{hardt2016strategic}, several works have studied learning in the presence of strategic agents in both online and PAC models. Some of these works include: \cite{dong2018strategic, chen2020learning, zhang2021incentive, sundaram2023pac, ahmadi2021strategic, ahmadi2023fundamental, lechner2023strategic, cohen2024learnability, shao2024strategic}. In this work, we consider a PAC learning model and use the notion of Strategic VC Dimension introduced by \cite{sundaram2023pac} to characterize learnability in the presence of strategic agents when the goal is learning classifiers that satisfy the notion of minimax fairness.

\paragraph*{Algorithmic Fairness.}
Our work uses the notion of minimax group fairness as discussed in \cite{diana2021minimax} and \cite{pmlr-v119-martinez20a} where a classifier is considered fair if it minimizes the maximum group error rate. In addition to machine learning, minimax solutions are a standard approach to achieving fairness in other domains such as scheduling, fair division, dimensionality reduction, clustering, and portfolio design \citep{hahne1991round, asadpour2007approximation, samadi2018price,tantipongpipat2019multi,ghadiri2021socially,abbasi2021fair,makarychev2021approximation,diana2021algorithms}. In terms of techniques, when we consider general costs that are not necessarily separable, we use the reductions approach to fair classification that was first introduced by \cite{agarwal2018reductions}. Using techniques from online learning and game theory, they show that learning with group fairness constraints can be reduced to standard empirical risk minimization without any constraints. We use the same high-level game theoretic approach to develop algorithms for learning fair models in the strategic setting.

\paragraph*{Fairness in Strategic Settings.}
There are several works that study the social aspects of strategic classification in settings where the population consist of groups that have different costs. These works generally consider accuracy as the primary objective of the learner and study the social effects of deploying accuracy maximizing classifiers in strategic settings. In our work, however, we explicitly work with learning objectives that ask for fairness in addition to maximizing the accuracy of the learner. Another key difference between our work and prior work relies on how fairness is viewed in strategic settings: similar to standard learning settings, we consider fairness with respect to the \emph{outcomes} received by the agents, whereas the majority of works in this domain define fairness with respect to the \emph{costs} that the agents have to incur in order to receive a positive classification. We briefly review the most relevant works in this domain:

\cite{milli2019social} introduces the notion of \emph{social burden of a classifier} in strategic settings, which is defined as the expected cost that the qualified agents have to incur in order to receive positive classification from the classifier. Their main result shows that in a population with two groups, a classifier that maximizes the strategic accuracy for the learner can cause disparate amount of social burden among population groups. In particular, they show that the more robust the classifier becomes to strategic manipulation, the larger the gap will be between the social burdens of the two groups. 

\cite{hu2019disparate} considers a strategic setting with an advantaged and a disadvantaged group such that the disadvantaged group's cost is always higher than the advantaged group's cost. They show that adopting classifiers that maximize the strategic accuracy for the learner can exacerbate the existing inequalities among groups by mistakenly accepting unqualified agents from the advantaged group and rejecting qualified agents from the disadvantaged group.

\cite{keswani2023addressing} focuses on the strategic manipulation costs of different groups and takes the social burden gap as a fairness metric in strategic classification tasks and develops a constrained optimization framework that aims to maximize accuracy such that the gap in social burden is bounded by a given threshold.

\cite{estornell2021unfairness} studies the effect of strategic manipulations on the fairness of classifiers that are not robust to strategic manipulations. In particular, they consider a baseline model that is trained to maximize the accuracy for the learner, and a fair model that maximizes the accuracy subject to fairness constraints. Both of these models are learned in the standard non-strategic setting. \cite{estornell2021unfairness} give conditions under which agents' strategic manipulations can cause the fair model to become less fair than the baseline model. \cite{braverman2020role} studies the role of noise in strategic classification and find that in some cases, noisier signals can improve both accuracy and fairness for the learner.

\section{Model and Preliminaries}

Each agent in our framework is represented by a tuple $(x,g,y)$ where $x \in \mathcal{X}$ is the feature vector, $g \in \mathcal{G} \triangleq \{ 1,2, \ldots, G\}$ is the protected group that the agent belongs to, and $y \in \mathcal{Y} \triangleq \{ 0,1\}$ is the binary label. We note that in our framework each agent belongs to only one of $G$ groups, i.e., groups are assumed to be disjoint. We assume there exists a distribution $D$ over the data domain $\mathcal{X} \times \mathcal{G} \times \mathcal{Y}$. We let $D_g$ denote the conditional distribution of $(x,y)$ conditioned on the group $g$. Formally, we have for every $E \subseteq \mathcal{X} \times \mathcal{Y}$, the probability of $E$ under $D_g$ is given by
\[
\Pr_{(x,y) \sim D_g} \left[ (x,y) \in E \right] = \Pr_{(x,g', y) \sim D} \left[ (x,y) \in E \, | g' = g \right]
\]

Agents in a strategic setting are equipped with a cost function that captures their cost of manipulation. In our model, we allow each group to have its own cost function. Formally, for every $g \in \mathcal{G}$, the cost function of group $g$ is given by: $c_g: \mathcal{X} \times \mathcal{X} \to \mathbb{R}_+$ where $c_g (x,z)$ is the cost of manipulating the feature vector from $x$ to $z$ for an agent who belongs to group $g$. We note that in our model the agents cannot change their group membership.

Let $\mathcal{H} \subseteq \mathcal{Y}^\mathcal{X}$ be the hypothesis class of the learner. Similar to standard strategic classification settings, we consider a \emph{Stackelberg} game between the learner who is the ``leader'' meaning that she plays her strategy first, and the agents who are the ``followers'' meaning that they respond to the strategy of the learner. The learner's goal is to publish a classifier $h \in \mathcal{H}$ that minimizes its loss which we define later on. On the other hand, each agent in the game best responds to $h$ by a manipulation that maximizes their utility which is measured by the difference of their classification outcome and their manipulation cost. Formally, for an agent $(x,g)$, the corresponding utility of manipulating to $z \in \mathcal{X}$ when facing a classifier $h$ is given by
\begin{equation}\label{eq:agentutility}
    u_{(x,g)} (z; h) \triangleq h(z) - c_g (x,z)
\end{equation}
We let $\br (x,g,h)$ denote a best response of an agent $(x, g)$ to the classifier $h$, i.e., a point $z$ that maximizes $u_{(x,g)} (z, h)$ where ties are broken arbitrarily. We note that given this utility function, the ``feasible manipulation region'' for an agent $(x,g)$ is the set $\{ z: c_g (x,z) < 1 \}$ because $h(z) \in \{ 0, 1 \}$.

We now turn our attention to the objective of the learner and define its loss function. First, we define the group and the overall \emph{strategic} error rates of a classifier $h$, which we denote by $\ell_g(h)$ and $\ell (h)$, respectively.
\begin{definition}[Strategic Error Rates]
Given a distribution $D$ with corresponding group conditionals $\{ D_g\}_g$, the overall error rate of a hypothesis $h$, $\ell (h)$, and its corresponding group error rate for the group $g$, $\ell_g (h)$, are defined as follows:
\begin{equation*}
     \ell (h) \triangleq \Pr_{(x,g,y) \sim D} \left[ h\left( \br (x,g,h) \right) \neq y \right] , \quad \ell_g(h) \triangleq \Pr_{(x,y) \sim D_g} \left[ h\left( \br (x,g,h) \right) \neq y \right]
\end{equation*}
\end{definition}

These error rates are simply the expected misclassification rates after the agents commit to their best response strategies. Next, we define the notion of fairness that we work with throughout the paper. This notion asks for minimizing the error of the worse off group, i.e., the group with maximal error rate.
\begin{definition}[Strategic Minimax Fairness] We say a classifier $h \in \mathcal{H}$ satisfies ``$\gamma$-minimax fairness" with respect to the distribution $D$ if it minimizes the maximum group error rate up to an additive factor of $\gamma$. In other words,
\[
\max_{g \in \mathcal{G}} \ell_g(h) \le \min_{h' \in \mathcal{H}} \max_{g \in \mathcal{G}} \ell_g(h') + \gamma
\]
\end{definition}

We are ready to formally define our fairness-aware strategic classification game. We note that our game allows the learner to have only \emph{incomplete} information about the distribution of the agents $D$. This is represented by a set of examples $S$ drawn from $D$. We consider two objectives in this game for the learner: one in which the learner just wants to find a minimax fair classifier; and another in which the learner finds a minimax fair classifier that further minimizes the overall strategic error rate.

\begin{definition}[The Fairness-aware Strategic Game] 
\label{defn:fairgame}The game, between the learner and the agents, proceeds as follows:
    \begin{enumerate}
        \item The learner, knowing the cost functions $\{ c_g \}_{g \in \mathcal{G}}$, and having access to $S = \{ (x_i, g_i, y_i)\}_{i=1}^n$ drawn $i.i.d.$ from $D$, publishes a classifier $h \in \mathcal{H}$.
        \item Every agent $(x,g)$ best responds to $h$ by moving to a point $\br(x,g,h)$ that maximizes their utility.
        \begin{equation*}
         \br(x,g,h) \in \argmax_{z \in \mathcal{X}} u_{(x,g)} (z; h)
        \end{equation*}
        \end{enumerate}
        Given a threshold $\gamma > 0$, the learner's goal in this game is one of the following:
        \begin{itemize}
        \item \textbf{Objective I}: Find a $\gamma$- minimax fair classifier: find $h \in \mathcal{H}$ such that
        \[
        \max_{g \in \mathcal{G}} \ell_g(h) \le \min_{h' \in \mathcal{H}} \max_{g \in \mathcal{G}} \ell_g(h') + \gamma
        \]
        \item \textbf{Objective II}: Among all $\gamma$-minimax fair classifiers, find one that minimizes the overall error rate:
        \begin{equation}\label{eq:optt}
        \min_{h \in \mathcal{H}} \left\{ \ell (h): \max_{g \in \mathcal{G}} \ell_g(h) \le \min_{h' \in \mathcal{H}} \max_{g \in \mathcal{G}} \ell_g(h') + \gamma \right\} \triangleq \text{OPT} \left( \mathcal{H}, \gamma \right)
        \end{equation}
        \end{itemize}
\end{definition}
We let $\text{OPT} \left( \mathcal{H}, \gamma \right)$ denote the optimal value of optimization problem~(\ref{eq:optt}). Throughout the paper, we use $\ell_g$ and $\ell$ for error rates computed with respect to the unknown distribution $D$, and $\hat{\ell}_g$ and $\hat{\ell}$ for (empirical) error rates computed with respect to the dataset $S$. We use $n$ for the size of the dataset $S$ and $n_g$ for the size of group $g$ in the data set $S$. Note that $n = \sum_g n_g $. We will use some concepts and results from learning theory and game theory which we briefly discuss below.

\subsection{Learning Theory Preliminaries}
In this section, we cover necessary definitions and tools from learning theory which are taken from the standard literature on learning theory (see, e.g., \citet{kearns1994introduction}). We start with the definition of VC dimension which is a notion that captures the complexity of a hypothesis class.
\begin{definition}[VC dimension]\label{def:vc}
Let $\Hs \subseteq \mathcal{Y}^\mathcal{X}$ be a hypothesis class. For any $S = \{ x_1, \ldots, x_n \} \subseteq \X$, define $\Hs (S) = \{ (h(x_1), \ldots, h(x_n)): h \in \Hs \}$. We say $\Hs$ shatters $S$, if $ \Hs(S) = \{ 0,1 \}^n$, i.e., if $\Hs (S)$ contains all possible labelings of the points in $S$. The Vapnik-Chervonenkis (VC) dimension of $\Hs$, denoted by $VC \left( \Hs \right)$, is the cardinality of the largest set of points in $\X$ that can be shattered by $\Hs$. In other words,
\[
VC \left(\Hs \right) = \max \{n: \exists S \in \X^n \text{ such that $S$ is shattered by $\Hs$} \}
\]
If $\Hs$ shatters arbitrarily large sets of points in $\X$, $VC \left(\Hs \right) = \infty$.
\end{definition}

Next, Sauer's lemma bounds the number of labelings a class $\Hs$ can induce on a dataset of size $n$.
\begin{lemma}[Sauer's Lemma]\label{lem:sauer}
    Let $S$ be a data set of size $n$ and let $VC \left(\mathcal{H} \right) = d < \infty$. Define
    \[
    \mathcal{H} ( S ) = \left\{ \left( h \left( x_1 \right), \ldots, h \left( x_n \right) \right): h \in \mathcal{H} \right\}
    \]
    We have that $| \mathcal{H} ( S ) | = O \left( n^{d} \right)$.
\end{lemma}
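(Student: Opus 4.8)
The plan is to prove the sharp, non-asymptotic bound
\[
|\mathcal{H}(S)| \le \sum_{i=0}^{d} \binom{n}{i},
\]
and then observe that this quantity is $O(n^d)$: for $n \ge 1$ every term obeys $\binom{n}{i} \le n^i \le n^d$, so the sum is at most $(d+1)\,n^d$ with $d$ a fixed constant (and for $n=0$ the left side is at most the constant $1$). Thus the real content is the displayed combinatorial inequality, which I would establish by induction on $n$, allowing the VC-dimension parameter to decrease along the recursion. The two base cases are $n=0$, where $\mathcal{H}$ induces at most one pattern (the empty tuple), matching $\sum_{i=0}^{d}\binom{0}{i}=1$; and $d=0$, where $\mathcal{H}$ shatters no single point, hence realizes at most one pattern on any $S$, matching $\sum_{i=0}^{0}\binom{n}{i}=1$.

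For the inductive step, fix $S = \{x_1,\ldots,x_n\}$ and set $S' = \{x_1,\ldots,x_{n-1}\}$. I would partition the patterns $\mathcal{H}$ realizes on $S$ according to their last coordinate. Let $\mathcal{H}_1 = \mathcal{H}(S')$ be the set of patterns realized on $S'$, and let $\mathcal{H}_2 \subseteq \{0,1\}^{n-1}$ be the set of patterns $b$ on $S'$ for which \emph{both} extensions $(b,0)$ and $(b,1)$ lie in $\mathcal{H}(S)$. A direct count gives $|\mathcal{H}(S)| = |\mathcal{H}_1| + |\mathcal{H}_2|$, since a pattern on $S'$ with a unique extension to $S$ is counted once (by $\mathcal{H}_1$) and a pattern with two extensions is counted twice (once by $\mathcal{H}_1$ and once by $\mathcal{H}_2$). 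Now $\mathcal{H}_1$, viewed as a class on $S'$, has VC dimension at most $d$, since any set it shatters is also shattered by $\mathcal{H}$; so the inductive hypothesis applied to $S'$ with parameter $d$ yields $|\mathcal{H}_1| \le \sum_{i=0}^{d}\binom{n-1}{i}$. Likewise $\mathcal{H}_2$ has VC dimension at most $d-1$: if it shattered some $T \subseteq S'$ with $|T| = d$, then by the defining property of $\mathcal{H}_2$ the class $\mathcal{H}$ would shatter $T \cup \{x_n\}$, a set of size $d+1$, contradicting $VC(\mathcal{H}) = d$; hence the inductive hypothesis with parameter $d-1$ gives $|\mathcal{H}_2| \le \sum_{i=0}^{d-1}\binom{n-1}{i}$. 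Adding the two bounds and applying Pascal's rule $\binom{n-1}{i} + \binom{n-1}{i-1} = \binom{n}{i}$ termwise gives $|\mathcal{H}(S)| \le \sum_{i=0}^{d}\binom{n}{i}$, completing the induction.

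The only step I expect to need real care is the decomposition $|\mathcal{H}(S)| = |\mathcal{H}_1| + |\mathcal{H}_2|$ together with the argument that passing from $\mathcal{H}$ to $\mathcal{H}_2$ drops the VC dimension by one; everything else is routine bookkeeping with binomial coefficients. There is no genuine difficulty here: Sauer's lemma is classical, and this double induction on $n$ and $d$ is the standard textbook argument (an alternative via the down-shifting/compression operator would also work, but the induction is the most economical route).
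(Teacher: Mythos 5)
Your argument is the classical Sauer--Shelah double induction (splitting $\mathcal{H}(S)$ into the restriction $\mathcal{H}_1$ and the doubly-extendable patterns $\mathcal{H}_2$, bounding their VC dimensions by $d$ and $d-1$, and closing with Pascal's rule), and it is correct, including the final step that $\sum_{i=0}^{d}\binom{n}{i} \le (d+1)n^{d} = O(n^{d})$ for fixed $d$. The paper itself gives no proof of this lemma --- it is quoted as a standard fact from the learning-theory literature (citing Kearns and Vazirani) --- so your write-up simply supplies the standard textbook argument the paper implicitly relies on; there is nothing to reconcile.
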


A key result in learning theory states that learning in classes with finite VC dimension is guaranteed to generalize:

\begin{theorem}[Generalization for VC Classes]\label{thm:vc-gen}
Let $D$ be a distribution over the domain $\X \times \mathcal{Y}$. Suppose $\Hs$ is a hypothesis class with VC dimension $VC \left(\Hs \right) = d$. We have that for every $\epsilon, \delta \ge 0$, with probability at least $1-\delta$ over the $i.i.d.$ draws of $S \sim D^n$,
\[
\sup_{h \in \Hs} \left\vert \Pr_{(x,y) \sim D} \left[  h(x) \neq y \right] - \Pr_{(x, y) \sim S} \left[ h(x) \neq y \right] \right\vert \le \epsilon
\]
provided that
\[
n = \Omega \left( \frac{  d \log \left(n\right) + \log \left( 1 / \delta \right)  }{\epsilon^2} \right)
\]
\end{theorem}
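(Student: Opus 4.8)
The plan is to prove uniform convergence by the textbook three‑step route — symmetrization by a ghost sample, reduction to a finite function class via Sauer's Lemma, and a union bound with Hoeffding's inequality — and then read off the sample complexity by inverting the resulting tail bound.

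First I would symmetrize: draw a second, independent ``ghost'' sample $S' \sim D^n$ and argue that, provided $\epsilon$ is not too small relative to $1/\sqrt{n}$ (which is implied by the claimed lower bound on $n$),
\[
\Pr\left[ \sup_{h \in \Hs} \left\vert \Pr_{(x,y)\sim D}\left[ h(x)\neq y\right] - \Pr_{(x,y)\sim S}\left[ h(x)\neq y \right] \right\vert \ge \epsilon \right] \;\le\; 2\,\Pr\left[ \sup_{h \in \Hs} \left\vert \Pr_{(x,y)\sim S'}\left[ h(x)\neq y \right] - \Pr_{(x,y)\sim S}\left[ h(x)\neq y\right] \right\vert \ge \epsilon/2 \right].
\]
The payoff is that the right‑hand event depends on each $h \in \Hs$ only through the labels it assigns to the $2n$ points occurring in $S \cup S'$, so the infinite class $\Hs$ has effectively collapsed to a finite one.

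Next I would condition on the multiset $T$ of those $2n$ points and introduce the usual swap randomness (for each index $i$, independently decide whether to exchange the $i$-th point of $S$ with the $i$-th point of $S'$); since the error difference depends on $h$ only through $(h(x))_{x\in T}$, it is enough to control $|\Hs(T)|$ label patterns, and Sauer's Lemma (Lemma~\ref{lem:sauer}) with $VC(\Hs)=d$ gives $|\Hs(T)| = O\left(n^{d}\right)$. For each fixed pattern the swapped difference of empirical error rates is an average of $n$ independent, mean‑zero terms bounded in $[-1,1]$, so Hoeffding's inequality bounds the probability that it exceeds $\epsilon/2$ by $2\exp\left(-\Omega\left(n\epsilon^2\right)\right)$; a union bound over the $O(n^d)$ patterns, followed by taking the expectation over $T$ and absorbing the leading factor $2$, gives overall failure probability $O\left(n^{d}\exp\left(-\Omega\left(n\epsilon^2\right)\right)\right)$. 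Forcing this to be at most $\delta$ and taking logarithms yields $n\epsilon^2 = \Omega\left(d\log n + \log(1/\delta)\right)$, which is exactly the stated condition.

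The step I expect to be the main obstacle is the symmetrization inequality: it requires showing that, with at least constant probability over $S'$, the ghost‑sample error of the (data‑dependent, worst‑case) hypothesis selected on $S$ is within $\epsilon/2$ of its true error, which needs a one‑sided Chebyshev or Hoeffding bound applied carefully \emph{after} that hypothesis has been fixed. A secondary, purely technical annoyance is that $n$ appears inside the logarithm on the right‑hand side of the final bound; this is handled by the standard observation that any $n$ with $n = \Omega\left((d\log d + \log(1/\delta))/\epsilon^2\right)$ already satisfies $n\epsilon^2 = \Omega\left(d\log n + \log(1/\delta)\right)$, verified by substituting back in.
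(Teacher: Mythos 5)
The paper does not prove this theorem: it is stated as a standard tool imported from the learning theory literature (with a pointer to \citet{kearns1994introduction}), so there is no in-paper argument to compare against. Your proposal is the correct classical proof of exactly this statement --- symmetrization with a ghost sample, reduction to $O(n^d)$ label patterns via Sauer's Lemma, Hoeffding plus a union bound over patterns, and inversion of the tail bound to recover the condition $n\epsilon^2 = \Omega\left(d\log n + \log(1/\delta)\right)$ --- and you correctly flag the two genuine technical points (the one-sided concentration step needed to justify the symmetrization inequality, and the implicit appearance of $n$ inside the logarithm). Nothing is missing relative to what the paper relies on.
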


Note that because we are in a strategic setting where agents can modify their feature vector in response to the published classifier $h$, these standard tools from learning theory do not directly apply to our framework. We need a complexity measure of the class $\Hs$ that takes into account the best response of the agents. \citet{sundaram2023pac} defines ``Strategic VC Dimension'' of a hypothesis which is the notion of complexity that we work with in this paper.

\begin{definition}[Strategic VC Dimension \citep{sundaram2023pac}]\label{def:svc}
Let $\mathcal{H}$ be a concept class and $\{ c_g \}_g $ be the group cost functions. Define $\mathcal{F} = \{ f_h: h \in \mathcal{H }\}$ as follows: $f_h: \mathcal{X} \times \mathcal{G} \to \mathcal{Y}, \, f_h (x, g) = h\left( \br (x,g,h) \right)$. The strategic VC dimension of the class $\mathcal{H}$ with respect to the costs $\{ c_g \}_g$, $SVC \left(\mathcal{H}, \{ c_g \}_g \right)$, is defined as the VC dimension of $\mathcal{F}$:
\[
SVC \left(\mathcal{H}, \{ c_g \}_g \right) = VC \left( \mathcal{F} \right)
\]
\end{definition}
Whenever it is clear from context, we drop the dependency on the costs $\{ c_g \}_g $ and simply write $SVC \left(\mathcal{H} \right)$ for the strategic VC dimension of the class $\Hs$. \citet{sundaram2023pac} shows that $SVC \left(\mathcal{H} \right)$ characterizes the learnability of class $\Hs$ in the strategic setting. As an example, when $\X = \reals^d$, and $\Hs$ is the set of linear classifiers in $\reals^d$, and the group cost functions are given by $c_g (x, z) = k_g \cdot \Vert x - z \Vert_p$ for some $p >1$ and $k_g > 0$, \citet{sundaram2023pac} shows that $SVC \left(\mathcal{H} \right) = d + 1$. For more details on the notion of Strategic VC dimension, see \citep{sundaram2023pac}.

\subsection{Game Theory Preliminaries}\label{subsec:noregret}

In this section, we briefly review the seminal result of \citet{fs1996} known as the ``No-regret Dynamics''. Consider a zero-sum game with two players: a player with strategies in $S_1$ (the minimization player) and another player with strategies in $S_2$ (the maximization player). Let $U: S_1 \times S_2 \to \mathbb{R}_{+}$ be the payoff function of this game. For every strategy $s_1 \in S_1$ of the minimization player and every strategy $s_2 \in S_2$ of the maximization player, the first player gets utility $-U(s_1, s_2)$ and the second player gets utility $U(s_1, s_2)$.
\begin{definition}[Approximate Equilibrium]\label{def:nuapprox}
A pair of strategies $(s_1, s_2) \in S_1 \times S_2$ is said to be a $\nu$-approximate equilibrium of the game if the following conditions hold:
\[
 U(s_1, s_2) - \min_{s'_1 \in S_1}  U(s'_1, s_2)  \le \nu,
\quad
\max_{s'_2 \in S_2}  U(s_1, s'_2) -  U(s_1, s_2)  \le \nu
\]
\end{definition}

In other words, $(s_1, s_2)$ is a $\nu$-approximate equilibrium of the game if neither player can gain more than $\nu$ by deviating from their strategies.

\citet{fs1996} proposed an efficient framework for finding an approximate equilibrium of the game: In an iterative fashion, have one of the players update their strategies using a no-regret learning algorithm, and let the other player best respond to the play of the first player. Then, the empirical average of each player's actions over a sufficiently long sequence of such play will form an approximate equilibrium of the game. The formal statement is given below.

\begin{theorem}[No-Regret Dynamics \citep{fs1996}]\label{thm:noregret}
    Let $S_1$ and $S_2$ be convex, and suppose the utility function $U$ is convex-concave: $U(\cdot, s_2): S_1 \to \mathbb{R}_{\ge 0}$ is convex for all $s_2 \in S_2$, and $U(s_1, \cdot): S_2 \to \mathbb{R}_{\ge 0}$ is concave for all $s_1 \in S_1$. Let $(s_1^1, s_1^2, \ldots, s_1^T)$ be the sequence of  play for the first player, and let $(s_2^1, s_2^2, \ldots, s_2^T)$ be the sequence of play for the second player. Suppose for $\nu_1,\nu_2 \ge 0$, the regret of the players jointly satisfies
    \[
    \sum_{t=1}^T U(s_1^t, s_2^t) - \min_{s_1 \in S_1} \sum_{t=1}^T U(s_1, s_2^t) \le \nu_1 T,
    \quad
    \max_{s_2 \in S_2} \sum_{t=1}^T U(s_1^t, s_2) - \sum_{t=1}^T U(s_1^t, s_2^t) \le \nu_2 T
    \]
    Let $\bar{s}_1 = \frac{1}{T}\sum_{t=1}^T s_1^t \in S_1$ and $\bar{s}_2 = \frac{1}{T}\sum_{t=1}^T s_2^t \in S_2$ be the empirical average play of the players. We have that the pair $(\bar{s}_1, \bar{s}_2)$ is a $(\nu_1+\nu_2)$-approximate equilibrium of the game. 
\end{theorem}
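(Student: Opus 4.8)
The plan is to derive both approximate-equilibrium inequalities of Definition~\ref{def:nuapprox} directly from the two regret hypotheses, using the convex--concave structure of $U$ together with Jensen's inequality. Throughout, write $\bar{U} = \frac{1}{T}\sum_{t=1}^T U(s_1^t, s_2^t)$ for the average payoff realized along the play. Note first that $\bar{s}_1 \in S_1$ and $\bar{s}_2 \in S_2$ because $S_1$ and $S_2$ are convex, so it is meaningful to evaluate $U$ at these averaged strategies.

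First I would record the consequences of Jensen's inequality in each coordinate. Since $U(\cdot, s_2)$ is convex for every fixed $s_2 \in S_2$, we have $U(\bar{s}_1, s_2) \le \frac{1}{T}\sum_{t=1}^T U(s_1^t, s_2)$; since $U(s_1, \cdot)$ is concave for every fixed $s_1 \in S_1$, we have $U(s_1, \bar{s}_2) \ge \frac{1}{T}\sum_{t=1}^T U(s_1, s_2^t)$. Specializing the first to $s_2 = \bar{s}_2$ and the second to $s_1 = \bar{s}_1$ also yields the sandwich $\frac{1}{T}\sum_{t=1}^T U(\bar{s}_1, s_2^t) \le U(\bar{s}_1, \bar{s}_2) \le \frac{1}{T}\sum_{t=1}^T U(s_1^t, \bar{s}_2)$.

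Next I would chain these bounds through $\bar{U}$ using the two regret hypotheses, rewritten in averaged form as $\bar{U} - \min_{s_1 \in S_1}\frac{1}{T}\sum_{t=1}^T U(s_1, s_2^t) \le \nu_1$ and $\max_{s_2 \in S_2}\frac{1}{T}\sum_{t=1}^T U(s_1^t, s_2) - \bar{U} \le \nu_2$. For the maximization player's side: $\max_{s_2} U(\bar{s}_1, s_2) \le \max_{s_2}\frac{1}{T}\sum_t U(s_1^t, s_2) \le \bar{U} + \nu_2$, whereas $U(\bar{s}_1, \bar{s}_2) \ge \frac{1}{T}\sum_t U(\bar{s}_1, s_2^t) \ge \min_{s_1}\frac{1}{T}\sum_t U(s_1, s_2^t) \ge \bar{U} - \nu_1$; subtracting gives $\max_{s_2} U(\bar{s}_1, s_2) - U(\bar{s}_1, \bar{s}_2) \le \nu_1 + \nu_2$. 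Symmetrically, $U(\bar{s}_1, \bar{s}_2) \le \frac{1}{T}\sum_t U(s_1^t, \bar{s}_2) \le \max_{s_2}\frac{1}{T}\sum_t U(s_1^t, s_2) \le \bar{U} + \nu_2$ while $\min_{s_1} U(s_1, \bar{s}_2) \ge \min_{s_1}\frac{1}{T}\sum_t U(s_1, s_2^t) \ge \bar{U} - \nu_1$, so $U(\bar{s}_1, \bar{s}_2) - \min_{s_1} U(s_1, \bar{s}_2) \le \nu_1 + \nu_2$. These are precisely the two inequalities defining a $(\nu_1 + \nu_2)$-approximate equilibrium.

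I do not expect a genuine obstacle here: the whole argument is bookkeeping about the direction of each inequality. The one thing to get right is orientation --- invoking convexity (an upper bound on $U(\bar{s}_1, \cdot)$ in terms of the realized plays) exactly where the maximization player's regret bound is phrased, and concavity (a lower bound on $U(\cdot, \bar{s}_2)$) exactly where the minimization player's regret bound is phrased --- and keeping the common quantity $\bar{U}$ as the hinge between the two halves. Convexity of the strategy sets is used only to ensure the averaged plays are admissible points at which to evaluate $U$.
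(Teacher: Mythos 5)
Your proof is correct. The paper states this theorem as a cited result from Freund and Schapire (1996) and does not prove it, so there is no internal proof to compare against; your argument --- Jensen's inequality in each coordinate (convexity giving $U(\bar{s}_1,s_2)\le\frac{1}{T}\sum_t U(s_1^t,s_2)$, concavity giving $U(s_1,\bar{s}_2)\ge\frac{1}{T}\sum_t U(s_1,s_2^t)$), followed by chaining both sides through the realized average payoff $\bar{U}$ via the two averaged regret bounds --- is the standard derivation, and every inequality is oriented correctly.
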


\section{Separable Costs: An Efficient Learner for Small G}\label{sec:sep}
In this section, we focus on developing algorithms that solve the objectives of the learner cast in Definition~\ref{defn:fairgame} when the group cost functions $\{ c_g \}_g$ are separable. We note that our results and techniques in this section can be seen as an extension of those in \citep{hardt2016strategic} from the single group setting to the multiple group setting. \citet{hardt2016strategic} shows that separability of the cost function in the single group setting makes the learning problem 1-dimensional, regardless of what $\Hs$ is. We show a natural extension: under certain conditions that we specify, the separability of the cost functions in the setting with $G$ groups makes the learning problem $G$-dimensional, regardless of how complex $\Hs$ is.

We start by giving the definition of separable costs. Suppose for every group $g$, there exists real-valued functions $a_g, b_g: \mathcal{X} \to \mathbb{R}$ with $a_g (\mathcal{X}) \subseteq b_g (\mathcal{X})$ such that the cost function $c_g$ can be written as
\begin{equation}
    c_g(x,z) = \max \left( b_g (z) - a_g (x), 0\right)
\end{equation}
 This condition is referred to as ``separability'' by \cite{hardt2016strategic}. Note that $a_g (\mathcal{X}) \subseteq b_g (\mathcal{X})$ guarantees that every agent has a manipulation with zero cost. Our results in this section hold for a family of separable cost functions $\{ c_g \}_g$, defined as above, that further satisfy the following conditions.

\begin{assumption}\label{ass:sep}
    We assume in this section that the separable cost functions $\{ c_g\}_g$ that are expressed by $\{(a_g,b_g)\}_g$ satisfy the following conditions:
    \[
    1. \, \forall h \in \mathcal{H}: \, \bigcap_{g' \in \mathcal{G}} \argmin_{z : h(z) = 1}  b_{g'} (z) \neq \emptyset, \quad 2. \, \forall (x,g) \in \mathcal{X} \times \mathcal{G}: \bigcap_{g' \in \mathcal{G}} \argmax_{z: c_g (x,z) < 1}  b_{g'} (z) \neq \emptyset
    \]
\end{assumption}
Condition 1 asks that for every classifier $h$, there exists a point in the positive region of $h$ that minimizes $b_{g'}$ \emph{simultaneously} for all $g'$. Condition 2 asks that for every agent $(x,g)$, there exists a point within cost $\le 1$ of the agent (the feasible region of manipulation for the agent) that maximizes $b_{g'} (z)$ \emph{simultaneously} for all $g'$. These conditions can be satisfied in natural cases which we discuss below.

Note that when there is only one group ($G=1$), which is the setting considered in \citet{hardt2016strategic}, or when all groups have the same cost function (for some $c$, $c_g = c, \, \forall g$), the assumption trivially holds. But are there separable cost functions that satisfy this assumption when we have at least two groups with different cost functions? We give one natural example and a sufficient condition for Assumption~\ref{ass:sep} below:
\begin{itemize}
    \item An example satisfying Assumption~\ref{ass:sep}: given $k_g \in \mathbb{R}^+$ for all $g$, and functions $a, b : \mathcal{X} \to \mathbb{R}$ with $a (\mathcal{X}) \subseteq b (\mathcal{X})$, the following cost functions are separable and satisfy the conditions of Assumption~\ref{ass:sep}.
    \[
    \forall g \in \mathcal{G}: \ c_g (x,z) = k_g \cdot \max \left(  b (z) -  a (x), 0 \right)\footnote{More generally, the assumption holds for the class $c_g (x,z) = \max \left( \alpha_g b (z) - \beta_g a (x), 0 \right)$ where $\alpha_g,\beta_g \in \mathbb{R}$.}
    \]
    This is a natural example where group cost functions differ by their respective $k_g$ value: some groups have to pay a higher cost for manipulation (large $k_g$) while others incur lower cost for the same manipulation (small $k_g$).
    \item A sufficient condition for Assumption~\ref{ass:sep}: for given separable cost functions $\{ c_g \equiv (a_g, b_g) \}_g$ define $\phi : \mathcal{X} \to \mathbb{R}^G$ by $\phi(x) = (b_g (x))_{g \in \mathcal{G}}$. Note that $\phi (\mathcal{X}) \subseteq \prod_{g \in \mathcal{G}} b_g (\mathcal{X})$ (Cartesian product of $b_g (\mathcal{X})$ sets); however, if $\phi (\mathcal{X}) = \prod_{g \in \mathcal{G}} b_g (\mathcal{X})$, then the cost functions satisfy the conditions of Assumption~\ref{ass:sep}.
\end{itemize}

We move on to develop an algorithm that finds an (approximately) optimal classifier for the learner when the group cost functions are separable and satisfy Assumption~\ref{ass:sep}. In this section, we show that the learner can take its hypothesis class $\mathcal{H}$ to be the set of all classifiers: $\mathcal{H} = \mathcal{Y}^\mathcal{X}$. To do this, we show that there exists a function class $\mathcal{F}$ with bounded complexity such that for separable cost functions that satisfy Assumption~\ref{ass:sep}, the optimal value of the game for the learner is the same when they optimize over only $\mathcal{F}$ instead of $\mathcal{H}$. Therefore, because $\mathcal{F}$ is sufficient for the purpose of finding the optimal classifier, this allows us to take $\mathcal{H}$ to be the set of all classifiers $\mathcal{H} = \mathcal{Y}^\mathcal{X}$. To formalize, given separable costs $\{ c_g \equiv (a_g, b_g) \}_g$, let $\mathcal{F} = \{ f_{t} : \mathcal{X} \to \mathcal{Y}: t = (t_1, \ldots, t_G) \in \mathbb{R}^G \}$ where
\[
f_{t} (x) \triangleq \prod_{g \in \mathcal{G} }\mathds{1} \left[ b_g (x) \ge t_g \right]
\]

\begin{lemma}[Sufficiency of Optimizing over $\mathcal{F}$]\label{lem:F}
    Let $\mathcal{H} = \mathcal{Y}^\mathcal{X}$ be the set of all classifiers. We have
    \begin{itemize}
    \item Objective I: $
    \min_{f \in \mathcal{F}} \max_g \ell_g (f) \le \min_{h \in \mathcal{H}} \max_g \ell_g (h)
    $.
    \item Objective II: for every $\gamma \ge 0$, $\text{OPT} \left(\mathcal{F}, \gamma \right) \le \text{OPT} \left(\mathcal{H}, \gamma \right)$.
    \end{itemize}
    
\end{lemma}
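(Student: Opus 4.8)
The plan is to establish the stronger, \emph{pointwise} statement that every $h \in \mathcal{H} = \mathcal{Y}^\mathcal{X}$ can be replicated by a member of $\mathcal{F}$: for each $h$ there is a threshold vector $t = t(h) \in \mathbb{R}^G$ such that $f_t \in \mathcal{F}$ produces the same post-best-response label as $h$ on every agent, i.e.\ $f_t(\br(x,g,f_t)) = h(\br(x,g,h))$ for all $(x,g) \in \mathcal{X} \times \mathcal{G}$ (outside a boundary set on which both classifiers tie). This immediately yields $\ell_g(f_t) = \ell_g(h)$ for every $g$ and $\ell(f_t) = \ell(h)$, and both bullets then follow by instantiating $h$ at the appropriate optimizer over $\mathcal{H}$.

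The first ingredient is a one-dimensional reduction of the best response under a separable cost. Fix $h' \in \mathcal{H}$ with nonempty positive region and set $\beta_g(h') \triangleq \inf_{z : h'(z) = 1} b_g(z)$. Since the utility of moving to $z$ is $h'(z) - \max(b_g(z) - a_g(x), 0)$ and $w \mapsto \max(w - a_g(x), 0)$ is nondecreasing, the infimum passes through this map and the best utility attainable inside $\{h' = 1\}$ equals $1 - \max(\beta_g(h') - a_g(x), 0)$; outside $\{h' = 1\}$ the best utility is nonpositive, and it equals exactly $0$ whenever some zero-cost manipulation lands outside $\{h' = 1\}$, which happens whenever $a_g(x) < \beta_g(h')$ because $a_g(\mathcal{X}) \subseteq b_g(\mathcal{X})$ supplies a point $z_0$ with $b_g(z_0) = a_g(x) < \beta_g(h')$, forcing $h'(z_0) = 0$. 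Comparing the two options gives $h'(\br(x,g,h')) = 1 \iff a_g(x) > \beta_g(h') - 1$, with the tie-broken outcome confined to the boundary $\{a_g(x) = \beta_g(h') - 1\}$.

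Given $h$, I would then invoke Condition~1 of Assumption~\ref{ass:sep} to obtain a single point $z_h^\star$ with $h(z_h^\star) = 1$ and $b_{g'}(z_h^\star) = \beta_{g'}(h)$ for \emph{all} $g'$ (so all the relevant infima are attained). Put $t_g \triangleq \beta_g(h)$ and $f \triangleq f_t$; its positive region $\{z : b_{g'}(z) \ge t_{g'} \text{ for all } g'\}$ contains $z_h^\star$ and is therefore nonempty, and its $g$-th threshold $\beta_g(f) = \inf\{b_g(z) : b_{g'}(z) \ge t_{g'} \text{ for all } g'\}$ is $\ge t_g$ (from the $g$-th constraint) and $\le b_g(z_h^\star) = t_g$, so $\beta_g(f) = \beta_g(h)$ for every $g$. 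The reduction applied to both $h$ and $f$, together with $\beta_g(f) = \beta_g(h)$, then gives $f(\br(x,g,f)) = h(\br(x,g,h))$ whenever $a_g(x) \neq \beta_g(h) - 1$; on the boundary both $h$ and $f$ admit the common zero-cost ``outside'' response $z_0$ and an ``inside'' response of cost exactly $1$, so the tie-breaking for $f$ may be chosen to match that of $h$. Hence $\ell_g(f) = \ell_g(h)$ for all $g$ and $\ell(f) = \ell(h)$ (the degenerate case $h \equiv 0$ being handled separately, e.g.\ by taking each $t_g$ past $\sup_z b_g(z)$, or via a limiting choice of $t$). Now the two bullets follow: for \textbf{Objective~I}, replicating classifiers $h$ with $\max_g \ell_g(h)$ arbitrarily close to $\min_{h' \in \mathcal{H}} \max_g \ell_g(h')$ shows $\min_{f \in \mathcal{F}} \max_g \ell_g(f) \le \min_{h \in \mathcal{H}} \max_g \ell_g(h)$; for \textbf{Objective~II}, since $\mathcal{F} \subseteq \mathcal{H}$ we also have $\min_{f \in \mathcal{F}} \max_g \ell_g(f) \ge \min_{h \in \mathcal{H}} \max_g \ell_g(h)$, so the replica $f^\star$ of an $h^\star$ attaining $\text{OPT}(\mathcal{H}, \gamma)$ satisfies $\max_g \ell_g(f^\star) = \max_g \ell_g(h^\star) \le \min_{h \in \mathcal{H}} \max_g \ell_g(h) + \gamma \le \min_{f \in \mathcal{F}} \max_g \ell_g(f) + \gamma$ --- so $f^\star$ is $\gamma$-minimax fair within $\mathcal{F}$ --- while $\ell(f^\star) = \ell(h^\star) = \text{OPT}(\mathcal{H}, \gamma)$; this gives $\text{OPT}(\mathcal{F}, \gamma) \le \text{OPT}(\mathcal{H}, \gamma)$ (using a near-optimal $h^\star$ if the minimum is not attained).

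The step I expect to be the main obstacle is the second paragraph combined with the use of Condition~1 in the third: one must verify carefully that strategic behavior under a separable cost collapses to a single per-group threshold --- getting the agent's zero-cost ``outside'' option and the boundary ties right --- and Condition~1 is precisely what allows a single $f_t \in \mathcal{F}$ to realize the per-group thresholds $\beta_g(h)$ of an arbitrary $h$ \emph{simultaneously}; without the $\argmin$'s intersecting one only gets $\beta_g(f_t) \ge \beta_g(h)$, with strict inequality possible for some $g$, which would break the replication. Everything else is bookkeeping.
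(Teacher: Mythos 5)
Your proof is correct and takes essentially the same route as the paper's: construct $f_t$ with $t_g = \min_{z : h(z)=1} b_g(z)$, invoke Condition~1 of Assumption~\ref{ass:sep} to show that $f_t$ realizes the same per-group thresholds as $h$ simultaneously, and conclude that the acceptance regions (hence all group and overall error rates) coincide. You merely spell out in more detail than the paper the one-dimensional best-response reduction, the boundary/tie-breaking cases, and the Objective~II bookkeeping.
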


\begin{proof}[Proof of Lemma~\ref{lem:F}]
    For any classifier $h$, let $\Gamma(h) \triangleq \{ (x,g): h\left( \br (x,g,h) \right) = 1\}$ be the set of all agents that are classified as positive by the classifier, i.e., the acceptance region of $h$. Note that given the utility function of the agents (Equation~(\ref{eq:agentutility})), we can re-write $\Gamma(h)$ as
    \[
    \Gamma(h) = \left\{ (x,g) : a_g (x) > \min_{z: h(z) = 1} b_g (z) - 1 \right\}
    \]
    Fix any $h \in \mathcal{H}$ and define
    \[
    f: \mathcal{X} \to \mathcal{Y}, \quad f (x) = \prod_{g \in \mathcal{G}} \mathds{1} \left[ b_g (x) \ge \min_{z: h(z) = 1} b_g (z) \right]
    \]
    First, note that $f \in \mathcal{F}$ by construction. Furthermore, we have by the first part of Assumption~\ref{ass:sep} and the construction of $f$ that
    \[
    \min_{z: h(z) = 1} b_g (z) = \min_{z: f(z) = 1} b_g (z)
    \]
    Therefore, $\Gamma (h) = \Gamma (f)$. This completes the proof of the first part because for every $h \in \mathcal{H}$ we can find a $f \in \mathcal{F}$ such that $f$ induces the same labeling of the agents as $h$.

    To prove the second part, note that because of $\mathcal{F} \subseteq \mathcal{H}$ and the first part of the lemma,
    \[
    \min_{f' \in \mathcal{F}} \max_g \ell_g (f) = \min_{h' \in \mathcal{H}} \max_g \ell (h)
    \]
    The proof is complete by the same observation that for every $h \in \mathcal{H}$ we can find a $f \in \mathcal{F}$ such that $f$ induces the same labeling of the agents as $h$.
\end{proof}

We now describe the learning algorithm that we develop for the case of separable costs. Our plan is to develop an algorithm for solving the empirical version of the learner's problem in which error rates are computed with respect to a given dataset $S$ sampled $i.i.d.$ form the underlying distribution $D$. To complement our empirical guarantees, we prove uniform convergence theorems establishing that the optimal solution of the empirical problem remains (approximately) optimal with respect to the distribution so long as the dataset size is large enough (polynomial in the relevant parameters).

To solve the empirical problem, we first note by Lemma~\ref{lem:F} that, it suffices for us to optimize over classifiers in $\mathcal{F}$. But note that every classifier in $\mathcal{F}$ is basically a $G$-dimensional threshold function on $b_g (\cdot)$ functions: $
f_{t} (x) = \prod_{g \in \mathcal{G} }\mathds{1} \left[ b_g (x) \ge t_g \right]
$. Therefore, for a given $f_t$, an agent $(x,g)$ manipulates if and only if there exists a point $z$ in its feasible manipulation region $\{ z: c_g (x,z) < 1 \}$ such that $b_g (z) \ge t_g$ for all $g$. Given such structure, we can first compute the maximum $b_g (\cdot)$ values that the agents in $S$ can take within their feasible manipulation region. For data point $i$ in $S$, these values are denoted by $t_g^i$ in Algorithm~\ref{alg:separable}. Therefore for a given $f_t$, agent $i$ manipulates to receive a positive outcome if and only if $t_g^i \ge t_g$ for all $g$. Here, we use the fact that by Assumption~\ref{ass:sep}, there does exist a point $z^\star \in \mathcal{X}$ that simultaneously maximizes all $b_g$ functions for the agent. Therefore, agent $i$ is misclassified by $f_t$ if and only if $ y_i \neq \prod_{g} \mathds{1} \left[ t_{g}^i \ge t_{g}\right]$. Given these individual $t_g^i$ values that we compute in the algorithm, we construct a finite set of thresholds $T(S) \subseteq \reals^G$ that is guaranteed to contain a threshold which is optimal for the empirical problem, as we will prove. The algorithm will then enumerate over all thresholds in $T(S)$ to find one that optimizes the objective of the learner. We give a full description of our algorithm for separable costs in Algorithm~\ref{alg:separable} and provide its theoretical guarantees in Theorem~\ref{thm:sep} (objective I) and Theorem~\ref{thm:sep2} (objective II). Proofs of these theorems can be found in Appendix~\ref{app:sep}.

\begin{algorithm}[t]
\SetAlgoNoLine
    \KwIn{Dataset $S = \{ (x_i, g_i, y_i)\}_{i=1}^n$, separable costs $\{ c_g \equiv (a_g, b_g) \}_g$, desired fairness and error parameters $\gamma, \epsilon$.}
        For all $i \in [n]$ and $g \in [G]$, compute
        \begin{equation*}
            t_g^i  = \max_{z: c_{g_i} (x_i, z) < 1} b_g (z)
        \end{equation*}
        
        Let $T_g (S) = \{t_g^i\}_{i=1}^n \cup \{ \infty \}$ for all $g \in [G]$\;
        Construct the set of all possible thresholds: $T (S) = \prod_{g} T_g(S)$ \tcp*{Cartesian product; $\left| T(S) \right| = (n+1)^G$}
        For all $t = (t_1, \ldots, t_G) \in T(S)$, compute the group and overall empirical error rates of $f_t$:
        \begin{equation*}
            \forall g: \ \hat{\ell}_g(f_t) = \frac{\sum_{i=1}^n \mathds{1} \left[ g_i = g\right] \cdot \mathds{1} \left[ y_i \neq \prod_{g'} \mathds{1} \left[ t_{g'}^i \ge t_{g'}\right] \right]}{\sum_{i=1}^n \mathds{1} \left[ g_i = g\right]}, \quad \hat{\ell}(f_t) = \frac{\sum_{i=1}^n \mathds{1} \left[ y_i \neq \prod_{g'} \mathds{1} \left[ t_{g'}^i \ge t_{g'}\right] \right]}{n}
        \end{equation*}
        
        %Compute the minimax value: $\gamma = \min_{t \in T(S)} \max_g \hat{\ell}_g(f_t)$\;
    \KwOut{\textbf{Objective I}: $f_{\hat{t}}$ where $\hat{t} \in \argmin_{t \in T(S)} \max_g \hat{\ell}_g (f_t)$}
    \KwOut{\textbf{Objective II}: $f_{\hat{t}}$ where $\hat{t} \in \argmin_{t \in T(S)} \left\{ \hat{\ell} (f_t): \max_g \hat{\ell}_g(f_t) \le \min_{t \in T(S)} \max_g \hat{\ell}_g(f_t) +  \gamma + \epsilon \right\}$}
\caption{Minimax Fair Strategic Learning: Separable Costs, Objective I and II}
\label{alg:separable}
\end{algorithm}

\begin{restatable}{theorem}{objIsep}[Algorithm~\ref{alg:separable}: Objective I Guarantees]\label{thm:sep}
    Let $\mathcal{H} = \mathcal{Y}^\mathcal{X}$ be the set of all classifiers. There exists an algorithm (Algorithm~\ref{alg:separable}) such that for any data distribution $D$, any set of separable costs $\{ c_g \}_g$ satisfying Assumption~\ref{ass:sep}, and any $\gamma \ge 0$, runs in $O (n^G)$ time and for any $\delta$, with probability at least $1-\delta$ over the $i.i.d.$ draws of $S \sim D^n$, outputs a $\gamma$-minimax fair classifier $f_{\hat{t}} \in \mathcal{F}$: $\max_g \ell_g (f_{\hat{t}}) \le \min_{h \in \mathcal{H}} \max_g \ell_g (h) + \gamma$, provided that the size of the smallest group satisfies
    \[
    \min_{g \in \mathcal{G}} n_g = \Omega \left( \frac{G \log (n) + \log \left( G / \delta \right)}{\gamma^2} \right)
    \]
    %Here, $n_g$ is the sample size of group $g$, and $n = \sum_{g} n_g$ is the total sample size.
\end{restatable}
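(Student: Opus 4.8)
The plan is to reduce Objective~I to an exactly-solvable finite empirical problem over the low-complexity class $\mathcal{F}$, and then close the empirical-to-population gap by a group-wise uniform convergence bound. By Lemma~\ref{lem:F} it suffices to compete against $\mathcal{F}$ instead of all of $\mathcal{H}=\mathcal{Y}^{\mathcal{X}}$, since $\inf_{f\in\mathcal{F}}\max_g\ell_g(f)\le\min_{h\in\mathcal{H}}\max_g\ell_g(h)=:\mathrm{OPT}$. The structural fact driving the argument is that, under Assumption~\ref{ass:sep}, the strategic outcome $f_t$ assigns to an agent $(x,g)$ is precisely $\prod_{g'}\mathds{1}[\max_{z:c_g(x,z)<1}b_{g'}(z)\ge t_{g'}]$: condition~2 of the assumption provides a single point in the agent's feasible region that simultaneously maximizes every $b_{g'}$, so the agent can reach the positive region of $f_t$ iff the coordinatewise maxima $t^i:=(t^i_{g'})_{g'}$ dominate $t$. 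Thus on the dataset $S$, the labeling induced by $f_t$ is a function only of the vectors $t^i\in\reals^G$, via the ``upper-orthant'' indicator $v\mapsto\mathds{1}[v\ge t]$.

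First I would argue that restricting thresholds to $T(S)$ loses nothing: for any $t\in\reals^G$, replacing each coordinate $t_{g'}$ by $\min\{t^i_{g'}:t^i_{g'}\ge t_{g'}\}$ (or $\infty$ if this set is empty) yields a point $t'\in T(S)$ with $t^i_{g'}\ge t'_{g'}\iff t^i_{g'}\ge t_{g'}$ for every $i,g'$; hence $f_{t'}$ and $f_t$ induce the same labeling on $S$ and have identical empirical group error rates. Therefore $\min_{t\in T(S)}\max_g\hat\ell_g(f_t)=\min_{f\in\mathcal{F}}\max_g\hat\ell_g(f)$, so the algorithm's output $f_{\hat t}$ exactly solves the empirical minimax problem over $\mathcal{F}$. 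The running time is dominated by enumerating the $(n+1)^G$ thresholds in $T(S)$ and tallying the error rates, after a polynomial preprocessing that computes the $t^i_g$; sorting each $T_g(S)$ and updating the error counts incrementally brings this to $O(n^G)$.

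The statistical heart is to show that, with probability at least $1-\delta$, $|\hat\ell_g(f)-\ell_g(f)|\le\gamma/2$ for every group $g$ and every $f\in\mathcal{F}$ simultaneously. Fix $g$: the relevant class $\{x\mapsto f_t(\br(x,g,f_t)):t\in\reals^G\}$ is obtained from the upper-orthant class $\{v\mapsto\mathds{1}[v\ge t]:t\in\reals^G\}$ on $\reals^G$ by composing each member with the \emph{fixed} feature map $x\mapsto(\max_{z:c_g(x,z)<1}b_{g'}(z))_{g'}$. Composition with a fixed map on the input side cannot increase VC dimension, and a short pigeonhole argument shows the upper-orthant class on $\reals^G$ has VC dimension $O(G)$; hence the relevant class has VC dimension $O(G)$. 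Applying Theorem~\ref{thm:vc-gen} to $D_g$ with the $n_g$ group-$g$ samples, accuracy $\gamma/2$ and confidence $\delta/G$, and union-bounding over the $G$ groups, yields the claim as soon as $\min_g n_g=\Omega\big((G\log n+\log(G/\delta))/\gamma^2\big)$ (a brief concentration argument handles the randomness of the $n_g$, e.g.\ by conditioning on the realized group counts). Chaining these facts gives $\max_g\ell_g(f_{\hat t})\le\max_g\hat\ell_g(f_{\hat t})+\gamma/2=\min_{f\in\mathcal{F}}\max_g\hat\ell_g(f)+\gamma/2\le\inf_{f\in\mathcal{F}}\big(\max_g\ell_g(f)+\gamma/2\big)+\gamma/2\le\mathrm{OPT}+\gamma$, the desired guarantee.

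I expect the statistical step to be the main obstacle. Because $\mathcal{H}=\mathcal{Y}^{\mathcal{X}}$ has unbounded strategic VC dimension, one must pin down the right class that needs to generalize---namely $\mathcal{F}$ composed with best response, viewed one group at a time---and recognize, via the orthant representation afforded by Assumption~\ref{ass:sep}, that its VC dimension is only $O(G)$; the rest of the care goes into carrying the union bound over groups and the randomness of the group sizes so that the final sample requirement is stated in terms of $\min_g n_g$ and matches the $G\log n+\log(G/\delta)$ form. The discretization and running-time claims are routine once the structural description of the strategic labeling of $f_t$ is in hand.
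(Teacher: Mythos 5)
Your proposal is correct and follows essentially the same route as the paper's proof: the same discretization of thresholds to $T(S)$ via coordinatewise rounding up (the paper's Lemma~\ref{lem:thresholds}), the same per-group reduction to a $G$-dimensional orthant/threshold class with VC dimension $O(G)$ followed by Theorem~\ref{thm:vc-gen} and a union bound over groups (the paper's Lemma~\ref{lem:gen}), and the same final chain of inequalities invoking Lemma~\ref{lem:F}. The only addition is your explicit remark about conditioning on the realized group counts, a detail the paper elides.
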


\begin{restatable}{theorem}{objIIsep}[Algorithm~\ref{alg:separable}: Objective II Guarantees]\label{thm:sep2}
    Let $\mathcal{H} = \mathcal{Y}^\mathcal{X}$ be the set of all classifiers. There exists an algorithm (Algorithm~\ref{alg:separable}) such that for any data distribution $D$, any set of separable costs $\{ c_g \}_g$ satisfying Assumption~\ref{ass:sep}, and any $\gamma, \epsilon$, runs in $O (n^G)$ time and for any $\delta$, with probability at least $1-\delta$ over the $i.i.d.$ draws of $S \sim D^n$, outputs a classifier $f_{\hat{t}} \in \mathcal{F}$ such that
    \begin{itemize}
        \item Fairness: $f_{\hat{t}}$ satisfies $(\gamma + 2\epsilon)$-minimax fairness: $\max_g \ell_g (f_{\hat{t}}) \le \min_{h \in \mathcal{H}} \max_g \ell_g (h) + \gamma + 2\epsilon$, and
        \item Accuracy: $f_{\hat{t}}$ satisfies $\ell (f_{\hat{t}}) \le \text{OPT} \left( \mathcal{H}, \gamma \right) + \epsilon$.
    \end{itemize}
    provided that the size of the smallest group satisfies
    \[
    \min_{g \in \mathcal{G}} n_g = \Omega \left( \frac{G \log (n) + \log \left( G / \delta \right)}{\epsilon^2} \right)
    \]
\end{restatable}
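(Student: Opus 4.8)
The plan is the standard ``solve an empirical surrogate, then invoke uniform convergence'' template, with two strategic-setting-specific ingredients supplied by the earlier results: Lemma~\ref{lem:F}, which lets us replace the unrestricted class $\mathcal{H}=\mathcal{Y}^{\mathcal{X}}$ by the $G$-parameter family $\mathcal{F}$, and the fact (established in the text preceding Algorithm~\ref{alg:separable}, using Assumption~\ref{ass:sep}(2)) that $\hat{\ell}_g(f_t)$ and $\hat{\ell}(f_t)$ as computed by the algorithm \emph{are} the empirical strategic group/overall error rates of $f_t$ on $S$. Beyond these I will use two elementary observations. First, \emph{rounding into $T(S)$:} for any $t\in\reals^G$, setting $\tilde t_{g'}=\min\{t^i_{g'}:i\in[n],\,t^i_{g'}\ge t_{g'}\}$ (and $\tilde t_{g'}=\infty$ if no such $i$ exists) yields $\tilde t\in T(S)$ with $t^i_{g'}\ge\tilde t_{g'}\iff t^i_{g'}\ge t_{g'}$ for every $i$ and $g'$; hence $f_{\tilde t}$ and $f_t$ induce the same post-best-response labeling of $S$, so $\hat{\ell}_g(f_{\tilde t})=\hat{\ell}_g(f_t)$ for all $g$ and $\hat{\ell}(f_{\tilde t})=\hat{\ell}(f_t)$. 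Second, since $\mathcal{F}\subseteq\mathcal{Y}^{\mathcal{X}}=\mathcal{H}$, the inclusion turns Lemma~\ref{lem:F} into an equality: $\min_{f\in\mathcal{F}}\max_g\ell_g(f)=\min_{h\in\mathcal{H}}\max_g\ell_g(h)=:m$, and likewise $\text{OPT}(\mathcal{F},\gamma)\le\text{OPT}(\mathcal{H},\gamma)$.

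The second ingredient is a two-sided uniform convergence bound over $\mathcal{F}$. As the algorithm notes, $f_t\circ\br$ realizes at most $(n+1)^G$ distinct labelings of any $n$ points, being a coordinatewise product of $G$ one-dimensional threshold rules applied to the fixed quantities $\tau_{g'}(x,g)=\max_{z:c_g(x,z)<1}b_{g'}(z)$; hence the growth function of the associated $0/1$ loss class is at most $(n+1)^G$. Conditioning on the group sizes and on the assignment of indices to groups (so that the $n_g$ points of group $g$ are i.i.d.\ from $D_g$), a Hoeffding bound per labeling together with a union bound over the $\le(n+1)^G$ labelings, the $G$ groups, and the overall error rate gives: with probability at least $1-\delta$, simultaneously for all $t\in\reals^G$,
\[
\bigl|\hat{\ell}_g(f_t)-\ell_g(f_t)\bigr|\le\tfrac{\epsilon}{2}\ \ \forall g\in\mathcal{G},\qquad \bigl|\hat{\ell}(f_t)-\ell(f_t)\bigr|\le\tfrac{\epsilon}{2},
\]
provided $\min_g n_g=\Omega\bigl((G\log n+\log(G/\delta))/\epsilon^2\bigr)$; the minimum group size (not $n$) governs the bound because the per-group estimates are the bottleneck, while the overall estimate is controlled by $n\ge\min_g n_g$. (Alternatively, one can quote Theorem~\ref{thm:vc-gen} via Sauer's lemma using $VC(\mathcal{F})=O(G\log G)$, at the cost of an extra $\log G$ factor.) This is essentially the uniform convergence lemma that Theorem~\ref{thm:sep} already needs; call the stated event $\mathcal{E}$.

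On $\mathcal{E}$, write $\hat{m}=\min_{t\in T(S)}\max_g\hat{\ell}_g(f_t)$. From the rounding observation, the identity $\min_{f\in\mathcal{F}}\max_g\ell_g(f)=m$, and $\epsilon/2$-closeness I obtain: (a) $\hat{m}\ge m-\epsilon/2$, since every $f_t$ with $t\in T(S)$ lies in $\mathcal{F}$; and (b) $\hat{m}\le m+\epsilon/2$, by rounding an (approximate) minimizer of $\max_g\ell_g(\cdot)$ over $\mathcal{F}$ into $T(S)$ and letting the approximation error tend to $0$. Since $f_{\hat{t}}$ (Objective II) is feasible for the program, $\max_g\hat{\ell}_g(f_{\hat{t}})\le\hat{m}+\gamma+\epsilon$, and therefore $\max_g\ell_g(f_{\hat{t}})\le\max_g\hat{\ell}_g(f_{\hat{t}})+\epsilon/2\le\hat{m}+\gamma+\tfrac{3\epsilon}{2}\le m+\gamma+2\epsilon$, i.e.\ $(\gamma+2\epsilon)$-minimax fairness. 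For accuracy, fix $\eta>0$ and pick $f^{\star\star}\in\mathcal{F}$ with $\ell(f^{\star\star})\le\text{OPT}(\mathcal{H},\gamma)+\eta$ and $\max_g\ell_g(f^{\star\star})\le m+\gamma$ (possible since $\text{OPT}(\mathcal{F},\gamma)\le\text{OPT}(\mathcal{H},\gamma)$ and the $\mathcal{F}$-constraint reads $\max_g\ell_g(\cdot)\le m+\gamma$), and round it to $\tilde t^{\star\star}\in T(S)$. On $\mathcal{E}$, using (a), $\max_g\hat{\ell}_g(f_{\tilde t^{\star\star}})=\max_g\hat{\ell}_g(f^{\star\star})\le m+\gamma+\tfrac{\epsilon}{2}\le\hat{m}+\gamma+\epsilon$, so $\tilde t^{\star\star}$ is feasible for the Objective-II program; as $\hat{t}$ minimizes $\hat{\ell}$ among feasible thresholds, $\hat{\ell}(f_{\hat{t}})\le\hat{\ell}(f_{\tilde t^{\star\star}})=\hat{\ell}(f^{\star\star})\le\ell(f^{\star\star})+\tfrac{\epsilon}{2}\le\text{OPT}(\mathcal{H},\gamma)+\eta+\tfrac{\epsilon}{2}$. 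Letting $\eta\to0$ and applying $|\hat{\ell}(f_{\hat{t}})-\ell(f_{\hat{t}})|\le\epsilon/2$ once more gives $\ell(f_{\hat{t}})\le\text{OPT}(\mathcal{H},\gamma)+\epsilon$. The $O(n^G)$ runtime follows from the algorithm's structure: it precomputes the $nG$ values $t^i_g$ and then sweeps the $|T(S)|=(n+1)^G$ thresholds, maintaining the error counts incrementally.

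I expect the uniform convergence step to be the main work: controlling the per-group and overall error estimates \emph{simultaneously}, taking the union bound over the data-\emph{independent} growth function $(n+1)^G$ rather than over the data-dependent set $T(S)$, and tracking the correct dependence on $\min_g n_g$. The remainder is bookkeeping, but one point deserves emphasis: the $\gamma+\epsilon$ slack deliberately built into Algorithm~\ref{alg:separable}'s Objective-II constraint is precisely what keeps the near-optimal comparator $f^{\star\star}$ feasible for the empirical program; inequality (a), $\hat{m}\ge m-\epsilon/2$, is the load-bearing step there.
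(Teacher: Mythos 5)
Your proposal is correct and follows essentially the same route as the paper: your rounding-into-$T(S)$ observation is the paper's Lemma~\ref{lem:thresholds}, your uniform convergence bound (via the $(n+1)^G$ growth function, or equivalently $VC(\mathcal{F}_g)\le G$) is the paper's Lemma~\ref{lem:gen}, and your feasibility argument for the near-optimal comparator $f^{\star\star}$ is exactly the paper's Lemma~\ref{lem:1} (the inclusion $C_\gamma\subseteq\hat{C}_{\gamma+\epsilon}$, stated there as a set inclusion rather than pointwise). The final error chains for both the fairness and accuracy guarantees match the paper's, including the identification of the $\gamma+\epsilon$ constraint slack as the load-bearing step.
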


\section{General Cost Functions: An Oracle-efficient Learner}\label{sec:gen}
In this section, we consider cost functions that are not necessarily separable and show that there exists oracle-efficient\footnote{an algorithm that calls a given oracle only polynomially many times.} algorithms that solve the optimization problems of the learner if the hypothesis class $\Hs$ has finite Strategic VC dimension. We begin by expanding the class $\Hs$ to the probability simplex over $\Hs$ which we denote by $\Delta (\mathcal{H})$.
\[
\Delta (\mathcal{H}) = \left\{ p: p \text{ is a distribution over } \Hs \right\}
\]

Each $p \in \Delta (\mathcal{H})$ can be seen as a randomized classifier. Note that often times in learning problems, considering distributions over $\Hs$ help linearize the objective of the learner because the loss of a distribution can be expressed as the expected loss of a classifier drawn from the distribution. However, this is not necessarily true in the standard strategic setting.

\paragraph{The Challenge of Convexity.} Note that in the standard strategic setting, agents observe the model of the learner, which is a distribution $p \in \Delta (\mathcal{H})$ in our case. Only after the agents best respond to $p$ and commit to their manipulation, the learner draws a classifier $h \sim p$ and classify the agents according to $h$. With such order of operations, the loss of a distribution $p$ for the learner would be:

\[
\mathbb{E}_{h \sim p } \left[ \Pr_{(x,g,y) \sim D} \left[ h\left( \br (x,g, {\color{red}p} ) \right) \neq y \right] \right]
\]
where we note that because the best response of the agents depend on $p$, this is not a linear function in $p$ and in general is non-convex. We therefore consider a fully \emph{transparent} model of information release where the order of operations lead to linearity of the loss for the learner. In this model, the learner first draws its classifier $h$ from the distribution $p$ and then releases $h$ to the agents. The key observation here is that when the agents best respond to the classifier $h$ drawn from $p$, the loss of the learner would be
\[
\mathbb{E}_{h \sim p } \left[ \Pr_{(x,g,y) \sim D} \left[ h\left( \br (x,g, {\color{blue}h} ) \right) \neq y \right] \right]
\]
which is now a \emph{convex (linear)} function in $p$. Given this transparent model of information release, we define the overall and the group error rates for randomized models as the expected error rate when the classifier is drawn from the given distribution and the agents best respond to the drawn classifier. These quantities are formally defined in Definition~\ref{def:trans}. We note that such transparent model, where agents best respond to the deterministic hypothesis drawn from a randomized model, is also considered in \citet{ahmadi2023fundamental}: they show that in the online setting of strategic classification, realizing the randomness of randomized classifiers before the agents respond can help the the utility of both the learner and the agents.

\begin{definition}[Strategic Error Rates of Randomized Classifiers in the Transparent Model]\label{def:trans}
Given a distribution $D$ with corresponding group conditionals $\{ D_g\}_g$, the overall error rate of a distribution $p \in \Delta (\mathcal{H})$, $\ell (p)$, and its corresponding group error rate for the group $g$, $\ell_g (p)$, are defined as follows:
\begin{equation*}
    \ell (p) \triangleq \mathbb{E}_{h \sim p } \left[ \Pr_{(x,g,y) \sim D} \left[ h\left( \br (x,g,h) \right) \neq y \right] \right] \equiv \mathbb{E}_{h \sim p } \left[ \ell (h) \right]
\end{equation*}
\begin{equation*}
    \ell_g(p) \triangleq \mathbb{E}_{h \sim p } \left[ \Pr_{(x,y) \sim D_g} \left[ h\left( \br (x,g,h) \right) \neq y \right] \right] \equiv \mathbb{E}_{h \sim p } \left[ \ell_g(h) \right]
\end{equation*}
\end{definition}

Given the definitions of the loss functions for the learner, we formalize the Stackelberg between the learner and the agents in the transparent setting of this section below.

\begin{definition}[The Transparent Fairness-aware Strategic Game] 
\label{defn:transfairgame}The game, between the learner and the agents, proceeds as follows:
    \begin{enumerate}
        \item The learner, knowing the cost functions $\{ c_g \}_{g \in \mathcal{G}}$, and having access to $S = \{ (x_i, g_i, y_i)\}_{i=1}^n$ drawn $i.i.d.$ from $D$, chooses a distribution $p \in \Delta ( \mathcal{H})$ and publishes a classifier $h \sim p$.
        \item Every agent $(x,g)$ best responds to $h$ by moving to a point $\br(x,g,h)$ that maximizes their utility.
        \begin{equation*}
         \br(x,g,h) \in \argmax_{z \in \mathcal{X}} u_{(x,g)} (z; h)
        \end{equation*}
        \end{enumerate}
        Given a threshold $\gamma > 0$, the learner's goal in this game is one of the following:
        \begin{itemize}
        \item \textbf{Objective I}: Find a $\gamma$- minimax fair model: find $p \in \Delta (\mathcal{H})$ such that
        \[
        \max_{g \in \mathcal{G}} \ell_g(p) \le \min_{p' \in \Delta (\mathcal{H})} \max_{g \in \mathcal{G}} \ell_g(p') + \gamma
        \]
        \item \textbf{Objective II}: Among all $\gamma$-minimax fair models, find one that minimizes the overall error rate:
        \begin{equation}\label{eq:optdelta}
        \min_{p \in \Delta (\mathcal{H})} \left\{ \ell (p): \max_{g \in \mathcal{G}} \ell_g(p) \le \min_{p' \in \Delta (\mathcal{H})} \max_{g \in \mathcal{G}} \ell_g(p') + \gamma \right\} \triangleq \text{OPT} \left( \Delta (\mathcal{H}), \gamma \right)
        \end{equation}
        \end{itemize}
\end{definition}

We let $\text{OPT} \left( \Delta (\mathcal{H}), \gamma \right)$ denote the optimal value of optimization problem~(\ref{eq:optdelta}). In this section, we assume the learner has access to an oracle that can solve strategic learning over $\Hs$ \emph{absent any fairness constraints}.

\begin{definition}[Oracle $\text{WERM}_\mathcal{H}$]\label{def:oracle}
    We assume there exists an oracle $\text{WERM}_\mathcal{H}$ (Weighted Empirical Risk Minimization over $\Hs$) that for any given dataset $S$ and any set of group weights $\{ w_g\}_g$ solves
    \[
    \text{WERM}_\mathcal{H} \left( S, \{ w_g\}_g \right) \in \argmin_{h \in \mathcal{H}}  \sum_g w_g \hat{\ell}_g (h)
    \] 
\end{definition}
 Given such an oracle, our goal is to show that the learner's problem can be solved by calling $\text{WERM}_\mathcal{H}$ only polynomially many times, from which the existence of oracle-efficient algorithms is implied. In other words, we show that learning with fairness constraints in the strategic setting can be reduced to learning in the strategic setting without any constraints.

 We use the following high-level plan to develop our algorithms in this section:
 \begin{enumerate}
     \item Given a dataset $S$, we start by writing the empirical versions of the optimization problems.
     \item Objective I of the learner is already written in a minmax form. However, for objective II, we use Lagrangian duality to re-write the constrianed optimization problem as a minmax problem. In both cases, we note that the minmax theorem \citep{sion1958general} holds.
     \item Next, we view the resulting minmax problems as two-player zero-sum games and use the \emph{no-regret dynamics} framework to find approximate equilibrium of the games. To elaborate, an approximate equilibrium is found by simulating an iterative game in which one player best responds and the other uses a no-regret learning algorithm to update its strategies. The empirical average of the players' strategies will then form an approximate equilibrium.
     \item We will then show that the approximate equilibrium guarantees can be converted to optimality guarantees of the original optimization problems. Therefore, the output of the no-regret dynamics framework serves as an approximately optimal solution to the original optimization problems.
     \item Finally, we prove uniform convergence guarantees establishing that with large enough sample size, the same model learned by our algorithm has optimality guarantees with respect to the underlying distribution $D$.
 \end{enumerate}

Missing proofs of this section can be found in Appendix~\ref{app:gen}.
\subsection{Objective I: Find a Minimax Fair Model}
In this section, we focus on solving the following problem: given a target $\gamma$, find $p \in \Delta (\mathcal{H})$ such that
\[
\max_{g \in \mathcal{G}} \ell_g(p) \le \min_{p' \in \Delta (\mathcal{H})} \max_{g \in \mathcal{G}} \ell_g(p') + \gamma
\]
Suppose we are given a data set $S = \{ (x_i, g_i, y_i)\}_{i=1}^n$ sampled $i.i.d.$ from the distribution $D$. We will develop an algorithm that solves the minmax problem with respect to the empirical distribution induced by $S$, and appeal to generalization guarantees to show that the learned model satisfies $\gamma$-minimax fairness with respect to the underlying distribution $D$. We can cast our empirical problem as the following minmax problem:
\begin{equation}\label{eq:emp-problem}
\min_{p \in \Delta (\mathcal{H})} \max_{g \in \mathcal{G}} \hat{\ell}_g(p) = \text{OPT}
\end{equation}
where we use the empirical loss function $\hat{\ell}_g$ instead of ${\ell}_g$. Define \[
\Lambda \triangleq \left\{ \lambda = (\lambda^1, \ldots, \lambda^G) \in \mathbb{R}_+^G: \Vert \lambda \Vert_1 \le 1 \right\}
\]
\[
\mathcal{H} ( S ) \triangleq \left\{ \left( h \left( \br \left(x_1, g_1, h \right) \right), \ldots, h \left( \br \left(x_n, g_n, h \right) \right) \right): h \in \mathcal{H} \right\}
\]
which is the set of all labelings induced by $\mathcal{H}$ on the data set $S$. When solving the empirical problem, note that it suffices for us to optimize over only $\Delta (\mathcal{H} (S))$. It follows directly from Sauer's Lemma (Lemma~\ref{lem:sauer}) and the definition of strategic VC dimension (Definition~\ref{def:svc}) that:
\begin{lemma}
    Let $S$ be a data set of size $n$ and let $SVC (\mathcal{H}) = d_{\mathcal{H}} < \infty$. We have that $| \mathcal{H} ( S ) | = O \left( n^{d_{\mathcal{H}}} \right)$.
\end{lemma}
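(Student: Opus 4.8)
The plan is to recognize $\mathcal{H}(S)$ as exactly the set of labelings that the ``strategic'' function class $\mathcal{F}$ (from Definition~\ref{def:svc}) induces on an associated set of $n$ points, and then invoke Sauer's Lemma for $\mathcal{F}$. Recall $\mathcal{F} = \{ f_h : h \in \mathcal{H} \}$ with $f_h : \mathcal{X} \times \mathcal{G} \to \mathcal{Y}$ defined by $f_h(x,g) = h(\br(x,g,h))$, and that by definition $VC(\mathcal{F}) = SVC(\mathcal{H}) = d_{\mathcal{H}}$.

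First I would form the set of $n$ points $\tilde{S} = \{ (x_1, g_1), \ldots, (x_n, g_n) \} \subseteq \mathcal{X} \times \mathcal{G}$ obtained from $S$ by dropping the labels. Then, directly from the definitions, for each $h \in \mathcal{H}$ the vector $(h(\br(x_1,g_1,h)), \ldots, h(\br(x_n,g_n,h)))$ equals $(f_h(x_1,g_1), \ldots, f_h(x_n,g_n))$, so
\[
\mathcal{H}(S) = \left\{ (f_h(x_1,g_1), \ldots, f_h(x_n,g_n)) : h \in \mathcal{H} \right\} = \mathcal{F}(\tilde{S}),
\]
where $\mathcal{F}(\tilde{S})$ is the projection of $\mathcal{F}$ onto $\tilde{S}$ in the sense of Definition~\ref{def:vc} (applied over the domain $\mathcal{X} \times \mathcal{G}$ rather than $\mathcal{X}$, which makes no difference to the combinatorial statement).

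Next I would apply Sauer's Lemma (Lemma~\ref{lem:sauer}) to the class $\mathcal{F}$, which has finite VC dimension $VC(\mathcal{F}) = d_{\mathcal{H}}$, on the size-$n$ sample $\tilde{S}$: this yields $|\mathcal{F}(\tilde{S})| = O(n^{d_{\mathcal{H}}})$. Combining with the identity above gives $|\mathcal{H}(S)| = |\mathcal{F}(\tilde{S})| = O(n^{d_{\mathcal{H}}})$, as claimed.

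There is essentially no obstacle here; the only thing to be careful about is the bookkeeping that the relevant class is $\mathcal{F}$ (whose domain carries the group label along with the feature vector) and not $\mathcal{H}$ itself, and that $SVC(\mathcal{H})$ is precisely $VC(\mathcal{F})$ so Sauer's Lemma applies with the right exponent. Once the identification $\mathcal{H}(S) = \mathcal{F}(\tilde{S})$ is made explicit, the bound is immediate.
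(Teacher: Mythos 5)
Your proposal is correct and matches the paper's approach: the paper states the lemma "follows directly from Sauer's Lemma and the definition of strategic VC dimension," and your argument simply makes explicit the identification $\mathcal{H}(S) = \mathcal{F}(\tilde{S})$ together with $VC(\mathcal{F}) = SVC(\mathcal{H}) = d_{\mathcal{H}}$ before invoking Sauer's Lemma. Nothing is missing.
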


Given $\Lambda$ and $\mathcal{H} ( S )$, note that we can re-write the minmax problem as:
\[
\min_{p \in \Delta (\mathcal{H} (S))} \max_{\lambda \in \Lambda} \left\{ \sum_g \lambda^g   \hat{\ell}_g(p)  \right\} = \text{OPT}
\]
Observe that both $\Delta (\mathcal{H} (S))$ and $\Lambda$ are convex and compact, and that the objective function of this minmax problem is linear in its variables $p$ and $\lambda$. As a consequence, the minmax theorem \citep{sion1958general} implies:
\[
\min_{p \in \Delta (\mathcal{H} (S))} \max_{\lambda \in \Lambda} \sum_g \lambda^g  \hat{\ell}_g(p) = \max_{\lambda \in \Lambda} \min_{p \in \Delta (\mathcal{H} (S))} \sum_g \lambda^g \hat{\ell}_g(p) = \text{OPT}
\]
We can now view the minmax optimization problem with the linear objective as a two-player zero-sum game: the primal player (the Learner) has strategies in $\Delta (\mathcal{H} (S))$ and wants to minimize the objective; the dual player has strategies in $\Lambda$ and wants to maximize the objective. We first show that a $\nu$-approximate equilibrium (Definition~\ref{def:nuapprox}) of this game corresponds to a $2 \nu$-approximate optimal solution for the learner's minmax problem~(\ref{eq:emp-problem}).

\begin{restatable}{lemma}{optminmax}[Equilibrium $\rightarrow$ Optimality]\label{lem:optimality}
Suppose $(\hat{p}, \hat{\lambda})$ is a $\nu$-approximate equilibrium of the game described above. We have that:
\[
 \max_{g \in \mathcal{G}} \hat{\ell}_g( \hat{p} ) \le \min_{p \in \Delta (\mathcal{H})} \max_{g \in \mathcal{G}} \hat{\ell}_g(p) + 2 \nu 
\]
\end{restatable}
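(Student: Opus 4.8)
The plan is to unwind the two inequalities defining a $\nu$-approximate equilibrium (Definition~\ref{def:nuapprox}) for the zero-sum game with bilinear payoff $U(p,\lambda) = \sum_{g} \lambda^g \hat{\ell}_g(p)$, learner strategies $p \in \Delta(\mathcal{H}(S))$, and dual strategies $\lambda \in \Lambda$, and then chain them. The one preliminary fact I would record is that, because every $\hat{\ell}_g(p) \ge 0$ and $\Lambda = \{\lambda \in \mathbb{R}_+^G : \|\lambda\|_1 \le 1\}$ contains each standard basis vector $e_g$, the maximum of the linear functional $\lambda \mapsto \sum_g \lambda^g \hat{\ell}_g(p)$ over $\Lambda$ is attained by putting unit mass on the largest coordinate, so that
\[
\max_{\lambda \in \Lambda} U(p,\lambda) = \max_{\lambda \in \Lambda} \sum_{g} \lambda^g \hat{\ell}_g(p) = \max_{g \in \mathcal{G}} \hat{\ell}_g(p) \qquad \text{for every } p.
\]
Thus the ``max-side'' of the game recovers exactly the minimax-fairness objective.

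Next I would combine the equilibrium conditions. From the dual player's condition ($\max_{\lambda'} U(\hat p,\lambda') - U(\hat p,\hat\lambda) \le \nu$) together with the fact just recorded,
\[
\max_{g \in \mathcal{G}} \hat{\ell}_g(\hat p) = \max_{\lambda \in \Lambda} U(\hat p, \lambda) \le U(\hat p, \hat\lambda) + \nu .
\]
From the learner's condition ($U(\hat p,\hat\lambda) - \min_{p'} U(p',\hat\lambda) \le \nu$), and using $U(p',\hat\lambda) \le \max_{\lambda \in \Lambda} U(p',\lambda)$ pointwise in $p'$ before taking the minimum,
\[
U(\hat p,\hat\lambda) \le \min_{p' \in \Delta(\mathcal{H}(S))} U(p',\hat\lambda) + \nu \le \min_{p' \in \Delta(\mathcal{H}(S))} \max_{\lambda \in \Lambda} U(p',\lambda) + \nu = \min_{p' \in \Delta(\mathcal{H}(S))} \max_{g \in \mathcal{G}} \hat{\ell}_g(p') + \nu .
\]
Adding the two displays yields $\max_{g} \hat{\ell}_g(\hat p) \le \min_{p' \in \Delta(\mathcal{H}(S))} \max_{g} \hat{\ell}_g(p') + 2\nu$.

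Finally I would pass from $\Delta(\mathcal{H}(S))$ to $\Delta(\mathcal{H})$: since $\hat{\ell}_g(p) = \mathbb{E}_{h\sim p}[\hat{\ell}_g(h)]$ depends on $p$ only through the labeling $p$ induces on $S$, every $p \in \Delta(\mathcal{H})$ has a counterpart in $\Delta(\mathcal{H}(S))$ realizing the same error vector $(\hat{\ell}_g(\cdot))_g$ and conversely, so the two minima over $\max_g \hat{\ell}_g(\cdot)$ coincide --- this is precisely the reduction already noted just before the lemma. Substituting gives $\max_{g} \hat{\ell}_g(\hat p) \le \min_{p \in \Delta(\mathcal{H})} \max_{g} \hat{\ell}_g(p) + 2\nu$, as claimed.

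I do not expect a genuine obstacle here: this is the standard ``approximate equilibrium $\Rightarrow$ approximate minimax-optimality'' fact for a bilinear game on convex compact domains. The only points that need care are (i) that the identity $\max_{\lambda \in \Lambda}\sum_g \lambda^g \hat{\ell}_g(p) = \max_g \hat{\ell}_g(p)$ relies on nonnegativity of the empirical error rates, and (ii) getting the direction of the inequality right when weakening $\min_{p'} U(p',\hat\lambda)$ to $\min_{p'}\max_{\lambda} U(p',\lambda)$ (the inequality must go the ``right'' way so the two bounds add rather than cancel).
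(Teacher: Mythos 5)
Your proposal is correct and follows essentially the same chain of inequalities as the paper's proof: bound $\max_g \hat{\ell}_g(\hat p)$ by $\max_{\lambda\in\Lambda} U(\hat p,\lambda)$, apply the two approximate-equilibrium conditions, weaken $\min_{p'} U(p',\hat\lambda)$ to $\min_{p'}\max_\lambda U(p',\lambda)$, and identify the minimum over $\Delta(\mathcal{H}(S))$ with that over $\Delta(\mathcal{H})$. The only (harmless) difference is that you record the identity $\max_{\lambda\in\Lambda}\sum_g\lambda^g\hat{\ell}_g(p)=\max_g\hat{\ell}_g(p)$ explicitly, where the paper only uses the one inequality direction it needs.
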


We use the No-regret Dynamics (briefly discussed in Section~\ref{subsec:noregret}) to find an approximate equilibrium: in an iterative fashion, we let the learner best respond and the dual player use the Exponential Weights algorithm \citep{cesa1999prediction} which is a no-regret learning algorithm. The final output of the algorithm is the time-averaged plays of the two players which will converge to the equilibrium of the game by Theorem~\ref{thm:noregret}. We note that at every round of the game, given the strategy of the dual player $\lambda \in \Lambda$, the learner's best response strategy corresponds to solving
$
    \argmin_{h \in \mathcal{H}}  \sum_g \lambda^g \hat{\ell}_g (h)
$
which can be solved by invoking the oracle $\text{WERM}_\mathcal{H} \left(S, \lambda \right)$. This is because
\[
\min_{p \in \Delta (\mathcal{H})} \sum_g \lambda^g   \hat{\ell}_g(p) = \min_{p \in \Delta (\mathcal{H})} \mathbb{E}_{h \sim p} \left[ \sum_g \lambda^g   \hat{\ell}_g(h) \right] = \min_{h \in \Hs } \sum_g \lambda^g   \hat{\ell}_g(h)
\]
We present the algorithm of this section (for solving objective I of the learner) in Algorithm~\ref{alg:general1}. Guarantees of this algorithm are given in Theorem~\ref{thm:general1}.

\begin{algorithm}[t]
\SetAlgoNoLine
    \KwIn{Dataset $S = \{ (x_i, g_i, y_i)\}_{i=1}^n$, cost functions $\{ c_g \}_{g}$, desired fairness parameter $\gamma$.}
        Set the number of iterations and the learning rate: \[T  = \frac{8 \log G}{\gamma^2}, \ \eta = \sqrt{ \frac{8 \log G}{T} }\]
        
        Initialize the dual player's strategy: $\lambda_0 = (\frac{1}{G}, \ldots, \frac{1}{G}) \in \Lambda$\;
        \For{$t = 1, \ldots, T$}{
        Solve $h_t = \argmin_{h \in \mathcal{H}}  \sum_g \lambda_{t-1}^g \hat{\ell}_g (h)$ by calling the oracle $\text{WERM}_\mathcal{H} \left(S, \lambda_{t-1} \right)$ \tcp*{Best Response}
        Update: $\forall g, \, \lambda_t^g = \lambda_{t-1}^g \cdot \frac{1}{Z_t} e^{\eta \hat{\ell}_g (h_t)}$ where $Z_t = \sum_g e^{\eta \hat{\ell}_g (h_t)}$ \tcp*{Exponential Weights update}
        }
        \KwOut{$\hat{p} = \frac{1}{T} \sum_t h_t$: the uniform distribution over $\{ h_1, \ldots, h_T\}$}
\caption{Minimax Fair Strategic Learning: General Costs, Objective I}
\label{alg:general1}
\end{algorithm}

\begin{restatable}{theorem}{objIgen}[Guarantees of Algorithm~\ref{alg:general1}]\label{thm:general1}
    Fix a set of group cost functions $\{ c_g \}_{g}$. Suppose $\mathcal{H}$ has finite strategic VC dimension: $d_\mathcal{H} = SVC \left( \Hs \right) < \infty$. There exists an algorithm (Algorithm~\ref{alg:general1}) such that given access to the oracle $\text{WERM}_\mathcal{H}$, for any data distribution $D$, and any $\gamma \ge 0$, makes $O \left(\log G / {\gamma^2} \right)$ oracle calls, and for any $\delta$, with probability at least $1-\delta$ over the $i.i.d.$ draws of $S \sim D^n$, outputs a $\gamma$-minimax fair model $\hat{p} \in \Delta (\mathcal{H})$: $\max_g \ell_g (\hat{p}) \le \min_{p \in \Delta (\mathcal{H})} \max_g \ell_g (p) + \gamma$, provided that
    \[
   \min_{g \in \mathcal{G}} n_g = \Omega \left( \frac{ d_\mathcal{H} \log (n) + \log \left( G / \delta \right)}{\gamma^2} \right)
    \]
\end{restatable}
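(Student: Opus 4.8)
The plan is to split the argument into an \emph{empirical optimality} part and a \emph{generalization} part, tuned so that three error contributions of sizes $\gamma/4$, $\gamma/2$, $\gamma/4$ add up to exactly $\gamma$. First I would observe that Algorithm~\ref{alg:general1} is precisely the no-regret dynamics of Theorem~\ref{thm:noregret} run on the zero-sum game with payoff $U(p,\lambda)=\sum_g\lambda^g\hat\ell_g(p)$ over the convex, compact sets $\Delta(\mathcal H(S))$ and $\Lambda$: this $U$ is bilinear, hence convex--concave, and $\max_{\lambda\in\Lambda}U(p,\lambda)=\max_g\hat\ell_g(p)$ because the $\hat\ell_g$ are nonnegative, so the value of the game equals the empirical optimum $\text{OPT}$ of~(\ref{eq:emp-problem}).

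Next I would bound the two players' regrets. The learner plays an exact best response each round --- which, by the identity $\min_{p\in\Delta(\mathcal H)}\sum_g\lambda^g\hat\ell_g(p)=\min_{h\in\mathcal H}\sum_g\lambda^g\hat\ell_g(h)$ noted in the text, is implemented by the single oracle call $\text{WERM}_\mathcal H(S,\lambda_{t-1})$ --- so its regret is nonpositive and we may take $\nu_1=0$. The dual player runs Exponential Weights \citep{cesa1999prediction} over $G$ experts with gains $\hat\ell_g(h_t)\in[0,1]$ at rate $\eta=\sqrt{8\log G/T}$, whose regret is at most $\sqrt{(T\log G)/2}$; with $T=8\log G/\gamma^2$ this gives $\nu_2=\sqrt{\log G/(2T)}=\gamma/4$. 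By Theorem~\ref{thm:noregret} the time-averaged play $(\hat p,\bar\lambda)$ is therefore a $(\gamma/4)$-approximate equilibrium, and Lemma~\ref{lem:optimality} upgrades this to $\max_g\hat\ell_g(\hat p)\le\min_{p\in\Delta(\mathcal H)}\max_g\hat\ell_g(p)+\gamma/2$. Since the oracle is invoked once per round, the oracle-call count is $T=O(\log G/\gamma^2)$.

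For the generalization part I would, for each group $g$ separately, note that the class $\mathcal F_g=\{x\mapsto h(\br(x,g,h)):h\in\mathcal H\}$ is the $g$-th slice of the class $\mathcal F$ from Definition~\ref{def:svc}, so $VC(\mathcal F_g)\le SVC(\mathcal H)=d_{\mathcal H}$. Applying Theorem~\ref{thm:vc-gen} to $D_g$ with the $n_g$ group-$g$ samples, accuracy $\gamma/4$, and confidence $\delta/G$, then union bounding over the $G$ groups, gives: with probability $\ge 1-\delta$, $\sup_{h\in\mathcal H}|\hat\ell_g(h)-\ell_g(h)|\le\gamma/4$ for all $g$, provided $\min_g n_g=\Omega\!\big((d_{\mathcal H}\log n+\log(G/\delta))/\gamma^2\big)$. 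Because $\ell_g(p)=\mathbb E_{h\sim p}[\ell_g(h)]$ and $\hat\ell_g(p)=\mathbb E_{h\sim p}[\hat\ell_g(h)]$ in the transparent model (Definition~\ref{def:trans}), this uniform bound over the \emph{fixed} class $\mathcal H$ extends by averaging to every randomized classifier in $\Delta(\mathcal H)$, in particular to the data-dependent output $\hat p$ and to a true population minimax-fair model $p^\star\in\argmin_{p}\max_g\ell_g(p)$.

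Finally I would chain the bounds on the good event:
\[
\max_g\ell_g(\hat p)\le\max_g\hat\ell_g(\hat p)+\tfrac{\gamma}{4}\le\min_{p\in\Delta(\mathcal H)}\max_g\hat\ell_g(p)+\tfrac{3\gamma}{4}\le\max_g\hat\ell_g(p^\star)+\tfrac{3\gamma}{4}\le\max_g\ell_g(p^\star)+\gamma,
\]
which is exactly the claimed $\gamma$-minimax fairness. I expect the main obstacle to be the generalization step: because agents best-respond, the quantity that must concentrate for group $g$ is the strategic response $x\mapsto h(\br(x,g,h))$, so the argument must be routed through $SVC(\mathcal H)$ rather than $VC(\mathcal H)$; one must be careful that it is uniform convergence over the \emph{fixed} class $\mathcal H$ (not a data-dependent subclass) that licenses the bound for the random output $\hat p$; and one must check that concentration survives the lift from $\mathcal H$ to $\Delta(\mathcal H)$ via linearity of the error rates in the transparent model. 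The remaining work --- choosing $\epsilon=\gamma/4$ and $T=8\log G/\gamma^2$ so the three pieces sum to $\gamma$, and verifying the Exponential Weights regret bound at this learning rate --- is routine bookkeeping.
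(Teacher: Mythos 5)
Your proposal is correct and follows essentially the same route as the paper's proof: the same $\gamma/4 + \gamma/2 + \gamma/4$ error budget, the same regret analysis yielding a $(\gamma/4)$-approximate equilibrium upgraded via Lemma~\ref{lem:optimality}, and the same generalization argument through the per-group strategic response classes $\mathcal{F}_g$ with $VC(\mathcal{F}_g)\le d_{\mathcal{H}}$, lifted to $\Delta(\mathcal{H})$ by linearity. The only cosmetic difference is that you route the final chain explicitly through a population minimax-fair $p^\star$, which the paper does implicitly.
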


\subsection{Objective II: Optimize Error Subject to Minimax Fairness}

In this section, we focus on solving:
\begin{equation}\label{eq:II}
        \min_{p \in \Delta (\mathcal{H})} \left\{ \ell (p): \max_{g \in \mathcal{G}} \ell_g(p) \le \min_{p' \in \Delta (\mathcal{H})} \max_{g \in \mathcal{G}} \ell_g(p') + \gamma \right\} \triangleq \text{OPT} \left( \Delta (\mathcal{H}), \gamma \right)
\end{equation}
Similar to the previous section, given a data set $S$ sampled from the distribution, we solve the empirical optimization problem and appeal to generalization guarantees to show that if the size of the smallest group is large enough, the optimal solution of the empirical problem is also an approximately optimal solution for problem~(\ref{eq:II}). We can cast the empirical problem as:
\[
\min_{p \in \Delta (\mathcal{H})} \left\{ \hat{\ell} (p): \max_{g \in \mathcal{G}} \hat{\ell}_g(p) \le \min_{p' \in \Delta (\mathcal{H})} \max_{g \in \mathcal{G}} \hat{\ell}_g(p') + \gamma \right\}
\]
To solve this problem, we first call Algorithm~\ref{alg:general1} to estimate the minmax value
$
\min_{p' \in \Delta (\mathcal{H})} \max_{g \in \mathcal{G}} \hat{\ell}_g(p')
$.
Let $\tilde{p}$ be the output of Algorithm~\ref{alg:general1} and let $\hat{\gamma} \triangleq \max_g \hat{\ell}_g (\tilde{p})$ be our estimate for the minmax value. Given $\hat{\gamma}$, our problem can then be cast as a constrained optimization problem:
\begin{equation}\label{eq:opthat}
\min_{p \in \Delta (\mathcal{H})} \left\{ \hat{\ell} (p): \max_{g \in \mathcal{G}} \hat{\ell}_g(p) \le \hat{\gamma} + \gamma \right\} \triangleq \widehat{\text{OPT}} \left( \Delta (\mathcal{H}), \gamma \right)
\end{equation}
Similar to the previous section, we can restrict the set of variables to $\Delta (\mathcal{H} (S))$. Furthermore, by introducing a constraint for every group $g$, we can re-write the problem as
\[
\min_{p \in \Delta (\mathcal{H} (S))} \left\{ \hat{\ell} (p): \forall g, \  \hat{\ell}_g(p) \le \hat{\gamma} + \gamma \right\}
\]
Now for every constraint $g$, we introduce a dual variable $\lambda^g \in \reals_+$. The Lagrangian of the problem can then be written as:
\[
\mathcal{L} (p, \lambda) = \hat{\ell} (p) + \sum_g \lambda^g \left( \hat{\ell}_g(p) - \hat{\gamma} - \gamma\right)
\]
which is a linear function in both $p$ and $\lambda$. Applying the minmax theorem \citep{sion1958general}, we have that
\[
\min_{p \in \Delta (\mathcal{H} (S))} \max_{\lambda \in \mathbb{R}_+^G} \mathcal{L} (p, \lambda) = \max_{\lambda \in \mathbb{R}_+^G} \min_{p \in \Delta (\mathcal{H} (S))} \mathcal{L} (p, \lambda) =\widehat{\text{OPT}} \left( \Delta (\mathcal{H}), \gamma \right)
\]
Therefore, finding an optimal solution for the empirical problem~(\ref{eq:opthat}) reduces to solving the minmax problem with respect to the Lagrangian function. We follow a similar game-theoretic approach (no-regret dynamics) that we used in the previous section where we compute an approximately optimal minmax solution by simulating a two-player game between the primal player (the Learner) who controls $p$ and the dual player who controls $\lambda$. To guarantee convergence of the algorithm, however, we need to restrict the set of strategies for the dual player to the following \emph{bounded} set:
\[
\Lambda = \left\{ \lambda = (\lambda^1, \ldots, \lambda^G) \in \mathbb{R}_+^G: \Vert \lambda \Vert_1 \le B \right\}
\]
where $B$ will be chosen carefully in our algorithm. Note that because $\Lambda$ is convex and compact, the minmax theorem continues to hold:
\[
\min_{p \in \Delta (\mathcal{H} (S))} \max_{\lambda \in \Lambda} \mathcal{L} (p, \lambda) = \max_{\lambda \in \Lambda} \min_{p \in \Delta (\mathcal{H} (S))} \mathcal{L} (p, \lambda)
\]

In the next lemma, we show that a $\nu$-approximate equilibrium (Definition~\ref{def:nuapprox}) of the game can still have optimality guarantees for the learner even though we have restricted the space of the dual player.

\begin{restatable}{lemma}{opterror}[Equilibrium $\rightarrow$ Optimality]\label{lem:optimality2}
Suppose $(\hat{p}, \hat{\lambda})$ is a $\nu$-approximate equilibrium of the game described above. We have that:
\[
 \hat{\ell} (\hat{p}) \le \widehat{\text{OPT}} \left( \Delta (\mathcal{H}), \gamma \right) + 2 \nu
\]
and for any group $g$,
\[
\hat{\ell}_g (\hat{p}) \le \hat{\gamma} + \gamma + \frac{1 + 2 \nu}{B}
\]
\end{restatable}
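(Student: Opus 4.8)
The plan is to run the standard Lagrangian saddle-point argument (in the spirit of \citet{agarwal2018reductions}), adapted to the \emph{bounded} dual domain $\Lambda = \{\lambda \in \mathbb{R}_+^G : \Vert\lambda\Vert_1 \le B\}$. Write $\text{OPT}^\star \triangleq \widehat{\text{OPT}}(\Delta(\mathcal{H}), \gamma)$ for the value of the empirical constrained problem~(\ref{eq:opthat}), recalling that this value does not change when the minimization is restricted to $\Delta(\mathcal{H}(S))$, the actual strategy set of the primal player. The first preparatory step is the observation that for a fixed $p$, $\max_{\lambda \in \mathbb{R}_+^G}\mathcal{L}(p,\lambda)$ equals $\hat{\ell}(p)$ if $p$ satisfies every constraint $\hat{\ell}_g(p) \le \hat{\gamma} + \gamma$ and is $+\infty$ otherwise; hence $\min_p \max_{\lambda \in \mathbb{R}_+^G}\mathcal{L}(p,\lambda) = \text{OPT}^\star$, and since shrinking the dual domain can only decrease the inner maximum,
\[
\min_{p \in \Delta(\mathcal{H}(S))} \max_{\lambda \in \Lambda}\mathcal{L}(p,\lambda) \le \text{OPT}^\star .
\]
Two further facts will be used: $\text{OPT}^\star \le 1$, because the empirical problem is feasible --- the output $\tilde{p}$ of Algorithm~\ref{alg:general1} satisfies $\max_g \hat{\ell}_g(\tilde{p}) = \hat{\gamma} \le \hat{\gamma} + \gamma$ as $\gamma \ge 0$ --- and $\hat{\ell}$ is an error rate; and, for any $p$, $\max_{\lambda \in \Lambda}\sum_g \lambda^g\big(\hat{\ell}_g(p) - \hat{\gamma} - \gamma\big) = B\cdot\big(\max_g(\hat{\ell}_g(p) - \hat{\gamma} - \gamma)\big)_+$, since a linear functional over the convex hull of the origin and the points $\{B e_g\}_g$ is maximized at a vertex.

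Next I unpack the $\nu$-approximate equilibrium conditions of Definition~\ref{def:nuapprox} with payoff $U = \mathcal{L}$, the minimizing player controlling $p$ and the maximizing player controlling $\lambda$. The first condition gives $\mathcal{L}(\hat{p},\hat{\lambda}) \le \min_p \mathcal{L}(p,\hat{\lambda}) + \nu$; since $\mathcal{L}(p,\hat{\lambda}) \le \max_{\lambda \in \Lambda}\mathcal{L}(p,\lambda)$ pointwise in $p$, taking the min over $p$ and combining with the displayed inequality yields $\mathcal{L}(\hat{p},\hat{\lambda}) \le \text{OPT}^\star + \nu$. The second condition gives $\max_{\lambda \in \Lambda}\mathcal{L}(\hat{p},\lambda) \le \mathcal{L}(\hat{p},\hat{\lambda}) + \nu \le \text{OPT}^\star + 2\nu$. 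Substituting the closed form of the inner maximum,
\[
\hat{\ell}(\hat{p}) + B\cdot\Big(\max_g\big(\hat{\ell}_g(\hat{p}) - \hat{\gamma} - \gamma\big)\Big)_+ \le \text{OPT}^\star + 2\nu .
\]
Discarding the (nonnegative) penalty term gives $\hat{\ell}(\hat{p}) \le \text{OPT}^\star + 2\nu = \widehat{\text{OPT}}(\Delta(\mathcal{H}),\gamma) + 2\nu$, the first claim. Discarding instead the nonnegative term $\hat{\ell}(\hat{p})$ and using $\text{OPT}^\star \le 1$ gives $B\cdot(\max_g(\hat{\ell}_g(\hat{p}) - \hat{\gamma} - \gamma))_+ \le 1 + 2\nu$, i.e.\ $\hat{\ell}_g(\hat{p}) \le \hat{\gamma} + \gamma + \frac{1+2\nu}{B}$ for every $g$, the second claim.

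I do not expect a genuine obstacle here; the content is essentially bookkeeping. The only places requiring care are: getting the \emph{direction} of $\min_p\max_{\lambda\in\Lambda}\mathcal{L} \le \text{OPT}^\star$ right (it is precisely the restriction of the dual domain that makes this the useful inequality --- the reverse direction, which would force us to choose $B$ large, is not needed for this lemma); the feasibility check behind $\text{OPT}^\star \le 1$, which leans on the properties of the estimate $\hat{\gamma}$ computed by Algorithm~\ref{alg:general1}; and the vertex evaluation of $\max_{\lambda\in\Lambda}\sum_g\lambda^g c_g$. Note that the minmax theorem cited in the surrounding text is not actually invoked in this lemma's proof --- only the trivial pointwise bound $\mathcal{L}(p,\hat{\lambda}) \le \max_{\lambda\in\Lambda}\mathcal{L}(p,\lambda)$ is used.
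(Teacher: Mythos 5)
Your proof is correct and follows essentially the same route as the paper's: both arguments combine the two $\nu$-approximate-equilibrium conditions with the observation that the dual best response over $\Lambda$ is attained at $0$ or $Be_{g^\star}$ (giving the $B\left(\max_g(\hat{\ell}_g(\hat{p})-\hat{\gamma}-\gamma)\right)_+$ term), the bound $\mathcal{L}(p,\hat{\lambda})\le\hat{\ell}(p)$ for feasible $p$ (your $\min_p\max_{\lambda\in\Lambda}\mathcal{L}\le\text{OPT}^\star$ is the same fact), and the boundedness of error rates for the second claim. The only difference is cosmetic bookkeeping --- you bound $\max_{\lambda\in\Lambda}\mathcal{L}(\hat{p},\lambda)$ directly, whereas the paper sandwiches $\mathcal{L}(\hat{p},\hat{\lambda})$ between $\hat{\ell}(\hat{p})+B(\cdot)_+-\nu$ and $\hat{\ell}(p)+\nu$ --- yielding the identical conclusion.
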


Because of these optimality guarantees, if we find an approximate equilibrium of the game, the learner's strategy in that equilibrium will be an approximately optimal solution for the constrained optimization problem~(\ref{eq:opthat}). Similar to the previous section, we use the No-regret Dynamics to find an approximate equilibrium: In an iterative fashion, we let the learner best respond and the dual player use the Online Projected Gradient Descent algorithm \citep{zinkevich2003online} which is a no-regret learning algorithm. At round $t$, given $h_t$ of the learner, the gradient vector of the dual player is given by
\begin{equation}\label{eq:gradient}
\nabla_\lambda \mathcal{L} (h_t, \lambda ; \hat{\gamma} + \gamma) = \left[ \hat{\ell}_1(h_t) - \hat{\gamma} - \gamma, \, \hat{\ell}_2(h_t) - \hat{\gamma} - \gamma, \, \ldots, \hat{\ell}_G(h_t) - \hat{\gamma} - \gamma \right]^\top \in \mathbb{R}^G
\end{equation}
We note that at every round of the game, given the strategy of the dual player $\lambda \in \Lambda$, the primal player's best response strategy corresponds to solving:
\[
    \argmin_{p \in \Delta (\mathcal{H})}  \mathcal{L} (p, \lambda) = \argmin_{h \in \mathcal{H}}  \mathcal{L} (h, \lambda) = \hat{\ell} (h) + \sum_g \lambda^g \left( \hat{\ell}_g(h) - \hat{\gamma} - \gamma\right) 
\]
Note that $\hat{\ell} (h) = \sum_g (n_g / n) \cdot  \hat{\ell}_g(h)$. Therefore, this best response can be written as
\[
\argmin_{h \in \mathcal{H}} \sum_g \left( \lambda^g + \frac{n_g}{n} \right) \hat{\ell}_g(h)
\]
which can be solved with the learning oracle $\text{WERM}_\mathcal{H} \left(S, \{ \lambda^g + \frac{n_g}{n} \}_g \right)$. The final output of the algorithm is the time-averaged plays of the two players which will converge to the equilibrium of the game by Theorem~\ref{thm:noregret}. This algorithm is presented in Algorithm~\ref{alg:general2}. In this algorithm, $\text{Proj}_\Lambda (\cdot)$ is the $\ell_2$-projector operator that projects every point onto the set $\Lambda$. More precisely, for any $z \in \reals^G$,
\begin{equation}\label{eq:projop}
    \text{Proj}_\Lambda ( z ) \triangleq \argmin_{z' \in \Lambda} \Vert z' - z \Vert_2
\end{equation}

Theoretical guarantees of this algorithm are given in Theorem~\ref{thm:general2}.

\begin{algorithm}[t]
\SetAlgoNoLine
    \KwIn{Dataset $S = \{ (x_i, g_i, y_i)\}_{i=1}^n$, cost functions $\{ c_g \}_{g}$, desired fairness and error parameters $\gamma$ and $\epsilon$.}
        Set the number of iterations, the learning rate, and the upper bound on dual variables: \[T  = \left( \frac{4}{\epsilon} \left( \frac{8}{\epsilon^2} + G \right) \right)^2,  \ \eta_t = t^{-\frac{1}{2}}, \ B = \frac{4}{\epsilon}\]
        %Set the upper bound on dual variables $B = \frac{1}{\epsilon}$\;

        Run Algorithm~\ref{alg:general1} with input parameter $\epsilon$ to get $\tilde{p}$ and set: $\hat{\gamma} = \max_g \hat{\ell}_g (\tilde{p})$ \tcp*{Estimated minmax value}
        Initialize the dual player's strategy: $\lambda_0 = 0 \in \Lambda$\;
        \For{$t = 1, \ldots, T$}{
        Compute the group weights: $\forall g, \, w_g = \lambda_{t-1}^g + (n_g / n) $\;
        Solve $h_t = \argmin_{h \in \mathcal{H}}  \sum_g w_g \hat{\ell}_g(h)$ by calling the oracle $\text{WERM}_\mathcal{H} \left(S, \{ w_g \}_g \right)$ \tcp*{Best Response}
        Update: $\lambda_t = \text{Proj}_\Lambda \left( \lambda_{t-1} + \eta_t \nabla_\lambda \mathcal{L} \left(h_t, \lambda_{t-1} ; \hat{\gamma} + \gamma + \epsilon \right) \right) $ \tcp*{PGD update; see Eq.~(\ref{eq:gradient}) and (\ref{eq:projop})}
        }
        \KwOut{$\hat{p} = \frac{1}{T} \sum_t h_t$: the uniform distribution over $\{ h_1, \ldots, h_T\}$}
\caption{Minimax Fair Strategic Learning: General Costs, Objective II}
\label{alg:general2}
\end{algorithm}

\begin{restatable}{theorem}{objIIgen}[Guarantees of Algorithm~\ref{alg:general2}]\label{thm:general2}
    Fix a set of cost functions $\{ c_g \}_{g}$. Suppose $\mathcal{H}$ has finite strategic VC dimension: $d_\mathcal{H} = SVC \left( \Hs \right) < \infty$. There exists an algorithm (Algorithm~\ref{alg:general2}) such that given access to the oracle $\text{WERM}_\mathcal{H}$, for any data distribution $D$, and any $\epsilon, \gamma \ge 0$, makes $ O \left( G^2 \epsilon^{-6} \right)$ oracle calls, and for any $\delta$, with probability at least $1-\delta$ over the $i.i.d.$ draws of $S \sim D^n$, outputs a model $\hat{p} \in \Delta (\mathcal{H})$ such that
    \begin{itemize}
        \item Fairness: $\hat{p}$ satisfies $(\gamma + 3\epsilon)$-minimax fairness: $\max_g \ell_g (\hat{p}) \le \min_{p \in \Delta (\mathcal{H})} \max_g \ell_g (p) + \gamma + 3\epsilon$, and
        \item Accuracy: $\hat{p}$ satisfies $\ell (\hat{p}) \le {\text{OPT}} \left( \Delta (\mathcal{H}), \gamma \right) + \epsilon$.
    \end{itemize}
    provided that
    \[
   \min_{g \in \mathcal{G}} n_g = \Omega \left( \frac{ d_\mathcal{H} \log (n) + \log \left( G / \delta \right)}{\epsilon^2} \right)
    \]
\end{restatable}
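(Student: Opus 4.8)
The plan is to solve the empirical constrained problem~(\ref{eq:opthat}) to within $O(\epsilon)$ using the no-regret dynamics that Algorithm~\ref{alg:general2} implements, and then invoke uniform convergence to transfer both guarantees to $D$. For the empirical part I would first record what the preliminary call to Algorithm~\ref{alg:general1} buys: run with parameter $\epsilon$ (so $T_1=8\log G/\epsilon^2$), its output $\tilde p$ satisfies $\hat\gamma:=\max_g\hat\ell_g(\tilde p)\le\widehat V+\epsilon$, where $\widehat V:=\min_{p\in\Delta(\mathcal H)}\max_g\hat\ell_g(p)$; this follows from Lemma~\ref{lem:optimality} together with the standard Exponential Weights regret bound (the chosen $\eta$ gives per-round regret $O(\epsilon)$), and trivially $\hat\gamma\ge\widehat V$. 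Next, view the main loop as the no-regret dynamics of Section~\ref{subsec:noregret} for the bilinear game with payoff $\mathcal L(p,\lambda;\hat\gamma+\gamma+\epsilon)=\hat\ell(p)+\sum_g\lambda^g(\hat\ell_g(p)-\hat\gamma-\gamma-\epsilon)$ on $\Delta(\mathcal H(S))\times\Lambda$ with $\Lambda=\{\lambda\ge0:\|\lambda\|_1\le B\}$, $B=4/\epsilon$; Sauer's Lemma and Definition~\ref{def:svc} justify restricting the primal player to $\Delta(\mathcal H(S))$ with $|\mathcal H(S)|=O(n^{d_{\mathcal H}})$. The primal player plays an exact best response---a point mass on $\argmin_{h\in\mathcal H}\sum_g(\lambda^g+n_g/n)\hat\ell_g(h)$, since $\hat\ell(h)=\sum_g(n_g/n)\hat\ell_g(h)$, computed by one call to $\text{WERM}_\mathcal H$---so its regret is nonpositive; the dual player runs online projected gradient descent with $\eta_t=t^{-1/2}$ on the gradients of Eq.~(\ref{eq:gradient}), and the Zinkevich bound over a domain of $\ell_2$-diameter $\le 2B$ with $\ell_2$-gradient norm $O(\sqrt G)$ gives per-round regret $O((B^2+G)/\sqrt T)$, which with $T=(\tfrac4\epsilon(\tfrac8{\epsilon^2}+G))^2$ is $O(\epsilon)$. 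Hence by Theorem~\ref{thm:noregret} the time-averaged pair $(\hat p,\hat\lambda)$ is a $\nu$-approximate equilibrium with $\nu=O(\epsilon)$, and Lemma~\ref{lem:optimality2}---applied at the constraint level $\hat\gamma+\gamma+\epsilon$ actually used by the algorithm (i.e. with ``$\gamma$'' replaced by ``$\gamma+\epsilon$'' in its statement)---yields $\hat\ell(\hat p)\le\widehat{\text{OPT}}(\Delta(\mathcal H),\gamma+\epsilon)+2\nu$ and $\max_g\hat\ell_g(\hat p)\le\hat\gamma+\gamma+\epsilon+(1+2\nu)/B$. Because $B=4/\epsilon$ makes $(1+2\nu)/B=O(\epsilon)$ and $\hat\gamma\le\widehat V+\epsilon$, the second inequality is already an empirical $(\gamma+O(\epsilon))$-minimax fairness bound.

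\textbf{Generalization and oracle count.} The class $\mathcal F=\{f_h:h\in\mathcal H\}$ of Definition~\ref{def:svc} has VC dimension $d_{\mathcal H}$, so Theorem~\ref{thm:vc-gen}---applied once to $D$ and once to each $D_g$ with its group sample of size $n_g$, together with a union bound over the $G$ groups---gives, with probability $\ge1-\delta$, $|\hat\ell(h)-\ell(h)|\le\epsilon'$ and $|\hat\ell_g(h)-\ell_g(h)|\le\epsilon'$ for all $h\in\mathcal H$ and $g$, provided $\min_g n_g=\Omega((d_{\mathcal H}\log n+\log(G/\delta))/\epsilon'^2)$; averaging over $h\sim p$ extends this to every $p\in\Delta(\mathcal H)$. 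Taking $\epsilon'$ a small constant multiple of $\epsilon$, these deviations convert the empirical fairness bound into $\max_g\ell_g(\hat p)\le\min_{p}\max_g\ell_g(p)+\gamma+3\epsilon$ (using also $\widehat V\ge\min_p\max_g\ell_g(p)-\epsilon'$). For accuracy, let $p^\star$ be the optimizer of~(\ref{eq:II}); then $\max_g\hat\ell_g(p^\star)\le\min_p\max_g\ell_g(p)+\gamma+\epsilon'\le\hat\gamma+\gamma+2\epsilon'\le\hat\gamma+\gamma+\epsilon$, so $p^\star$ is feasible for~(\ref{eq:opthat}) at level $\gamma+\epsilon$, whence $\widehat{\text{OPT}}(\Delta(\mathcal H),\gamma+\epsilon)\le\hat\ell(p^\star)\le\text{OPT}(\Delta(\mathcal H),\gamma)+\epsilon'$; combining with the equilibrium bound, $\ell(\hat p)\le\hat\ell(\hat p)+\epsilon'\le\text{OPT}(\Delta(\mathcal H),\gamma)+2\nu+2\epsilon'\le\text{OPT}(\Delta(\mathcal H),\gamma)+\epsilon$. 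Finally, the total number of oracle calls is $T_1+T=O(\log G/\epsilon^2)+O(\epsilon^{-6}+G^2\epsilon^{-2})=O(G^2\epsilon^{-6})$.

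\textbf{Main obstacle.} The routine ingredients are the OGD and Exponential Weights regret bounds and the VC generalization bound. The delicate part is the bookkeeping of four separate error sources---the no-regret suboptimality $2\nu$, the $\hat\gamma$-estimation error from Algorithm~\ref{alg:general1}, the deliberate $+\epsilon$ relaxation of the constraint inside Algorithm~\ref{alg:general2}, and the uniform-convergence slack $\epsilon'$---and choosing the constants so they sum to exactly $(\gamma+3\epsilon)$ on the fairness side and $\epsilon$ on the accuracy side while keeping $B=\Theta(1/\epsilon)$ (so that $(1+2\nu)/B=O(\epsilon)$). In particular, the $+\epsilon$ constraint relaxation is load-bearing: without it the empirical floor $\hat\gamma$ can sit slightly above the true minmax value, pushing $p^\star$ out of the empirical feasible set and making $\widehat{\text{OPT}}(\Delta(\mathcal H),\gamma)$ strictly larger than $\text{OPT}(\Delta(\mathcal H),\gamma)$; verifying that Lemma~\ref{lem:optimality2} still applies at the shifted constraint level and that feasibility transfers in both directions is where the real care is needed.
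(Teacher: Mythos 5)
Your proposal is correct and follows essentially the same route as the paper's proof: exact best response for the primal player and online projected gradient descent for the dual player yielding a $\nu=O(\epsilon)$-approximate equilibrium, Lemma~\ref{lem:optimality2} applied at the relaxed constraint level $\hat\gamma+\gamma+\epsilon$, a feasibility-transfer argument giving $\widehat{\text{OPT}}\left(\Delta(\mathcal{H}),\gamma+\epsilon\right)\le\text{OPT}\left(\Delta(\mathcal{H}),\gamma\right)+O(\epsilon)$, and uniform convergence for the strategic loss class with a union bound over groups. The only quibble is that in the fairness step you cite $\widehat V\ge\min_{p}\max_g\ell_g(p)-\epsilon'$ when that step actually needs the reverse direction $\widehat V\le\min_{p}\max_g\ell_g(p)+\epsilon'$ (the paper instead routes this bound through the distributional guarantee of Theorem~\ref{thm:general1} on $\tilde p$); both directions follow from the same uniform convergence, so this is cosmetic rather than a gap.
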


\section{Experiments}
In this section, we provide empirical evaluations of our algorithms in section~\ref{sec:gen} for minimax optimization in a strategic environment, comparing with both a na\"ive strategic and a non-strategic baseline. Here is a brief summary of our empirical findings and contributions:
\begin{itemize}
    \item \textbf{Algorithm~\ref{alg:general1}}: While the na\"ive strategic baseline is roughly as effective as our strategic-aware algorithms when all agents have the same manipulation budget, the gap in performance between these two approaches increases significantly as the disparity in manipulation budgets among groups grows.
    \item \textbf{Algorithm~\ref{alg:general2}}: When considering the accuracy vs. fairness tradeoffs, our approach typically outperforms the other two, often Pareto-dominating both the non-strategic and strategic baseline. We found that with the upper bounds on maximal group error chosen, however, the Pareto curves for each approach were fairly limited in range.
\end{itemize}
\subsection{Experiment Methodology and Data}
\paragraph*{Datasets.} 
First we provide a brief description of the datasets we used in our experiments.
\begin{itemize}[leftmargin=*]
    \item {\bf Communities and Crimes.} This dataset contains socio-economic data from the 1990 US Census, law enforcement data from the 1990 US LEMAS survey, and crime data from the 1990 US FBI CUR~\citep{communities_and_crime_183}.
    \item {\bf COMPAS.} This dataset contains arrest data from Broward County, Florida, originally compiled by ProPublica~\citep{compas}.
    \item {\bf Credit.} This dataset, known as the German Credit Data, is used for predicting good and bad credit risks~\citep{statlog_(german_credit_data)_144}.
    \item {\bf Heart.} This dataset contains the medical records of 299 patients who experienced heart failure, collected during their follow-up period, with each patient profile containing 13 clinical features~\citep{heart_failure_clinical_records_519}.
\end{itemize}
Table~\ref{tab:datasets-info} summarizes the datasets used in our experiments. For each dataset, categorical features were transformed into one-hot encoded vectors, and group labels (which are based on sensitive attributes) were excluded from the usable features for the final models.
\begin{table}[!ht]
\centering
{\renewcommand{\arraystretch}{1}%
    %\tiny
    \fontsize{6pt}{10pt}\selectfont
    \caption{Specifications of the datasets used in our experiments.}
    \begin{tabular}{l|c|c|c|c}%\toprule
        {\bf Datasets} & {\bf Size ($n$)} & {\bf Dim. ($d$)} & {\bf Sensitive Attributes} 
        & {\bf Group Sizes} \\
        \midrule\midrule
        Communities & 1994 & 119 & race 
        & (1572, 219, 88, 115)\\
        \midrule
        COMPAS & 7164 & 5 & race 
        & (377, 3696, 2454, 637)\\
        \midrule
        Credit & 1000 & 19 & sex \& marital status 
        & (548, 310, 50, 92) \\
        \midrule
        Heart & 299 & 11 & sex &
        (194, 105) \\
        %\bottomrule
    \end{tabular}
    \label{tab:datasets-info}
}
\end{table}

We remark that in all our experiments, similar to our theory, we assume that the dataset represents agents prior to any strategic manipulation; in other words, it captures their true feature vectors. This is a common assumption in strategic classification; for example, see~\cite{hardt2016strategic}.

\paragraph*{Hypothesis Class, Loss Function, and Oracle.}
We work with linear threshold functions, i.e., we take $\Hs$ to be the set of all classifiers that have the form: $h(x) = \mathds{1}[w^\top x + b \ge 0]$ for some $w \in \reals^d$ and $b \in \reals$. In all experiments, we consider the classification task with a 
$0/1$ loss function. In theory, our algorithms assume access to a weighted empirical risk minimization oracle (Definition~\ref{def:oracle}) that can efficiently find a model from our class $\Hs$ to minimize the learner's loss at each round. In practice, we do not have access to such an oracle and therefore need to employ a heuristic. As our heuristic at each round, we take the Paired Regression Classifier (PRC) Heuristic, used previously in non-strategic settings~\citep{gerrymandering,diana2021minimax,agarwal2018reductions}, and modify it to account for strategic manipulations of the agents. The modified PRC heuristic (\emph{Robust PRC}) that we work with takes as input a ``shift'' parameter $\rho$, runs the original PRC to get a classifier $h$, and shifts the learned linear classifier $h$ by $\rho$ towards the positive region of the classifier to make it robust to strategic manipulations. 
Formally, after shifting the classifier $h(x) = \mathds{1}[w^\top x + b \ge 0]$ by $\rho$, the resulting classifier is  $h'(x) \triangleq \mathds{1}[w^\top x + b' \geq 0]$, with $b' = b - \rho \|w\|_2$.
More information on this heuristic is provided in Appendix~\ref{app:exp}.

\paragraph*{Agent's Cost Function.}
We consider cost functions that are defined as follows: for every group $g$,
$
c_g (x, z) = \|x - z\|_2 / \tau_g
$.
Here, $\tau_g$ is a parameter which corresponds to the ``manipulation budget'' of members of group $g$: higher values of $\tau_g$ means that members of group $g$ have higher manipulation power.
Given the cost functions and a classifier $h$, if an agent $(x,g)$ satisfies $h(x) = 1$, then she does not manipulate. However, if $h(x) = 0$, she will manipulate if and only if there exists some $z$ such that $h(z) = 1$ and $\| x - z \|_2 \le \tau_g$. More precisely, when $h(x) = 0$, agent $(x, g)$ manipulates to $z = x + \tau_g \cdot \frac{w}{\|w\|_2}$ if and only if $h(z) = 1$.

\subsection{Baselines and our Method}
In our experiments, we divide the process into training (\texttt{Tr}) and testing (\texttt{Ts}) phases. We randomly select $30\%$ of the dataset as the test set, ensuring consistency by using the same random seed throughout. We compare our algorithms with the following baselines.

    \begin{itemize}[leftmargin=*]     
    \item{\bf Non-Strategic Learner:} This baseline completely ignores the strategic manipulations of agents during the \texttt{Tr} phase. Specifically, in the \texttt{Tr} phase, the learner assumes that agents are non-strategic and uses the minimax optimization algorithms of \cite{diana2021minimax} to learn a model $p \in \Delta(\Hs)$. We emphasize that during the learning of $p$ in the \texttt{Tr} phase, all error and weight updates are calculated under the assumption that agents do not manipulate. Then, in the \texttt{Ts} phase, the learned model $p$ is deployed without any modifications, and its error is computed with respect to \emph{strategic} agents, who manipulate based on their assigned manipulation budget.
    
    \item{\bf Na\"ive Strategic Learner:} This baseline can be seen as a simple {\em post-processing} approach to account for the strategic behavior of agents. Similar to the previous baseline, the same model $p$ is learned using the algorithms of \cite{diana2021minimax} during the \texttt{Tr} phase. However, during the \texttt{Ts} phase, the learner adjusts every classifier in the support of the distribution $p$ (uniform over some $\{h_1, \ldots, h_T\}$) by shifting each $h_t$ by $\tau_{avg}$ units towards the positive region of $h_t$, where $\tau_{avg}$ represents the average manipulation budget across all individuals in the data.   
\end{itemize}

In contrast, our approach can be seen as an {\em in-processing} method that accounts for the strategic manipulations of agents in the \texttt{Tr} phase. More precisely, it is defined as follows:
\begin{itemize}[leftmargin=*]
    \item{\bf Ours:} In the \texttt{Tr} phase, we deploy Algorithms~\ref{alg:general1} and \ref{alg:general2}, equipped with the Robust PRC heuristic for the learning oracle to learn model $p_o \in \Delta(\Hs)$. When all groups have the same manipulation budgets, $\forall g: \, \tau_g = \tau$, our algorithms choose $\rho = \tau$ for the shift parameter of the heuristic. However, when groups have different manipulation budgets, in each round of the algorithms, our algorithms perform a grid search in $[0, \max_{g\in\mathcal{G}} \tau_g]$ to choose the shift parameter $\rho$ that gives the lowest maximum group error rate. Then, in the \texttt{Ts} phase, $p_o$ is deployed, and the error is computed with respect to strategic agents, who manipulate based on their assigned manipulation budget.
\end{itemize}

\subsection{Equal Manipulation Budgets for All Agents}
In this section and Section~\ref{sec:diff-group-budget}, we examine the performance of Algorithm~\ref{alg:general1}, which computes a minimax fair model, and compare it to the two previously described baselines with respect to the minimax objective.

First, we consider the setting where all agents have the same budget $\tau$. In particular, we examine the performance of the three described methods across all four datasets for values of $0 \le \tau \le 6$ in increments of $0.5$. The performance plots, showing the min-max objectives of the three methods, are presented in Fig.~\ref{fig:equal-budget}.

\begin{figure*}[!ht]
    \centering
    \subfigure[]{
        %Max group error on COMPAS.
        \includegraphics[width=0.32\textwidth]{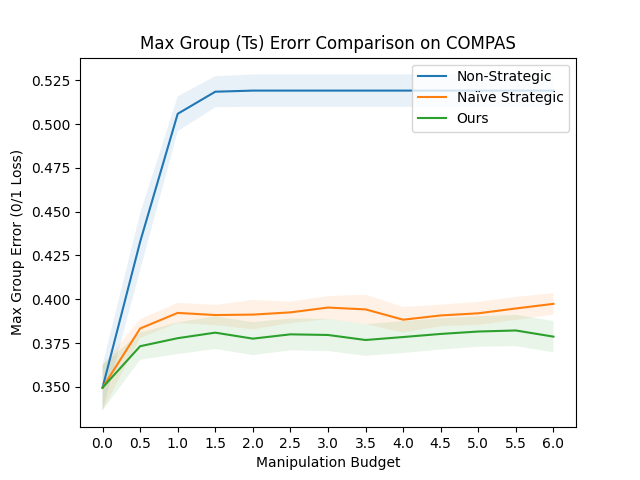}
        \label{fig:COMPAS_equal}
    }
    \quad \quad
    \subfigure[]{
        %Max group error on Credit.
        \includegraphics[width=0.32\textwidth]{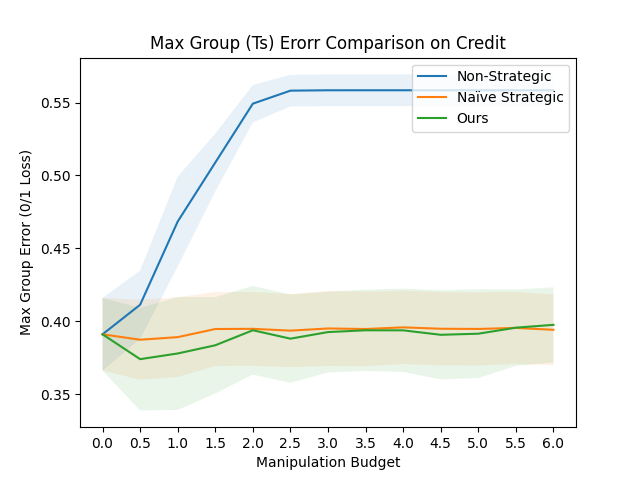}
        \label{fig:credit_equal}
    }
    \quad \quad
    \subfigure[]{
        %Max group error on Communities.
        \includegraphics[width=0.32\textwidth]{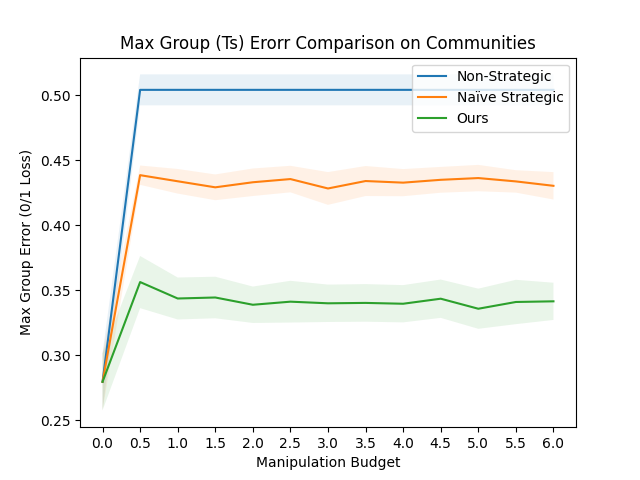}
        \label{fig:communities_equal}
    }
    \quad \quad
    \subfigure[]{
        %Max group error on Heart.
        \includegraphics[width=0.32\textwidth]{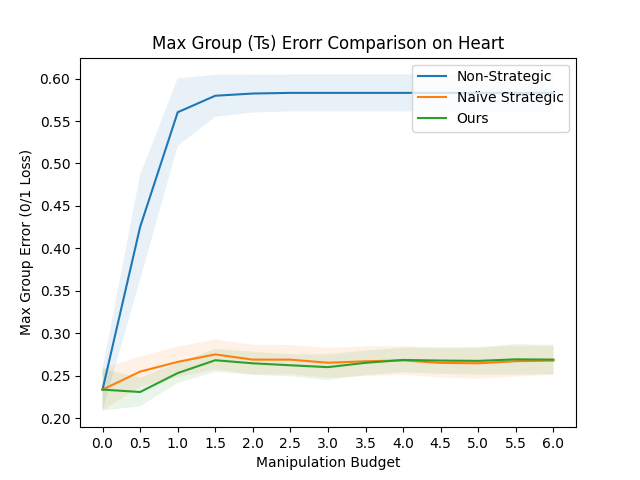}
        \label{fig:heart_equal}
    }   
    \caption{The max group error of the two baselines and our method, in the test time ({\tt Ts}), on the (a) COMPAS, (b) Credit, (c) Communities, and (d) Heart datasets with an equal budget $\tau$ for all agents.}
    \label{fig:equal-budget}
\end{figure*}

Our empirical evaluations show that the non-strategic approach performs the worst. In particular, this method performs, on average, $20\%$ worse on the COMAPS, Credit, and Communities datasets compared to the other two. Moreover, on the Heart dataset, this method performs, on average, $50\%$ worse.  

Furthermore, we observe that in the case of the same budget across groups, while our method has the best performance, its advantage over the na\"ive strategic approach is not significant---except on the Communities dataset.

Finally, we note that our theoretical results establish the convergence of our algorithms when the learning oracle $\text{WERM}_\Hs$ is used. However, in practice, we use the robust PRC heuristic instead of the perfect learning oracle, and therefore, convergence is not implied by the theory and must be examined experimentally. We present the convergence of our algorithm in Fig.~\ref{fig:convergence-equal} for $\tau = 1$ across all four datasets. The plots show that our algorithm have converged across all datasets despite using a heuristic instead of a perfect learning oracle.

\begin{figure*}[!ht]
    \centering
    \subfigure[]{
        %Convergence of group errors on COMPAS
        \includegraphics[width=0.32\textwidth]{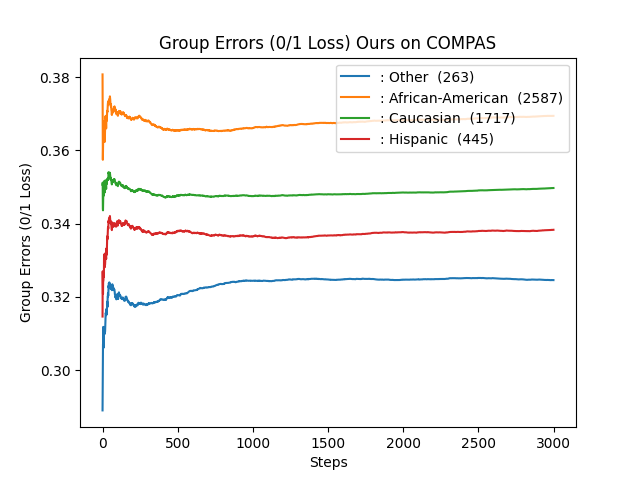}
        \label{fig:COMPAS_equal_convg}
    }
    \quad \quad
    \subfigure[]{
        %Convergence of group errors on Credit
        \includegraphics[width=0.32\textwidth]{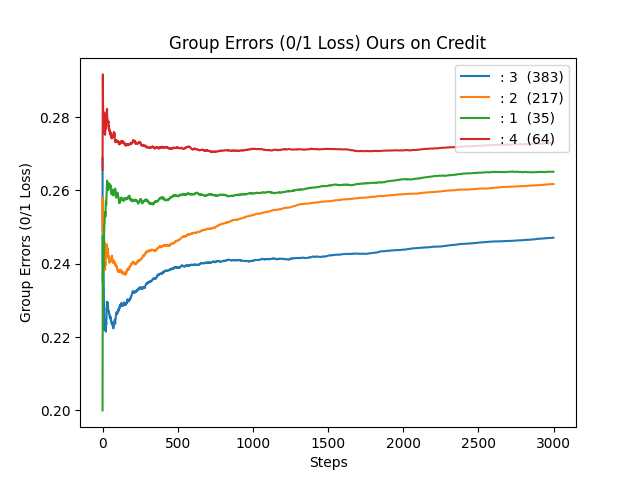}
        \label{fig:credit_equal_convg}
    }
    \quad \quad
    \subfigure[]{
        %Convergence of group errors on Communities
        \includegraphics[width=0.32\textwidth]{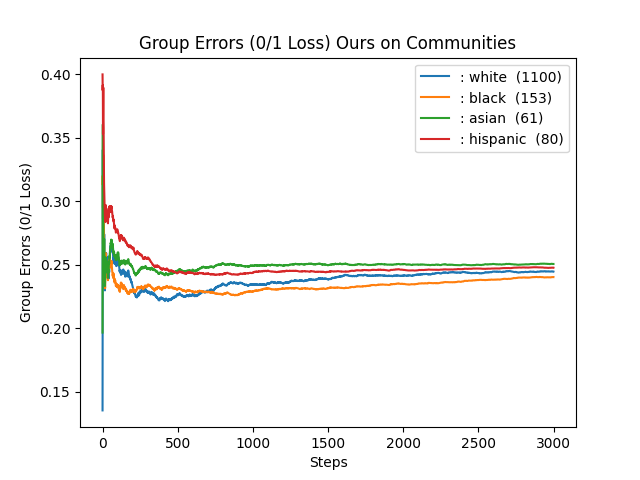}
        \label{fig:communities_equal_convg}
    }
    \quad \quad
    \subfigure[]{
        %Convergence of group errors on Heart
        \includegraphics[width=0.32\textwidth]{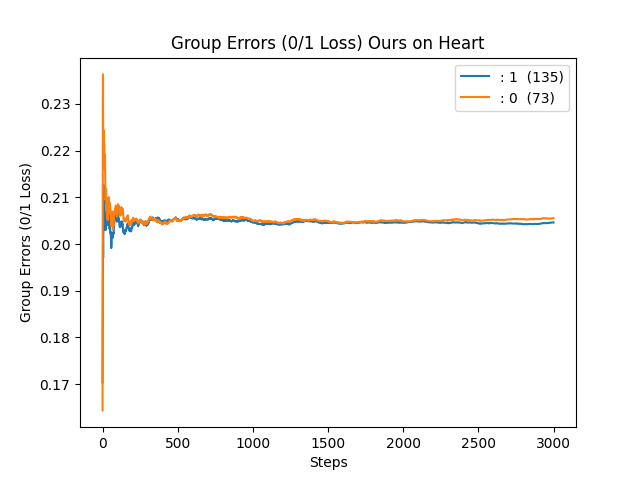}
        \label{fig:heart_equal_convg}
    }    \caption{The plots show the convergence of our algorithm during the training ({\tt Tr}) phase across (a) COMPAS, (b) Credit, (c) Communities, and (d) Heart, with $\tau = 1$, and the number of iterations is $T = 3000$.}
    \label{fig:convergence-equal}
\end{figure*}

\subsection{Different Manipulation Budgets Across Agent Groups}\label{sec:diff-group-budget}
Next, we consider varying manipulation budgets for each group of agents, denoted as the manipulation profile $\boldsymbol{\tau}$. We define the manipulation budget profile as $\boldsymbol{\tau} = (\tau_1, \cdots, \tau_G)$. 
More precisely, in our experiments, $\boldsymbol{\tau_f} = \tau \cdot \boldsymbol{f}$, where $\boldsymbol{f} \in [0,1]^G$ represents the fraction of the budget $\tau$ assigned to each group. For example, when $\tau = 2$ and $\boldsymbol{f} = (0.25, 1, 0, 0.5)$, the manipulation budget profile becomes $(0.5, 2, 0, 1)$, corresponding to the manipulation budgets for agents in groups indexed by 0, 1, 2, and 3, respectively.

In Fig.~\ref{fig:Heart_multi_group} to~\ref{fig:COMPAS_multi_group}, we present the performance of the baselines and our approach on the Heart, Credit, and COMPAS datasets. As observed in Fig.~\ref{fig:equal-budget}, on these datasets, our method and the na\"ive strategic baseline show similar performance when the manipulation budget is the same for all agents. However, when different groups have different manipulation budgets, we observe a clear dominance of our approach over the na\"ive strategic baseline, specifically as the value of $\tau$ increases. 

Across all datasets, in the first three plots, (e.g., in~\ref{fig:heart_(1,1)},~\ref{fig:heart_(1,0)},~\ref{fig:heart_(1,0.5)} for Heart), the manipulation budget profile is aligned with the group size, meaning that groups with a larger size are assigned a higher manipulation budget. This scenario corresponds to settings where minorities have a lower manipulation budget.
Additionally, we examine the reverse setting, where minority groups receive a higher manipulation budget (e.g., Fig.~\ref{fig:heart_(0.5,1)} for Heart). In this case, we observe that our approach remains robust, while the two baselines perform significantly worse for the minmax objective. Specifically, the na\"ive strategic baseline performs approximately $\times 2.17$, $\times 1.42$, and $\times1.17$ worse, as the value of $\tau$ increases to its max value, on the Heart, Credit, and COMPAS datasets, respectively, compared to our approach. 

In all plots, empirical confidence intervals are computed by repeating trials eight times with different seeds. The solid lines show the empirical mean of these runs for each budget and learner, and the shaded zone indicates a band of 95\% empirical confidence based on the standard deviation of the runs. 

\begin{figure*}[!ht]
    \centering
    \subfigure[$\boldsymbol{f}=(1,1)$.]{
        \includegraphics[width=0.32\textwidth]{Figures/Heart_MaxGroupError.png}
        \label{fig:heart_(1,1)}
    }
    \quad \quad
    \subfigure[$\boldsymbol{f}=(1,0)$]{
        \includegraphics[width=0.32\textwidth]{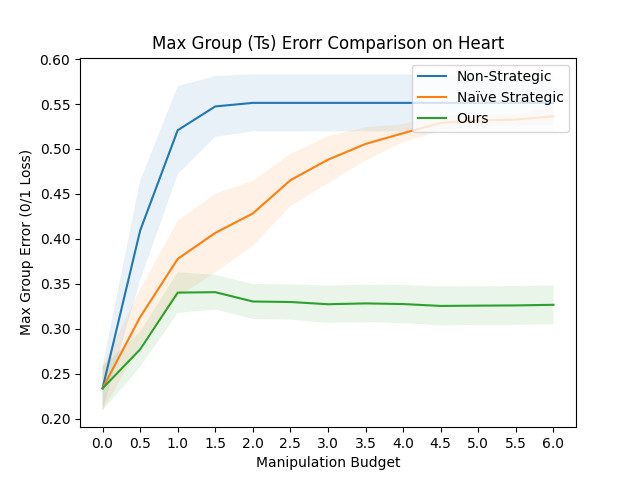}
        \label{fig:heart_(1,0)}
    }
    \quad \quad
    \subfigure[$\boldsymbol{f}=(1, 0.5)$]{
        \includegraphics[width=0.32\textwidth]{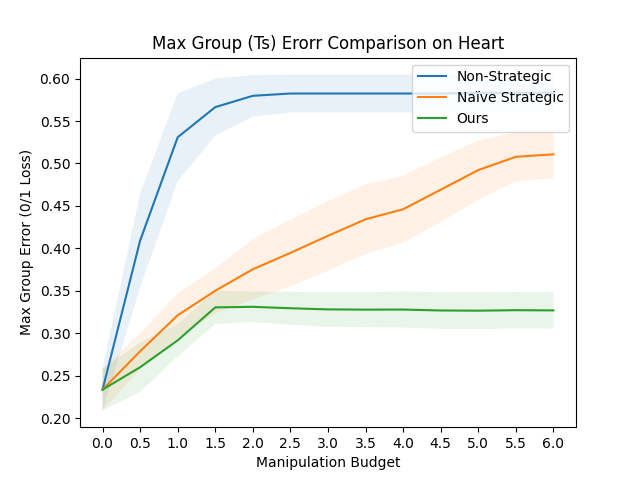}
        \label{fig:heart_(1,0.5)}
    }
    \quad \quad
    \subfigure[$\boldsymbol{f}=(0.5, 1)$]{
        \includegraphics[width=0.32\textwidth]{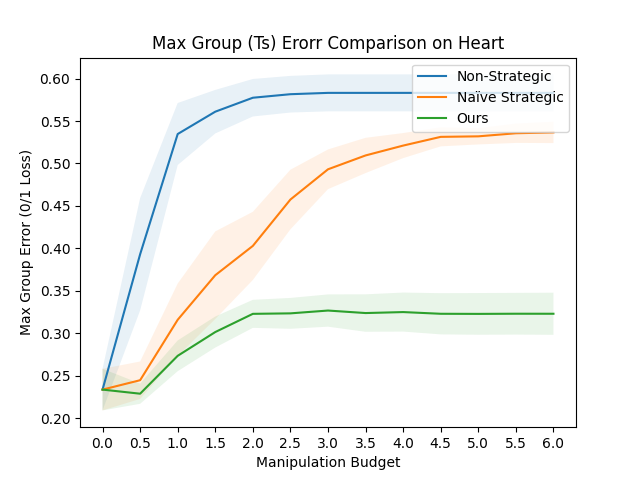}
        \label{fig:heart_(0.5,1)}
    }
    \caption{The performance of the baselines and our approach on the Heart dataset across different manipulation budget profiles for groups.}
    \label{fig:Heart_multi_group}
\end{figure*}

%\paragraph{Credits.}

\begin{figure*}[!ht]
    \centering
    \subfigure[$\boldsymbol{f} = (1,1,1,1)$]{
        %Max group error for $\boldsymbol{\tau} = (1,1,1,1)$.
        \includegraphics[width=0.32\textwidth]{Figures/Credit_MaxGroupError.png}
        \label{fig:credit_(1,1,1,1)}
    }
    \quad \quad
    \subfigure[$\boldsymbol{f} = (1, 1, 0, 0)$]{
        %Max group error for $\boldsymbol{\tau} = (1, 1, 0, 0)$.
        \includegraphics[width=0.32\textwidth]{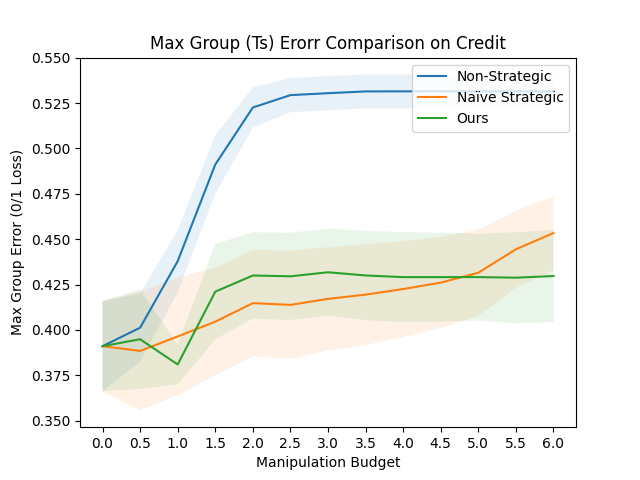}
        \label{fig:credit_(1, 1, 0, 0)_profile)}
    }
    \quad \quad
    \subfigure[$\boldsymbol{f} = (1, 0.5, 0, 0)$]{
        %Max group error for $\boldsymbol{\tau} = (1, 0.5, 0, 0)$.
        \includegraphics[width=0.32\textwidth]{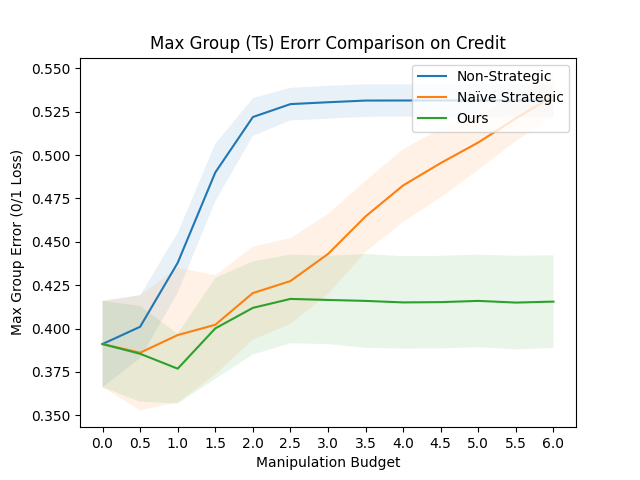}
        \label{fig:credit_(1, 0.5, 0, 0)_profile}
    }
    \quad \quad
    \subfigure[$\boldsymbol{f} = (0.25, 0.5, 1, 1)$]{
        %Max group error for $\boldsymbol{\tau} = (1, 1, 0.1, 0.1)$.
        \includegraphics[width=0.32\textwidth]{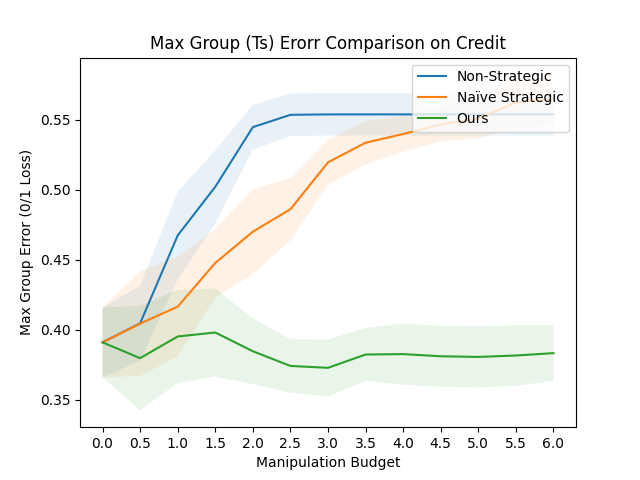}
        \label{fig:credit_(0.25,0.5,1,1)}
    }    
    \caption{The performance of the baselines and our approach on the Credit dataset across different values of manipulation budgets for groups.}
    \label{fig:Credit_multi_group}
\end{figure*}

%\paragraph{COMPAS.}
\begin{figure*}[!ht]
    \centering
    \subfigure[$\boldsymbol{f} = (1,1,1,1)$.]{
        %Max group error for $\boldsymbol{\tau} = (1,1,1,1)$.
        \includegraphics[width=0.32\textwidth]{Figures/COMPAS_MaxGroupError.png}
        \label{fig:COMPAS_(1,1,1,1)}
    }
    \quad \quad
    \subfigure[$\boldsymbol{f} = (0,1,0.75,0)$.]{
        %Max group error for $\boldsymbol{\tau} = (0,1,0.75,0)$.
        \includegraphics[width=0.32\textwidth]{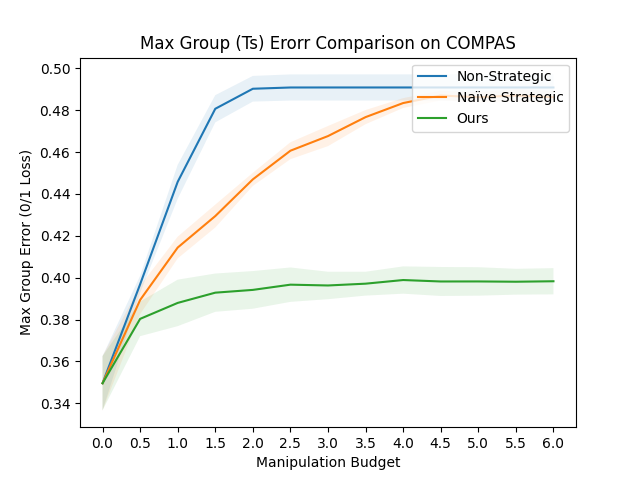}
        \label{fig:COMPAS_(0,1,0.75,0)}
    }
    \quad \quad
    \subfigure[$\boldsymbol{f} = (0.25, 1, 1, 0.25)$.]{
        %Max group error for $\boldsymbol{\tau} = (0.25, 1, 1, 0.25)$.
        \includegraphics[width=0.32\textwidth]{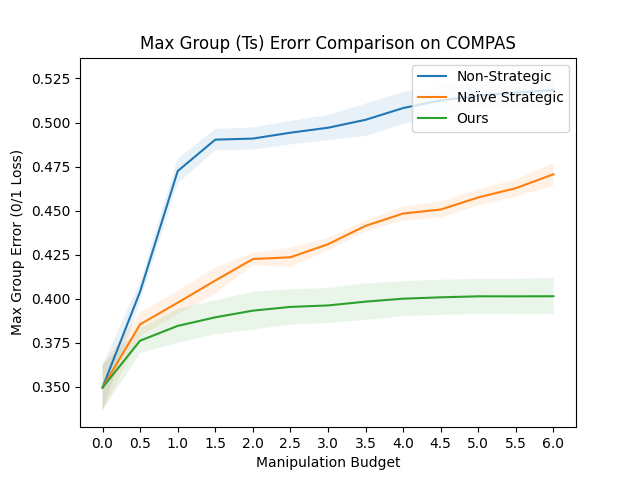}
        \label{fig:COMPAS_(1, 0.5, 0, 0)}
    }
    \quad \quad
    \subfigure[$\boldsymbol{f} = (1, 0.5, 0.5, 1)$.]{
        %Max group error for $\boldsymbol{\tau} = (1, 1, 0, 0)$.
        \includegraphics[width=0.32\textwidth]{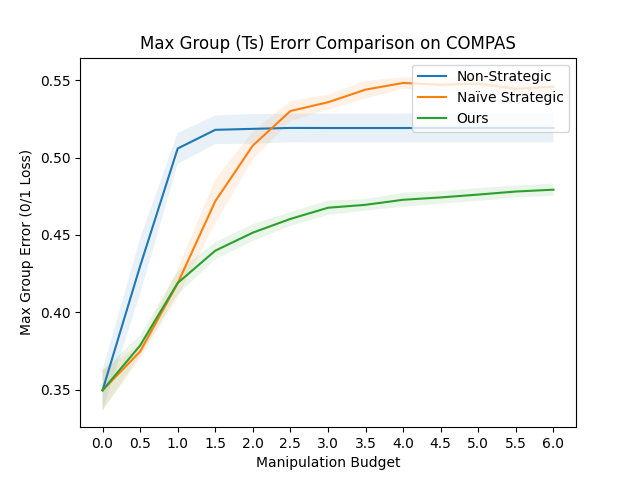}
        \label{fig:COMAPS_(1,0.5,0.5,1)}
    }

    \caption{The performance of the baselines and our approach on the COMPAS dataset across different values of manipulation budgets for groups.}
    \label{fig:COMPAS_multi_group}
\end{figure*}

\clearpage
\subsection{Experiments with Algorithm~\ref{alg:general2}}
Finally, we experiment with Algorithm~\ref{alg:general2} and visualize the trade-off between maximum group error and overall population error. For each experiment, we fix a dataset and group manipulation budgets, and then we run Algorithm 3 with 20 equally spaced values of $\gamma$ between 0 and 0.5 - $\hat{\gamma}$, thus allowing our maximal group error to potentially range from the minimax value $\hat{\gamma}$ to 0.5. For each value of $\gamma$, we run eight trials and compute the average across the eight trials for both the observed population error and observed maximal group error. We repeat this process for the two baselines. Rather than plotting the average results directly, we plot the Pareto curves to better visualize the trade-off between overall accuracy and fairness. 

We visualize one of these trade-off curves for each data set in Figure~\ref{fig:pareto}, below. Figure~\ref{fig:pareto_comm} shows trade-off curves on the Communities data set for each learner type with a manipulation budget of $\tau = 1$ and equal group profiles. Figure~\ref{fig:pareto_credit} shows a manipulation budget of 3 on the Credit data set with group budget profiles $(1, 0.5, 0, 0)$. Figure~\ref{fig:pareto_compas} shows several different models for each learner with an overall budget of 0.5 and group profiles $(0.25, 1, 1, 0.25)$, and in Figure~\ref{fig:pareto_heart} we observe a manipulation budget of 0.5 with group profiles $(0.5, 1)$ on the Heart data set. In this one, both strategic methods are fairly close together but clearly dominate the non-strategic approach. For Figures~\ref{fig:pareto_compas} and Figures~\ref{fig:pareto_credit} our method Pareto dominates the other two. However, it is worth noting that we do not observe a substantial range in models found for each learner in any of these settings.

\begin{figure*}[!ht]
    \centering
    \subfigure[$\boldsymbol{f} = (1,1,1,1)$.]{
        %Max group error on COMPAS.
        \includegraphics[width=0.3\textwidth]{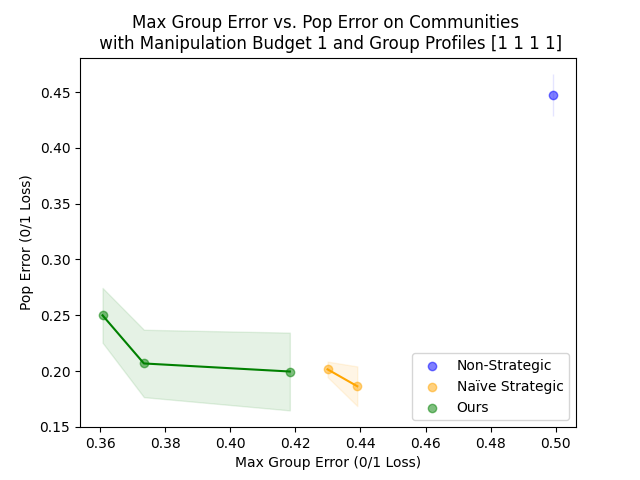}
        \label{fig:pareto_comm}
    }
    \quad \quad
    \subfigure[$\boldsymbol{f} = (1, 0.5, 0, 0)$]{
        %Max group error on Credit.
        \includegraphics[width=0.3\textwidth]{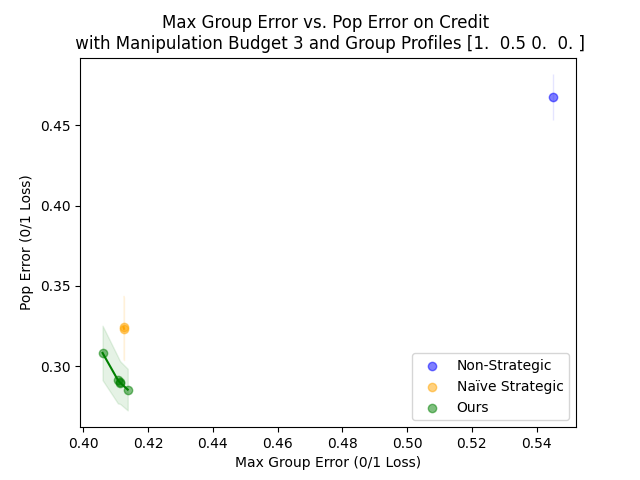}
        \label{fig:pareto_credit}
    }
    \quad \quad
    \subfigure[$\boldsymbol{f} = (0.25, 1, 1, 0.25)$.]{
        %Max group error on Communities.
        \includegraphics[width=0.3\textwidth]{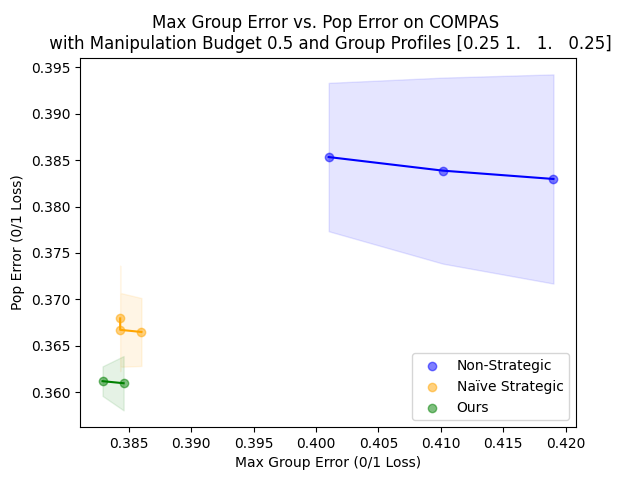}
        \label{fig:pareto_compas}
    }
    \quad \quad
    \subfigure[$\boldsymbol{f}=(0.5, 1)$]{
        %Max group error on Heart.
        \includegraphics[width=0.3\textwidth]{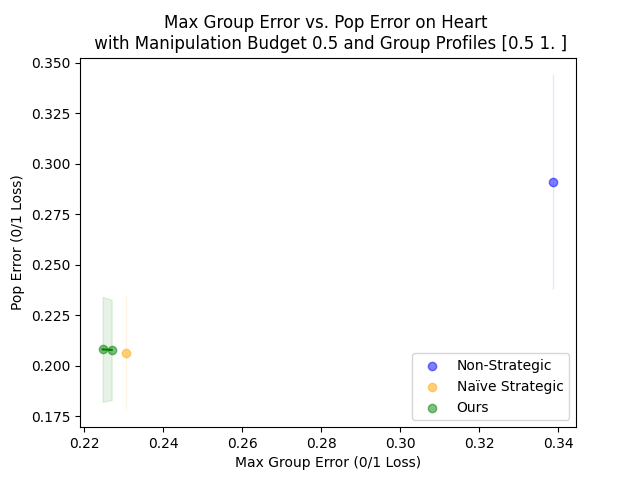}
        \label{fig:pareto_heart}
    }   
    \caption{The performance of baselines and our approach, using Algorithm~\ref{alg:general2}, with respect to a range of upper bounds on maximal group error on (a) Communities, (b) Credit, (c) COMPAS, and (d) Heart datasets.}
    \label{fig:pareto}
\end{figure*}

\section{Conclusion and Discussion}
In this paper, we have presented an algorithmic approach with provable guarantees and empirical backing to achieve minimax group fairness in certain strategic classification settings. We consider both the case where the cost functions are separable and the more involved case that relaxes this assumption. Our experiments present a heuristic that may be used in place of the learning oracle that our algorithms assume, and for the most part, we demonstrate this heuristic to be effective. Further work in this realm could extend our approach to handle other types of errors, such as false positives or real-valued losses, and our approach could also be adapted to handle intersectional group membership.

\bibliographystyle{plainnat}
\bibliography{arXiv-main}

\begin{thebibliography}{38}
\providecommand{\natexlab}[1]{#1}
\providecommand{\url}[1]{\texttt{#1}}
\expandafter\ifx\csname urlstyle\endcsname\relax
  \providecommand{\doi}[1]{doi: #1}\else
  \providecommand{\doi}{doi: \begingroup \urlstyle{rm}\Url}\fi

\bibitem[hea(2020)]{heart_failure_clinical_records_519}
{Heart Failure Clinical Records}.
\newblock UCI Machine Learning Repository, 2020.
\newblock {DOI}: https://doi.org/10.24432/C5Z89R.

\bibitem[Abbasi et~al.(2021)Abbasi, Bhaskara, and Venkatasubramanian]{abbasi2021fair}
Mohsen Abbasi, Aditya Bhaskara, and Suresh Venkatasubramanian.
\newblock Fair clustering via equitable group representations.
\newblock In \emph{Proceedings of the 2021 ACM conference on fairness, accountability, and transparency}, pages 504--514, 2021.

\bibitem[Agarwal et~al.(2018)Agarwal, Beygelzimer, Dud{\'\i}k, Langford, and Wallach]{agarwal2018reductions}
Alekh Agarwal, Alina Beygelzimer, Miroslav Dud{\'\i}k, John Langford, and Hanna Wallach.
\newblock A reductions approach to fair classification.
\newblock In \emph{International conference on machine learning}, pages 60--69, 2018.

\bibitem[Ahmadi et~al.(2021)Ahmadi, Beyhaghi, Blum, and Naggita]{ahmadi2021strategic}
Saba Ahmadi, Hedyeh Beyhaghi, Avrim Blum, and Keziah Naggita.
\newblock The strategic perceptron.
\newblock In \emph{Proceedings of the 22nd ACM Conference on Economics and Computation}, pages 6--25, 2021.

\bibitem[Ahmadi et~al.(2023)Ahmadi, Blum, and Yang]{ahmadi2023fundamental}
Saba Ahmadi, Avrim Blum, and Kunhe Yang.
\newblock Fundamental bounds on online strategic classification.
\newblock In \emph{Proceedings of the 24th ACM Conference on Economics and Computation}, pages 22--58, 2023.

\bibitem[Asadpour and Saberi(2007)]{asadpour2007approximation}
Arash Asadpour and Amin Saberi.
\newblock An approximation algorithm for max-min fair allocation of indivisible goods.
\newblock In \emph{Proceedings of the thirty-ninth annual ACM symposium on Theory of computing}, pages 114--121, 2007.

\bibitem[Braverman and Garg(2020)]{braverman2020role}
Mark Braverman and Sumegha Garg.
\newblock The role of randomness and noise in strategic classification.
\newblock \emph{arXiv preprint arXiv:2005.08377}, 2020.

\bibitem[Br{\"u}ckner and Scheffer(2011)]{bruckner2011stackelberg}
Michael Br{\"u}ckner and Tobias Scheffer.
\newblock Stackelberg games for adversarial prediction problems.
\newblock In \emph{Proceedings of the 17th ACM SIGKDD international conference on Knowledge discovery and data mining}, pages 547--555, 2011.

\bibitem[Cesa-Bianchi and Lugosi(1999)]{cesa1999prediction}
Nicolo Cesa-Bianchi and G{\'a}bor Lugosi.
\newblock On prediction of individual sequences.
\newblock \emph{The Annals of Statistics}, 27\penalty0 (6):\penalty0 1865--1895, 1999.

\bibitem[Chen et~al.(2020)Chen, Liu, and Podimata]{chen2020learning}
Yiling Chen, Yang Liu, and Chara Podimata.
\newblock Learning strategy-aware linear classifiers.
\newblock \emph{Advances in Neural Information Processing Systems}, 33:\penalty0 15265--15276, 2020.

\bibitem[Cohen et~al.(2024)Cohen, Mansour, Moran, and Shao]{cohen2024learnability}
Lee Cohen, Yishay Mansour, Shay Moran, and Han Shao.
\newblock Learnability gaps of strategic classification.
\newblock \emph{arXiv preprint arXiv:2402.19303}, 2024.

\bibitem[Diana et~al.(2021{\natexlab{a}})Diana, Dick, Elzayn, Kearns, Roth, Schutzman, Sharifi-Malvajerdi, and Ziani]{diana2021algorithms}
Emily Diana, Travis Dick, Hadi Elzayn, Michael Kearns, Aaron Roth, Zachary Schutzman, Saeed Sharifi-Malvajerdi, and Juba Ziani.
\newblock Algorithms and learning for fair portfolio design.
\newblock In \emph{Proceedings of the 22nd ACM Conference on Economics and Computation}, pages 371--389, 2021{\natexlab{a}}.

\bibitem[Diana et~al.(2021{\natexlab{b}})Diana, Gill, Kearns, Kenthapadi, and Roth]{diana2021minimax}
Emily Diana, Wesley Gill, Michael Kearns, Krishnaram Kenthapadi, and Aaron Roth.
\newblock Minimax group fairness: Algorithms and experiments, 2021{\natexlab{b}}.

\bibitem[Dong et~al.(2018)Dong, Roth, Schutzman, Waggoner, and Wu]{dong2018strategic}
Jinshuo Dong, Aaron Roth, Zachary Schutzman, Bo~Waggoner, and Zhiwei~Steven Wu.
\newblock Strategic classification from revealed preferences.
\newblock In \emph{Proceedings of the 2018 ACM Conference on Economics and Computation}, pages 55--70, 2018.

\bibitem[Estornell et~al.(2021)Estornell, Das, Liu, and Vorobeychik]{estornell2021unfairness}
Andrew Estornell, Sanmay Das, Yang Liu, and Yevgeniy Vorobeychik.
\newblock Unfairness despite awareness: group-fair classification with strategic agents.
\newblock \emph{arXiv preprint arXiv:2112.02746}, 2021.

\bibitem[Freund and Schapire(1996)]{fs1996}
Yoav Freund and Robert~E. Schapire.
\newblock Game theory, on-line prediction and boosting.
\newblock In \emph{Proceedings of the Ninth Annual Conference on Computational Learning Theory}, COLT '96, page 325–332. Association for Computing Machinery, 1996.

\bibitem[Ghadiri et~al.(2021)Ghadiri, Samadi, and Vempala]{ghadiri2021socially}
Mehrdad Ghadiri, Samira Samadi, and Santosh Vempala.
\newblock Socially fair {$k$}-means clustering.
\newblock In \emph{Proceedings of the 2021 ACM Conference on Fairness, Accountability, and Transparency}, pages 438--448, 2021.

\bibitem[Hahne(1991)]{hahne1991round}
Ellen~L. Hahne.
\newblock Round-robin scheduling for max-min fairness in data networks.
\newblock \emph{IEEE Journal on Selected Areas in communications}, 9\penalty0 (7):\penalty0 1024--1039, 1991.

\bibitem[Hardt et~al.(2016)Hardt, Megiddo, Papadimitriou, and Wootters]{hardt2016strategic}
Moritz Hardt, Nimrod Megiddo, Christos Papadimitriou, and Mary Wootters.
\newblock Strategic classification.
\newblock In \emph{Proceedings of the 2016 ACM conference on innovations in theoretical computer science}, pages 111--122, 2016.

\bibitem[Hofmann(1994)]{statlog_(german_credit_data)_144}
Hans Hofmann.
\newblock {Statlog (German Credit Data)}.
\newblock UCI Machine Learning Repository, 1994.
\newblock {DOI}: https://doi.org/10.24432/C5NC77.

\bibitem[Hu et~al.(2019)Hu, Immorlica, and Vaughan]{hu2019disparate}
Lily Hu, Nicole Immorlica, and Jennifer~Wortman Vaughan.
\newblock The disparate effects of strategic manipulation.
\newblock In \emph{Proceedings of the Conference on Fairness, Accountability, and Transparency}, pages 259--268, 2019.

\bibitem[Kearns et~al.(2018)Kearns, Neel, Roth, and Wu]{gerrymandering}
Michael Kearns, Seth Neel, Aaron Roth, and Zhiwei~Steven Wu.
\newblock Preventing fairness gerrymandering: Auditing and learning for subgroup fairness.
\newblock In \emph{International conference on machine learning}, pages 2564--2572. PMLR, 2018.

\bibitem[Kearns and Vazirani(1994)]{kearns1994introduction}
Michael~J Kearns and Umesh Vazirani.
\newblock \emph{An introduction to computational learning theory}.
\newblock MIT press, 1994.

\bibitem[Keswani and Celis(2023)]{keswani2023addressing}
Vijay Keswani and L~Elisa Celis.
\newblock Addressing strategic manipulation disparities in fair classification.
\newblock In \emph{Proceedings of the 3rd ACM Conference on Equity and Access in Algorithms, Mechanisms, and Optimization}, pages 1--11, 2023.

\bibitem[Lechner et~al.(2023)Lechner, Urner, and Ben-David]{lechner2023strategic}
Tosca Lechner, Ruth Urner, and Shai Ben-David.
\newblock Strategic classification with unknown user manipulations.
\newblock In \emph{International Conference on Machine Learning}, pages 18714--18732. PMLR, 2023.

\bibitem[Makarychev and Vakilian(2021)]{makarychev2021approximation}
Yury Makarychev and Ali Vakilian.
\newblock Approximation algorithms for socially fair clustering.
\newblock In \emph{Conference on Learning Theory}, pages 3246--3264. PMLR, 2021.

\bibitem[Martinez et~al.(2020)Martinez, Bertran, and Sapiro]{pmlr-v119-martinez20a}
Natalia Martinez, Martin Bertran, and Guillermo Sapiro.
\newblock Minimax pareto fairness: A multi objective perspective.
\newblock In \emph{Proceedings of the 37th International Conference on Machine Learning}, volume 119 of \emph{Proceedings of Machine Learning Research}, pages 6755--6764. PMLR, 2020.

\bibitem[Milli et~al.(2019)Milli, Miller, Dragan, and Hardt]{milli2019social}
Smitha Milli, John Miller, Anca~D Dragan, and Moritz Hardt.
\newblock The social cost of strategic classification.
\newblock In \emph{Proceedings of the Conference on Fairness, Accountability, and Transparency}, pages 230--239, 2019.

\bibitem[ProPublica(2020)]{compas}
ProPublica.
\newblock Compas recidivism risk score data and analysis, September 2020.
\newblock URL \url{https://www.propublica.org/datastore/dataset/ compas-recidivism-risk-score-data-and-analysis}.

\bibitem[Redmond(2002)]{communities_and_crime_183}
Michael Redmond.
\newblock {Communities and Crime}.
\newblock UCI Machine Learning Repository, 2002.
\newblock {DOI}: https://doi.org/10.24432/C53W3X.

\bibitem[Samadi et~al.(2018)Samadi, Tantipongpipat, Morgenstern, Singh, and Vempala]{samadi2018price}
Samira Samadi, Uthaipon Tantipongpipat, Jamie~H Morgenstern, Mohit Singh, and Santosh Vempala.
\newblock The price of fair pca: One extra dimension.
\newblock \emph{Advances in neural information processing systems}, 31, 2018.

\bibitem[Shao et~al.(2024)Shao, Blum, and Montasser]{shao2024strategic}
Han Shao, Avrim Blum, and Omar Montasser.
\newblock Strategic classification under unknown personalized manipulation.
\newblock \emph{Advances in Neural Information Processing Systems}, 36, 2024.

\bibitem[Sion(1958)]{sion1958general}
Maurice Sion.
\newblock On general minimax theorems.
\newblock 1958.

\bibitem[Strathern(1997)]{Strathern_1997}
Marilyn Strathern.
\newblock ‘improving ratings’: audit in the british university system.
\newblock \emph{European Review}, 5\penalty0 (3):\penalty0 305–321, 1997.

\bibitem[Sundaram et~al.(2023)Sundaram, Vullikanti, Xu, and Yao]{sundaram2023pac}
Ravi Sundaram, Anil Vullikanti, Haifeng Xu, and Fan Yao.
\newblock Pac-learning for strategic classification.
\newblock \emph{Journal of Machine Learning Research}, 24\penalty0 (192):\penalty0 1--38, 2023.

\bibitem[Tantipongpipat et~al.(2019)Tantipongpipat, Samadi, Singh, Morgenstern, and Vempala]{tantipongpipat2019multi}
Uthaipon Tantipongpipat, Samira Samadi, Mohit Singh, Jamie~H Morgenstern, and Santosh Vempala.
\newblock Multi-criteria dimensionality reduction with applications to fairness.
\newblock \emph{Advances in neural information processing systems}, 32, 2019.

\bibitem[Zhang and Conitzer(2021)]{zhang2021incentive}
Hanrui Zhang and Vincent Conitzer.
\newblock Incentive-aware pac learning.
\newblock In \emph{Proceedings of the AAAI Conference on Artificial Intelligence}, volume~35, pages 5797--5804, 2021.

\bibitem[Zinkevich(2003)]{zinkevich2003online}
Martin Zinkevich.
\newblock Online convex programming and generalized infinitesimal gradient ascent.
\newblock In \emph{Proceedings of the 20th international conference on machine learning (icml-03)}, pages 928--936, 2003.

\end{thebibliography}

\appendix

\section{Missing Proofs of Section~\ref{sec:sep}}\label{app:sep}

\objIsep*

\begin{proof}[Proof of Theorem~\ref{thm:sep}]
Recall that $ \ell_g (f_t)$ is the loss of group $g$ when using $f_t$ for $t = (t_1, \ldots, t_G)$ computed with respect to the distribution $D$, and $\hat{\ell}_g (f_t)$ is the corresponding loss computed with respect to the uniform distribution over the dataset $S$, i.e., the empirical distribution. Note that the first output of Algorithm~\ref{alg:separable} (for objective I) correctly computes \[
\min_{t \in T (S)} \max_{g \in \mathcal{G}} \hat{\ell}_g (f_t)
\]
because the second part of Assumption~\ref{ass:sep} allows us to write, for any $t$,
\[
f_t \left( \br (x_i,g_i,f_t) \right) = \prod_{g \in \mathcal{G}} \mathds{1} \left[ t_{g}^i \ge t_{g}\right]
\]
The next lemma shows that for every classifier $f_t$, there exists a classifier $f_{\bar{t}}$ with $\bar{t} \in T(S)$ such that they both induce the same labeling on the dataset. As an immediate consequence, this lemma implies that considering the thresholds in $T(S)$ suffices for computing the optimal classifier with respect to the empirical loss:
\[
\min_{t \in \mathbb{R}^G} \max_{g \in \mathcal{G}} \hat{\ell}_g (f_t) = \min_{t \in T(S)} \max_{g \in \mathcal{G}} \hat{\ell}_g (f_t)
\]

\begin{lemma}[Sufficiency of $T(S)$]\label{lem:thresholds}
    We have that for any dataset $S$, and any $t \in \mathbb{R}^G$, there exists a $\bar{t} \in T(S)$ such that $f_t$ and $f_{\bar{t}}$ induce the same labeling on the agents in $S$. In other words:
    \[
    \left\{ (x_i,g_i,y_i) \in S : f_{\bar{t}} \left( \br (x_i,g_i,f_{\bar{t}}) \right) = 1 \right\} = \left\{ (x_i,g_i,y_i) \in S : f_t \left( \br (x_i,g_i,f_t) \right) = 1 \right\}
    \]
\end{lemma}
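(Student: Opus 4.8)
The plan is to define an explicit coordinatewise ``upward rounding'' of an arbitrary $t \in \mathbb{R}^G$ to a vector $\bar t \in T(S)$, and to check that this rounding leaves the induced labeling of $S$ unchanged.

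First I would invoke the characterization already recorded in the proof of Theorem~\ref{thm:sep}: by the second part of Assumption~\ref{ass:sep}, for every data point $i$ and every threshold vector $s \in \mathbb{R}^G$ one has $f_s\big(\br(x_i,g_i,f_s)\big) = \prod_{g \in \mathcal{G}} \mathds{1}[t_g^i \ge s_g]$, where $t_g^i = \max_{z : c_{g_i}(x_i,z) < 1} b_g(z)$. Consequently the labeling $f_s$ induces on $S$ depends only on the Boolean vector $\big(\prod_g \mathds{1}[t_g^i \ge s_g]\big)_{i \in [n]}$, so it suffices to produce $\bar t \in T(S)$ with $\prod_g \mathds{1}[t_g^i \ge \bar t_g] = \prod_g \mathds{1}[t_g^i \ge t_g]$ for every $i$.

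Next I would set, for each group $g$, $\bar t_g \triangleq \min\{ s \in T_g(S) : s \ge t_g \}$. This is well-defined: $T_g(S)$ is finite and contains $\infty \ge t_g$, so the set being minimized is nonempty; and since $\bar t_g \in T_g(S)$ we get $\bar t = (\bar t_1, \ldots, \bar t_G) \in \prod_g T_g(S) = T(S)$. The key step is the claim that $\mathds{1}[t_g^i \ge \bar t_g] = \mathds{1}[t_g^i \ge t_g]$ for all $i \in [n]$ and $g \in \mathcal{G}$, which I would prove by two cases. If $t_g^i \ge t_g$, then $t_g^i$ is itself a member of $T_g(S)$ lying in $\{ s \in T_g(S) : s \ge t_g \}$, so $\bar t_g \le t_g^i$, hence $t_g^i \ge \bar t_g$ and both indicators are $1$. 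If $t_g^i < t_g$, then $t_g^i < t_g \le \bar t_g$, so $t_g^i < \bar t_g$ and both indicators are $0$. Taking products over $g$ then gives $f_{\bar t}\big(\br(x_i,g_i,f_{\bar t})\big) = f_t\big(\br(x_i,g_i,f_t)\big)$ for every $i$, i.e., $f_t$ and $f_{\bar t}$ accept exactly the same agents of $S$, which is the statement of the lemma.

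I do not anticipate a genuine obstacle; the only point requiring care is the \emph{direction} of the rounding. One must round each $t_g$ \emph{up} to the least element of $T_g(S)$ that is still at least $t_g$ (rounding to the nearest, or down, can flip a coordinate indicator and change the labeling), and the inclusion of $\infty$ in each $T_g(S)$ is precisely what guarantees this upward rounding is always available: when $t_g$ exceeds every $t_g^j$, the correct value is $\bar t_g = \infty$, under which $f_{\bar t}$ rejects on coordinate $g$ for every data point, matching $f_t$.
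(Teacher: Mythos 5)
Your proposal is correct and follows essentially the same route as the paper: both round each coordinate $t_g$ up to $\bar t_g = \min\{ s \in T_g(S) : s \ge t_g\}$ and use the fact that each $t_g^i$ itself lies in $T_g(S)$ to conclude the labeling is unchanged. The only cosmetic difference is that you establish the stronger coordinatewise identity $\mathds{1}[t_g^i \ge \bar t_g] = \mathds{1}[t_g^i \ge t_g]$ directly, whereas the paper argues the set equality via two inclusions; your explicit remark on why $\infty \in T_g(S)$ makes the rounding well-defined is a nice touch that the paper leaves implicit.
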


\begin{proof}[Proof of Lemma~\ref{lem:thresholds}]
    Fix any dataset $S$. Recall that $T(S) = \prod_{g} T_g(S)$. Consider an arbitrary threshold $t = (t_1, \ldots, t_G) \in \mathbb{R}^G$ and let $\bar{t}$ be defined as follows:
    \[
    \bar{t} = (\bar{t}_1, \ldots, \bar{t}_G), \quad \bar{t}_g \triangleq \min \left\{ t'_g : t'_g \in T_g (S), t'_g \ge t_g \right\}
    \]
    Note that $\bar{t} \in T(S)$. By construction, if there exists an agent $(x_i,g_i,y_i) \in S$ that is classified as positive by $f_{\bar{t}}$, it is also classified as positive by $f_t$ because $f_{\bar{t}}$ only makes the positive region of $f_t$ smaller:
    \[
    \left\{ (x_i,g_i,y_i) \in S : f_{\bar{t}} \left( \br (x_i,g_i,f_{\bar{t}}) \right) = 1 \right\} \subseteq \left\{ (x_i,g_i,y_i) \in S : f_t \left( \br (x_i,g_i,f_t) \right) = 1 \right\}
    \]
    So it remains to show that if an agent $(x_i,g_i,y_i) \in S$ is classified as positive by $f_t$, it is also classified as positive by $f_{\bar{t}}$. Let $(x_i,g_i,y_i)$ be an agent such that $f_t \left( \br (x_i,g_i,f_t) \right) = 1$. We therefore must have that for all $g$, $t_g^i \ge t_g$ where $t_g^i$ is defined in Algorithm~\ref{alg:separable}. But $t_g^i \in T_g (S)$ which implies $t_g^i \ge \bar{t}_g$ for all $g$. Therefore
    \[
    f_{\bar{t}} \left( \br (x_i,g_i,f_{\bar{t}}) \right) = \prod_{g \in \mathcal{G}} \mathds{1} \left[ t_{g}^i \ge \bar{t}_{g}\right] = 1
    \]
\end{proof}

Next, we show that we can lift our empirical loss guarantees to ones that hold over the underlying distribution $D$ via a uniform convergence analysis.
\begin{lemma}[Generalization]\label{lem:gen}
    We have that with probability $1-\delta$ over $S \sim D^n$, for any group $g$,
    \[
    \sup_{t \in \mathbb{R}^G} \left| \hat{\ell}_g (f_t) -  \ell_g (f_t) \right| \le O \left( \sqrt{\frac{G \log (n) + \log \left( G / \delta \right)}{\min_g n_g}} \right)
    \]
\end{lemma}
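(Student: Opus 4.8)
The plan is to reduce the uniform convergence statement over the infinite parameter set $t \in \mathbb{R}^G$ to a standard finite-VC-dimension bound, applied group by group. The key observation is that, by the second part of Assumption~\ref{ass:sep}, the strategic behavior of an agent $(x,g)$ facing $f_t$ collapses to a coordinate-wise threshold comparison. Concretely, set $t_{g'}^{x,g} \triangleq \max_{z : c_g(x,z) < 1} b_{g'}(z)$ as in Algorithm~\ref{alg:separable}; since Assumption~\ref{ass:sep}(2) guarantees a single point $z^\star$ in the feasible region $\{z : c_g(x,z)<1\}$ that simultaneously attains $\max b_{g'}$ for all $g'$, the best response yields
\[
f_t\!\left(\br(x,g,f_t)\right) \;=\; \prod_{g' \in \mathcal{G}} \mathds{1}\!\left[ t_{g'}^{x,g} \ge t_{g'}\right].
\]
Hence, for each fixed group $g$, the map $\phi_g : x \mapsto \big(t_1^{x,g}, \ldots, t_G^{x,g}\big) \in \mathbb{R}^G$ is a fixed feature transformation (it depends only on $c_g$ and $\{b_{g'}\}_{g'}$, not on $t$), and the strategic $0/1$ loss of $f_t$ on an agent $(x,y)$ of group $g$ equals $\mathds{1}\big[y \neq o_t(\phi_g(x))\big]$, where $o_t(z) \triangleq \prod_{j=1}^G \mathds{1}[z_j \ge t_j]$ ranges over the class $\mathcal{T} = \{o_t : t \in \mathbb{R}^G\}$ of axis-aligned upper orthants on $\mathbb{R}^G$.

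Next I would bound the complexity of the associated loss class. The class $\mathcal{T}$ of upper orthants in $\mathbb{R}^G$ has VC dimension $O(G)$ (each coordinate contributes a one-dimensional threshold, and $\mathcal{T}$ is the intersection of $G$ such threshold classes). Precomposition with the fixed map $\phi_g$ cannot increase the VC dimension, and passing to the induced $0/1$-loss class (taking XOR with the fixed label $y \in \{0,1\}$) leaves it unchanged. Therefore, for each fixed $g$, the class $\big\{(x,y) \mapsto \mathds{1}\big[y \neq f_t(\br(x,g,f_t))\big] : t \in \mathbb{R}^G\big\}$ has VC dimension $d = O(G)$; equivalently, by Sauer's Lemma (Lemma~\ref{lem:sauer}) it realizes $O\big(m^{O(G)}\big)$ distinct labelings on any $m$ points.

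I would then apply the VC generalization theorem (Theorem~\ref{thm:vc-gen}) once per group. Conditioning on the group sizes $\{n_{g'}\}_{g'}$, the examples of $S$ belonging to group $g$ are $n_g$ i.i.d.\ draws from $D_g$, and $\hat{\ell}_g(f_t)$ is exactly the empirical risk of the corresponding loss-class element on these $n_g$ points while $\ell_g(f_t)$ is its risk under $D_g$. Theorem~\ref{thm:vc-gen} with VC dimension $O(G)$, sample size $n_g$, and failure probability $\delta/G$ gives, with probability at least $1 - \delta/G$,
\[
\sup_{t \in \mathbb{R}^G} \left| \hat{\ell}_g(f_t) - \ell_g(f_t)\right| \;\le\; O\!\left( \sqrt{\frac{G\log n_g + \log(G/\delta)}{n_g}}\right).
\]
Loosening this using $n_g \ge \min_{g'} n_{g'}$ and $\log n_g \le \log n$, and taking a union bound over the $G$ groups, yields the claimed inequality.

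\textbf{Main obstacle.} The crux — though it is more bookkeeping than genuine difficulty — is the first step: correctly arguing that the best response reduces $f_t$'s action on group $g$ to the coordinate-wise comparison $\prod_{g'} \mathds{1}[t_{g'}^{x,g} \ge t_{g'}]$, which is precisely where Assumption~\ref{ass:sep}(2) is essential; without a point simultaneously maximizing all $b_{g'}$ in the feasible region, the feature map $\phi_g$ would not be well-defined and the loss class would fail to be $O(G)$-dimensional. A secondary technical point is that the per-group sample sizes $n_g$ are random, which is handled by conditioning on $\{n_{g'}\}_{g'}$ before invoking Theorem~\ref{thm:vc-gen}.
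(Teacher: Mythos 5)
Your proposal is correct and follows essentially the same route as the paper's proof: you invoke Assumption~\ref{ass:sep}(2) to collapse the best response to the coordinate-wise comparison $\prod_{g'}\mathds{1}[t_{g'}^{x,g}\ge t_{g'}]$, observe that the resulting per-group loss class is a class of $G$-dimensional threshold (orthant) functions with VC dimension $O(G)$, apply Theorem~\ref{thm:vc-gen} group by group, and union bound. The only addition is your explicit conditioning on the random group sizes, a technicality the paper leaves implicit.
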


\begin{proof}[Proof of Lemma~\ref{lem:gen}]
    %Consider a group $g \in \mathcal{G}$. Define
    %\[
    %\ell_g (f_t) \triangleq \Pr_{(x,y) \sim D_g} \left[ f_t \left( \br (x,g,f_t) \right) \neq y \right], \quad \hat{\ell}_g (f_t) \triangleq \Pr_{(x,y) \sim S_g} \left[ f_t \left( \br (x,g, f_t) \right) \neq y \right]
    %\]
    %where $(x,y) \sim S_g$ means sampling a point uniformly at random from $S_g = \{ (x_i,g_i,y_i) \in S: g_i = g\}$.
    
    Fix a group $g$ and define, for any $g' \in \mathcal{G}$,
    \[
    s_{g'}: \mathcal{X} \times \mathcal{G} \to \mathbb{R}, \quad s_{g'} (x,g) \triangleq \max_{z: c_{g} (x, z) < 1} b_{g'} (z)
    \]
    Observe that because of the second part of Assumption~\ref{ass:sep}, we can write
    \[
    f_t \left( \br (x,g,f_t) \right) = \prod_{g' \in \mathcal{G}} \mathds{1} \left[ s_{g'} (x,g) \ge t_{g'}\right]
    \]
    Therefore, we have that
    \[
    \ell_g (f_t) = \Pr_{(x,y) \sim D_g} \left[ \prod_{g' \in \mathcal{G}} \mathds{1} \left[ s_{g'} (x,g) \ge t_{g'}\right] \neq y \right], \quad \hat{\ell}_g (f_t) = \Pr_{(x,y) \sim S_g} \left[ \prod_{g' \in \mathcal{G}} \mathds{1} \left[ s_{g'} (x,g) \ge t_{g'}\right] \neq y \right]
    \]
    where $S_g$ is the data set containing only members of group $g$. Define the function class $\mathcal{F}_g = \{ \tilde{f}^g_{t}: t = (t_1, \ldots, t_G) \in \mathbb{R}^G\}$ such that
    \[
    \tilde{f}^g_{t} (x) \triangleq \prod_{g' \in \mathcal{G}} \mathds{1} \left[ s_{g'} (x,g) \ge t_{g'}\right]
    \]
    Hence, we can re-write
    \[
    \ell_g (f_t) = \Pr_{(x,y) \sim D_g} \left[ \tilde{f}^g_{t} (x) \neq y \right], \quad \hat{\ell}_g (f_t) = \Pr_{(x,y) \sim S_g} \left[ \tilde{f}^g_{t} (x) \neq y \right]
    \]
    Note that the VC dimension (Definition~\ref{def:vc}) of class $\mathcal{F}_g$ is bounded $VC(\mathcal{F}_g) \le G$ because it contains only $G$ dimensional threshold functions in $\reals^G$. Therefore, using standard uniform convergence guarantees for VC classes (Theorem~\ref{thm:vc-gen}), we have that with probability $1-\delta$ over the draw of $S$,
    \[
    \sup_{t \in \mathbb{R}^G} \left| \hat{\ell}_g (f_t) - \ell_g (f_t) \right| \le O \left( \sqrt{\frac{G \log (n_g) + \log \left( 1 / \delta \right)}{n_g}} \right)
    \]
    where $n_g = |S_g|$.
    Therefore, using a union bound, with probability $1-\delta$ over $S$, for any group $g$, we have
    \begin{align*}
    \sup_{t \in \mathbb{R}^G} \left| \hat{\ell}_g (f_t) - \ell_g (f_t) \right|
    \le O \left( \sqrt{\frac{G \log (n) + \log \left( G / \delta \right)}{\min_g n_g}} \right)
    \end{align*}
\end{proof}

Lemma~\ref{lem:thresholds}, together with Lemma~\ref{lem:gen}, Lemma~\ref{lem:F}, and the sample complexity bound of the theorem prove our result. To elaborate, let $f_{\hat{t}}$ be the classifier returned by Algorithm~\ref{alg:separable} (for objective I). We have that with probability $1-\delta$,
\begin{align*}
\max_g \ell_g (f_{\hat{t}}) 
\le \max_g \hat{\ell}_g (f_{\hat{t}}) + \frac{\gamma}{2} 
= \min_{t \in T(S)} \max_g \hat{\ell}_g (f_t) + \frac{\gamma}{2}
= \min_{t \in \mathbb{R}^G} \max_g \hat{\ell}_g (f_t) + \frac{\gamma}{2}
&\le \min_{t \in \mathbb{R}^G} \max_g  \ell_g (f_t) + \gamma \\
&\le \min_{h \in \mathcal{H}} \max_g  \ell_g (h) + \gamma
\end{align*}
Here, the first and the second inequalities are due to Lemma~\ref{lem:gen}, and the last inequality is due to Lemma~\ref{lem:F}. The first equality is due to the construction of the algorithm, and the second equality is due to Lemma~\ref{lem:thresholds}.
\end{proof}

\objIIsep*

\begin{proof}[Proof of Theorem~\ref{thm:sep2}]
    First, we prove the minimax fairness guarantee of the theorem. We have that with probability at least $1-\delta$,

\begin{align*}
\max_g \ell_g (f_{\hat{t}}) 
\le \max_g \hat{\ell}_g (f_{\hat{t}}) + \frac{\epsilon}{2}
\le  \min_{t \in T(S)} \max_g \hat{\ell}_g (f_t) + \gamma +  \epsilon + \frac{\epsilon}{2}
&= \min_{t \in \mathbb{R}^G} \max_g \hat{\ell}_g (f_t) + \gamma +  \frac{3\epsilon}{2}\\
&\le \min_{t \in \mathbb{R}^G} \max_g  \ell_g (f_t) + \gamma + 2 \epsilon \\
&\le \min_{h \in \mathcal{H}} \max_g \ell_g (h) + \gamma + 2\epsilon
\end{align*}
where the first and the third inequalities are due to Lemma~\ref{lem:gen} and the sample complexity bound of the theorem. The second inequality follows from the construction of the algorithm, and the last inequality is due to Lemma~\ref{lem:F}. The equality follows from Lemma~\ref{lem:thresholds}.

For the accuracy guarantees, let us define for any $\gamma$, the set of threshold classifiers that satisfy $\gamma$-minimax fairness with respect to the distribution ($C_\gamma$) and the dataset ($\hat{C}_\gamma$):
\[
C_\gamma \triangleq \left\{ t \in \mathbb{R}^G: \max_g {\ell}_g(f_t) \le \min_{t \in \mathbb{R}^G} \max_g {\ell}_g(f_t) +  \gamma \right\}
\]

\[
\hat{C}_\gamma \triangleq \left\{ t \in \mathbb{R}^G: \max_g \hat{\ell}_g(f_t) \le \min_{t \in \mathbb{R}^G} \max_g \hat{\ell}_g(f_t) +  \gamma \right\}
\]

\begin{lemma}\label{lem:1}
    We have that with probability at least $1 - \delta$, $C_\gamma \subseteq \hat{C}_{\gamma + \epsilon}$.
\end{lemma}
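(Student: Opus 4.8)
The plan is to obtain the inclusion as an essentially immediate consequence of the uniform convergence statement already proved in Lemma~\ref{lem:gen}. First I would invoke Lemma~\ref{lem:gen} to fix an event $E$ with $\Pr_{S \sim D^n}[E] \ge 1-\delta$ on which
\[
\sup_{t \in \mathbb{R}^G} \left| \hat{\ell}_g(f_t) - \ell_g(f_t) \right| \le \alpha \triangleq O\!\left( \sqrt{\frac{G\log(n) + \log(G/\delta)}{\min_g n_g}} \right) \quad \text{for every } g \in \mathcal{G}.
\]
Under the sample-complexity hypothesis $\min_g n_g = \Omega\!\left( (G\log n + \log(G/\delta))/\epsilon^2 \right)$ of Theorem~\ref{thm:sep2}, the hidden constant can be taken large enough that $\alpha \le \epsilon/2$. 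All of what follows takes place on $E$.

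Next I would take an arbitrary $t \in C_\gamma$ and verify $t \in \hat{C}_{\gamma+\epsilon}$ via a three-step chain of inequalities. (i) Since $\hat{\ell}_g(f_t) \le \ell_g(f_t) + \alpha$ for every $g$, taking the maximum over $g$ gives $\max_g \hat{\ell}_g(f_t) \le \max_g \ell_g(f_t) + \alpha$. (ii) By membership of $t$ in $C_\gamma$, $\max_g \ell_g(f_t) \le \min_{t' \in \mathbb{R}^G} \max_g \ell_g(f_{t'}) + \gamma$. (iii) Letting $t^\star$ attain $\min_{t' \in \mathbb{R}^G} \max_g \hat{\ell}_g(f_{t'})$ (this minimum is attained, since by Lemma~\ref{lem:thresholds} it equals the minimum over the finite set $T(S)$), we get $\min_{t'} \max_g \ell_g(f_{t'}) \le \max_g \ell_g(f_{t^\star}) \le \max_g \hat{\ell}_g(f_{t^\star}) + \alpha = \min_{t'} \max_g \hat{\ell}_g(f_{t'}) + \alpha$. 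Composing (i)--(iii),
\[
\max_g \hat{\ell}_g(f_t) \le \min_{t' \in \mathbb{R}^G} \max_g \hat{\ell}_g(f_{t'}) + \gamma + 2\alpha \le \min_{t' \in \mathbb{R}^G} \max_g \hat{\ell}_g(f_{t'}) + \gamma + \epsilon,
\]
which is precisely the condition defining $\hat{C}_{\gamma+\epsilon}$; since $t$ was arbitrary, $C_\gamma \subseteq \hat{C}_{\gamma+\epsilon}$ on $E$.

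The argument is routine, so I do not anticipate a genuine obstacle; the only points needing care are bookkeeping ones. The first is making sure each one-sided deviation is applied in the direction that helps: in step (i) we inflate $\hat{\ell}$ by $\ell + \alpha$, whereas in step (iii) we bound $\ell$ at the \emph{empirical} minimizer from above by $\hat{\ell} + \alpha$. The second is that two independent slacks of size $\alpha$ accumulate along the chain---one from comparing objective values at $t$, one from comparing the two optimal values---so the requirement is $2\alpha \le \epsilon$, which is exactly why the sample-size bound in Theorem~\ref{thm:sep2} scales with $\epsilon^{-2}$ rather than $\gamma^{-2}$.
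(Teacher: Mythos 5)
Your proposal is correct and follows essentially the same argument as the paper: apply the uniform convergence bound of Lemma~\ref{lem:gen} once to pass from $\hat{\ell}$ to $\ell$ at $t$, use membership in $C_\gamma$, and apply it again to compare the two minimax values, accumulating a total slack of $\epsilon$. Your extra care about attainment of the empirical minimum via $T(S)$ is a harmless elaboration of a step the paper leaves implicit.
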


\begin{proof}[Proof of Lemma~\ref{lem:1}]
    This follows from our generalization guarantee in Lemma~\ref{lem:gen}. Let $t \in C_\gamma$. We have that with probability $1-\delta$,
\begin{align*}
\max_g \hat{\ell}_g (f_{t}) &\le \max_g \ell_g (f_t) + \frac{\epsilon}{2} %\\
\le \min_{t \in \mathbb{R}^G} \max_g \ell_g (f_t) + \gamma + \frac{\epsilon}{2} %\\
\le \min_{t \in \mathbb{R}^G} \max_g \hat{\ell}_g (f_t) + \gamma + \epsilon
\end{align*}
Therefore, $t \in \hat{C}_{\gamma + \epsilon}$ and this completes the proof.
\end{proof}

Let us resume the proof for our accuracy guarantees. We have that with probability at least $1-\delta$,

\begin{align*}
\ell (f_{\hat{t}}) 
\le \hat{\ell} (f_{\hat{t}}) + \frac{\epsilon}{2} 
=  \min_{t \in T(S) \cap \hat{C}_{\gamma + \epsilon}} \hat{\ell} (f_t) +  \frac{\epsilon}{2}
= \min_{t \in \hat{C}_{\gamma + \epsilon}} \hat{\ell} (f_t) +  \frac{\epsilon}{2}
\le \min_{t \in {C}_{\gamma}} \hat{\ell} (f_t) +  \frac{\epsilon}{2}
&\le \min_{t \in {C}_{\gamma}} {\ell} (f_t) +  \epsilon \\
&= \text{OPT} (\mathcal{F}, \gamma) +  \epsilon \\
&\le \text{OPT} (\mathcal{H}, \gamma) + \epsilon
\end{align*}
where the first and the third inequalities are due to Lemma~\ref{lem:gen} and the sample complexity bound of the theorem; the second inequality is due to Lemma~\ref{lem:1} and the last one follows from Lemma~\ref{lem:F}. The first equality is due to the construction of the algorithm, and the second equality is due to Lemma~\ref{lem:thresholds}. The third equality follows from the definition of $\text{OPT} (\mathcal{F}, \gamma)$.
\end{proof}

\section{Missing Proofs of Section~\ref{sec:gen}}\label{app:gen}
\subsection{Objective I}

\optminmax*
\begin{proof}
    Observe that
    \begin{align*}
        \max_{g \in \mathcal{G}} \hat{\ell}_g( \hat{p} ) 
        \le \max_{\lambda \in \Lambda} \sum_g \lambda^g  \hat{\ell}_g( \hat{p} )
        \le \sum_g \hat{\lambda}^g \hat{\ell}_g( \hat{p} ) + \nu
        \le \min_{p \in \Delta (\mathcal{H} (S))} \sum_g \hat{\lambda}^g \hat{\ell}_g( p ) + 2\nu 
        &\le \min_{p \in \Delta (\mathcal{H} (S))} \max_{g \in \mathcal{G}} \hat{\ell}_g(p) + 2 \nu \\
        & = \min_{p \in \Delta (\mathcal{H})} \max_{g \in \mathcal{G}} \hat{\ell}_g(p) + 2 \nu 
    \end{align*}
    Here, the second and the third inequalities follow from $(\hat{p}, \hat{\lambda})$ being a $\nu$-approximate equilibrium pair.
\end{proof}

\objIgen*

\begin{proof}[Proof of Theorem~\ref{thm:general1}]
    for any strategies of the players $p \in \Delta (\mathcal{H})$ and $\lambda \in \Lambda$, let
    \[
    U(p, \lambda) \triangleq \sum_g \lambda_g \cdot  \hat{\ell}_g(p)
    \]
    denote the objective value of the game. Note that the regret of the two players jointly satisfy:
    \[
    \sum_{t=1}^T U(h_t, \lambda_t) - \min_{p \in \Delta (\mathcal{H})} \sum_{t=1}^T U(p, \lambda_t) \le 0,
    \quad
    \max_{\lambda \in \Lambda} \sum_{t=1}^T U(h_t, \lambda) - \sum_{t=1}^T U(h_t, \lambda_t)  \le \sqrt{\frac{T \log G}{2}}
    \]
    The first follows from the fact that the learner best responds in every round of the algorithm, and the second inequality is simply the regret of the Exponential Weights algorithm for appropriately choesn learning rate of $\eta = \sqrt{8 \log G / T}$. (see \citet{cesa1999prediction}). We therefore have from Theorem~\ref{thm:noregret} that the average play of the players $(\hat{p} = \frac{1}{T} \sum_t h_t, \hat{\lambda} = \frac{1}{T} \sum_t \lambda_t)$ forms a $\nu$-approximate equilibrium of the game where
    \[
    \nu = \sqrt{\frac{\log G}{2T}}
    \]
    Therefore, Lemma~\ref{lem:optimality}, as well as the choice of $T$ in the algorithm, imply that
    \begin{equation}\label{eq:emp}
    \max_{g \in \mathcal{G}} \hat{\ell}_g( \hat{p} ) \le \min_{p \in \Delta (\mathcal{H})} \max_{g \in \mathcal{G}} \hat{\ell}_g(p) + \frac{\gamma}{2}
    \end{equation}
    Next, we use uniform convergence guarantees to lift our empirical guarantees to ones that hold over the distribution $D$.
    \begin{lemma}[Generalization]\label{lem:gen2}
    We have that for any $\delta$, with probability at least $1-\delta$ over the $i.i.d.$ draws of $S \sim D^n$, for any group $g$,
    \[
    \sup_{p \in \Delta (\mathcal{H})} \left| \ell_g( p ) - \hat{\ell}_g( p )\right| \le O \left( \sqrt{\frac{d_\mathcal{H} \log (n) + \log \left( G / \delta \right)}{\min_g n_g}} \right)
    \]
    \end{lemma}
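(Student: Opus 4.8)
The plan is to first collapse the supremum over the simplex $\Delta(\mathcal{H})$ to a supremum over $\mathcal{H}$ itself, and then to invoke the strategic VC generalization machinery one group at a time. The key enabling fact — which is exactly what the transparent model bought us — is that by Definition~\ref{def:trans} both losses are \emph{linear} in $p$: $\ell_g(p) = \mathbb{E}_{h \sim p}[\ell_g(h)]$ and, by the analogous empirical definition, $\hat{\ell}_g(p) = \mathbb{E}_{h \sim p}[\hat{\ell}_g(h)]$. Consequently, for every $p \in \Delta(\mathcal{H})$,
\[
\left| \ell_g(p) - \hat{\ell}_g(p) \right| = \left| \mathbb{E}_{h \sim p}\!\left[ \ell_g(h) - \hat{\ell}_g(h) \right] \right| \le \sup_{h \in \mathcal{H}} \left| \ell_g(h) - \hat{\ell}_g(h) \right|,
\]
so it suffices to prove the stated bound with the supremum taken over $\mathcal{H}$ rather than over $\Delta(\mathcal{H})$.

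Next I would fix a group $g$ and rewrite the group error through the strategic relabeling map. By Definition~\ref{def:svc}, the class $\mathcal{F} = \{ f_h : f_h(x,g') = h(\br(x,g',h)) \}$ satisfies $VC(\mathcal{F}) = SVC(\mathcal{H}) = d_\mathcal{H}$. Freezing the second coordinate to $g$ gives the class $\mathcal{F}_g = \{\, x \mapsto f_h(x,g) : h \in \mathcal{H} \,\} \subseteq \mathcal{Y}^\mathcal{X}$, and any set $\{x_1,\dots,x_k\} \subseteq \mathcal{X}$ shattered by $\mathcal{F}_g$ corresponds to the set $\{(x_1,g),\dots,(x_k,g)\}$ shattered by $\mathcal{F}$, whence $VC(\mathcal{F}_g) \le d_\mathcal{H}$. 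Under this identification we have $\ell_g(h) = \Pr_{(x,y)\sim D_g}[\, f_h(x,g) \neq y \,]$ and $\hat{\ell}_g(h) = \Pr_{(x,y)\sim S_g}[\, f_h(x,g) \neq y \,]$, where $S_g$ denotes the subsample of $S$ belonging to group $g$.

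The final step is the generalization bound itself, applied group by group. Conditioning on the realized group counts $(n_1,\dots,n_G)$, the points of $S_g$ are i.i.d.\ draws from $D_g$, so Theorem~\ref{thm:vc-gen} applied to $\mathcal{F}_g$ on the sample of size $n_g$ gives, with probability at least $1 - \delta/G$,
\[
\sup_{h \in \mathcal{H}} \left| \ell_g(h) - \hat{\ell}_g(h) \right| \le O\!\left( \sqrt{ \frac{ d_\mathcal{H} \log(n_g) + \log(G/\delta) }{ n_g } } \right).
\]
A union bound over the $G$ groups, combined with the crude bounds $n_g \le n$ in the numerator and $n_g \ge \min_{g'} n_{g'}$ in the denominator, upgrades this to a statement holding simultaneously for all $g$ with probability at least $1-\delta$; chaining it with the reduction of the first paragraph yields the claimed inequality for $\sup_{p \in \Delta(\mathcal{H})}$. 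I do not expect a genuine obstacle here: the only points requiring care are that $n_g$ is itself random (handled by the standard ``condition on the group sizes, then apply the bound, then union-bound'' template) and that the expectation $\mathbb{E}_{h\sim p}$ ranges over arbitrary distributions on a possibly uncountable $\mathcal{H}$ (harmless, since we only use that an average is dominated by a supremum). Everything else is a direct invocation of linearity of the loss, the definition of strategic VC dimension, and the off-the-shelf VC uniform-convergence theorem.
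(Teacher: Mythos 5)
Your proof is correct and follows essentially the same route as the paper's: reduce from $\Delta(\mathcal{H})$ to $\mathcal{H}$ by linearity of the loss in $p$, pass to the per-group relabeling class $\mathcal{F}_g$ whose VC dimension is controlled by $SVC(\mathcal{H})$, apply Theorem~\ref{thm:vc-gen} to each $S_g$, and union bound over groups. Your version is in fact slightly more careful on two minor points the paper glosses over (asserting $VC(\mathcal{F}_g)\le d_\mathcal{H}$ rather than equality, and conditioning on the random group sizes $n_g$), but the argument is the same.
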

    \begin{proof}[Proof of Lemma~\ref{lem:gen2}]
        Fix a group $g$. Define the function class $\mathcal{F}_g = \{ f^g_h: h \in \mathcal{H }\}$ where
        \[f^g_h: \mathcal{X} \to \mathcal{Y}, \, f^g_h (x) = h\left( \br (x,g,h) \right)
        \]
        Observe that for any $h \in \mathcal{H}$,
        \[
        \ell_g (h) = \Pr_{(x,y) \sim D_g} \left[ h\left( \br (x,g,h) \right) \neq y \right] = \Pr_{(x,y) \sim D_g} \left[ f^g_h (x) \neq y \right]
        \]
        \[
        \hat{\ell}_g (h) = \Pr_{(x,y) \sim S_g} \left[ h\left( \br (x,g,h) \right) \neq y \right] = \Pr_{(x,y) \sim S_g} \left[ f^g_h (x) \neq y \right]
        \]
        Using standard uniform convergence guarantees for VC classes (Theorem~\ref{thm:vc-gen}), we get that with probability at least $1 - \delta$ over the draw of $S$,
        \[
        \sup_{h \in \mathcal{H}} \left| \ell_g( h ) - \hat{\ell}_g( h )\right| \le O \left( \sqrt{\frac{VC(\mathcal{F}_g) \log (n_g) + \log \left( 1 / \delta \right)}{n_g}} \right)
        \]
        where $VC(\mathcal{F}_g)$ is the VC dimension of $\mathcal{F}_g$. But $VC(\mathcal{F}_g) = d_\mathcal{H}$ by the definition of strategic VC dimension (Definition~\ref{def:svc}). Therefore, we have that with probability at least $1-\delta$,
        \[
        \sup_{h \in \mathcal{H}} \left| \ell_g( h ) - \hat{\ell}_g( h )\right| \le O \left( \sqrt{\frac{d_\mathcal{H} \log (n_g) + \log \left( 1 / \delta \right)}{n_g}} \right)
        \]
        Note that for a randomized classifier $p \in \Delta (\mathcal{H})$, because of linearity of expectation we have
        \[
        \ell_g( p ) - \hat{\ell}_g( p ) = \mathbb{E}_{h \sim p} \left[ \ell_g (h)\right] - \mathbb{E}_{h \sim p} \left[ \hat{\ell}_g (h)\right] =  \mathbb{E}_{h \sim p} \left[ \ell_g (h) - \hat{\ell}_g (h) \right]
        \]
        Therefore, with probability at least $1 - \delta$,
        \begin{align*}
        \sup_{p \in \Delta (\mathcal{H})} \left| \ell_g( p ) - \hat{\ell}_g( p )\right| \le \sup_{p \in \Delta (\mathcal{H})} \mathbb{E}_{h \sim p} \left| \ell_g( h ) - \hat{\ell}_g( h )\right|
        = \sup_{h \in \mathcal{H}} \left| \ell_g( h ) - \hat{\ell}_g( h )\right|
        \le O \left( \sqrt{\frac{d_\mathcal{H} \log (n_g) + \log \left( 1 / \delta \right)}{n_g}} \right)
        \end{align*}
        Here, the first inequality is an application of Jensen's inequality, and the equality follows from the linearity of expectation. A union bound over the $G$ groups completes the proof.
    \end{proof}
    We are now ready to finish the proof of the theorem.
    We have that with probability at least $1-\delta$,
    \begin{align*}
        \max_{g \in \mathcal{G}} \ell_g( \hat{p} ) \le \max_{g \in \mathcal{G}} \hat{\ell}_g( \hat{p} ) + \frac{\gamma}{4}
        \le \min_{p \in \Delta (\mathcal{H})} \max_{g \in \mathcal{G}} \hat{\ell}_g(p) + \frac{3\gamma}{4}
        \le \min_{p \in \Delta (\mathcal{H})} \max_{g \in \mathcal{G}} \ell_g(p) + \gamma
    \end{align*}
    where the first and last inequalities follow from Lemma~\ref{lem:gen2} and the sample complexity bound of the theorem. The second inequality follows from our empirical guarantees (Equation~(\ref{eq:emp})).
\end{proof}

\subsection{Objective II}
\opterror*
\begin{proof}[Proof of Lemma~\ref{lem:optimality2}]
    For any $x \in \mathbb{R}$, let $x_+ \triangleq \max (x, 0)$. We first show that the pair $(\hat{p}, \hat{\lambda})$ satisfies:
    \begin{equation}\label{eq:dualbound}
        \sum_g \hat{\lambda}^g \left( \hat{\ell}_g (\hat{p}) - \hat{\gamma} - \gamma \right) \ge B \max_g \left( \hat{\ell}_g (\hat{p}) - \hat{\gamma} - \gamma \right)_+ - \nu
    \end{equation}
    To show this, let $\lambda$ be the best response of the dual player to $\hat{p}$:
    \[
    \lambda = \begin{cases}
        0 = (0,0, \ldots, 0) \in \Lambda & \max_g \hat{\ell}_g (\hat{p}) \le \hat{\gamma} + \gamma \\
        B e_{g^\star} & \max_g \hat{\ell}_g (\hat{p}) > \hat{\gamma} + \gamma
    \end{cases}
    \]
    where $g^\star \in \argmax_g \left( \hat{\ell}_g (\hat{p}) - \hat{\gamma} - \gamma \right)$ and $e_i$ is defined as the $i$th vector of the standard basis of $\mathbb{R}^G$, for any $i$. Note that in this case the dual player puts all its mass $B$ on the most violated constraint. Now we have that
    \[
    \mathcal{L} ( \hat{p}, \hat{\lambda}) = \hat{\ell} ( \hat{p}) + \sum_g \hat{\lambda}^g \left( \hat{\ell}_g( \hat{p} ) - \hat{\gamma} - \gamma\right)
    \]
    \[
    \mathcal{L} ( \hat{p}, {\lambda}) = \hat{\ell} ( \hat{p} ) + \sum_g \lambda^g \left( \hat{\ell}_g( \hat{p} ) - \hat{\gamma} - \gamma\right) = \hat{\ell} ( \hat{p} ) + B \max_g \left( \hat{\ell}_g (\hat{p}) - \hat{\gamma} - \gamma \right)_+
    \]
    But because $(\hat{p}, \hat{\lambda})$ is a $\nu$-approximate equilibrium of the game, we have
    \[
    \mathcal{L} ( \hat{p}, \hat{\lambda}) \ge \mathcal{L} ( \hat{p}, {\lambda}) - \nu
    \]
    which proves Equation~(\ref{eq:dualbound}).

    We are now ready to prove the Lemma. Let $p$ be any feasible solution to the optimization problem~(\ref{eq:opthat}), i.e. one that satisfies: $\max_g \hat{\ell}_g (\hat{p}) \le \hat{\gamma} + \gamma$. We have that
    \[
    \mathcal{L} ( p, \hat{\lambda}) = \hat{\ell} (p) + \sum_g \hat{\lambda}^g \left( \hat{\ell}_g(p) - \hat{\gamma} - \gamma\right) \le \hat{\ell} (p)
    \]
    Therefore, because $(\hat{p}, \hat{\lambda})$ is a $\nu$-approximate equilibrium of the game,
    \[
    \mathcal{L} ( \hat{p}, \hat{\lambda}) \le \mathcal{L} ( p, \hat{\lambda}) + \nu \le \hat{\ell} (p) + \nu
    \]
    On the other hand, using Equation~(\ref{eq:dualbound}), we have
    \[
    \mathcal{L} ( \hat{p}, \hat{\lambda}) = \hat{\ell} ( \hat{p}) + \sum_g \hat{\lambda}^g \left( \hat{\ell}_g( \hat{p} ) - \hat{\gamma} - \gamma\right) \ge \hat{\ell} ( \hat{p}) + B \max_g \left( \hat{\ell}_g (\hat{p}) - \hat{\gamma} - \gamma \right)_+ - \nu \ge \hat{\ell} ( \hat{p}) - \nu
    \]
    Putting these inequalities together, we have that
    \[
    \hat{\ell} ( \hat{p}) \le \hat{\ell} (p) + 2\nu
    \]
    which proves the first part of the Lemma because $p$ can be \emph{any} feasible solution of the problem~(\ref{eq:opthat}). To prove the second part, note that we can use the same inequalities to obtain
    \[
     B \max_g \left( \hat{\ell}_g (\hat{p}) - \hat{\gamma} - \gamma \right)_+ \le \hat{\ell} (p) - \hat{\ell} ( \hat{p}) + 2 \nu \le 1 + 2 \nu
    \]
    Therefore
    \[
    \max_g \left( \hat{\ell}_g (\hat{p}) - \hat{\gamma} - \gamma \right) \le \max_g \left( \hat{\ell}_g (\hat{p}) - \hat{\gamma} - \gamma \right)_+ \le \frac{1 + 2 \nu}{B}
    \]
    which completes the proof.
\end{proof}

\objIIgen*
\begin{proof}[Proof of Theorem~\ref{thm:general2}]
    Note that the regret of the two players jointly satisfy:
    \[
    \sum_{t=1}^T \mathcal{L} (h_t, \lambda_t) - \min_{p \in \Delta (\mathcal{H})} \sum_{t=1}^T \mathcal{L} (p, \lambda_t) \le 0,
    \quad
    \max_{\lambda \in \Lambda} \sum_{t=1}^T \mathcal{L} (h_t, \lambda) - \sum_{t=1}^T \mathcal{L} (h_t, \lambda_t)  \le \frac{B^2 \sqrt{T}}{2} + G \left( \sqrt{T} - \frac{1}{2} \right)
    \]
    The first follows from the fact that the learner best responds in every round of the algorithm, and the second inequality is the regret of the Online Projected Gradient Descent (PGD) algorithm for appropriately choesn learning rate of $\eta_t = t^{-1/2}$. (see \citep{zinkevich2003online}). We therefore have from Theorem~\ref{thm:noregret} that the average play of the players $(\hat{p} = \frac{1}{T} \sum_t h_t, \hat{\lambda} = \frac{1}{T} \sum_t \lambda_t)$ forms a $\nu$-approximate equilibrium of the game where
    \[
    \nu \le \left( \frac{B^2}{2} + G \right) \frac{1}{\sqrt{T}} = \frac{\epsilon}{4}
    \]
    where we substitute the values of $T$ and $B$ from our algorithm. Therefore, Lemma~\ref{lem:optimality2} implies that
    \begin{equation}\label{eq:emp2}
    \hat{\ell} (\hat{p}) \le \widehat{\text{OPT}} \left( \Delta (\mathcal{H}), \gamma + \epsilon \right) + \frac{\epsilon}{2}, \quad
    \max_{g \in \mathcal{G}} \hat{\ell}_g( \hat{p} ) \le \left( \hat{\gamma} + \gamma +  \epsilon \right) + \frac{\epsilon}{2}
    \end{equation}
    We note that the algorithm is run with extra slack of $\epsilon$ for its bound on minimax fairness ($\gamma + \epsilon$ instead of $\gamma$). This is why we have $\widehat{\text{OPT}} \left( \Delta (\mathcal{H}), \gamma + \epsilon \right)$ in the bound. This extra slack is crucial for us to convert guarantees with respect to the empirical optimum to ones that compete with the distributional ${\text{OPT}} \left( \Delta (\mathcal{H}), \gamma \right)$, as we show in the following lemma.
    \begin{lemma}\label{lem:opt2}
        We have that with probability at least $1 - \delta$ over the draw of $S$,
    \[
    \widehat{\text{OPT}} \left( \Delta (\mathcal{H}), \gamma + \epsilon \right) \le {\text{OPT}} \left( \Delta (\mathcal{H}), \gamma \right) + \frac{\epsilon}{4}
    \]
    \end{lemma}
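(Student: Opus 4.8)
The plan is a \textbf{feasibility transfer} argument: I will show that an optimizer of the distributional problem $\text{OPT}\left(\Delta(\mathcal{H}),\gamma\right)$ is, on a high-probability event, feasible for the empirical problem $\widehat{\text{OPT}}\left(\Delta(\mathcal{H}),\gamma+\epsilon\right)$ while having essentially the same objective value. Fix $p^\star\in\Delta(\mathcal{H})$ attaining $\ell(p^\star)=\text{OPT}\left(\Delta(\mathcal{H}),\gamma\right)$ with $\max_g\ell_g(p^\star)\le\min_{p'}\max_g\ell_g(p')+\gamma$.

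First I would invoke uniform convergence. Lemma~\ref{lem:gen2} together with the sample bound $\min_g n_g=\Omega\left((d_\mathcal{H}\log n+\log(G/\delta))/\epsilon^2\right)$ of the theorem gives, with probability at least $1-\delta$ (choosing the hidden constant appropriately), $\sup_{p\in\Delta(\mathcal{H})}|\ell_g(p)-\hat{\ell}_g(p)|\le\epsilon/8$ for every group $g$; running the same argument for the full strategic class $\mathcal{F}=\{f_h:h\in\mathcal{H}\}$ of Definition~\ref{def:svc} (VC dimension $d_\mathcal{H}$) over all $n\ge\min_g n_g$ samples also yields $\sup_p|\ell(p)-\hat{\ell}(p)|\le\epsilon/8$. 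Condition on this event. Next I would pin down $\hat{\gamma}=\max_g\hat{\ell}_g(\tilde{p})$: by the empirical guarantee of Algorithm~\ref{alg:general1} run with parameter $\epsilon$ (its $\nu=\epsilon/4$ approximate equilibrium combined with Lemma~\ref{lem:optimality}) we have $\hat{\gamma}\ge\min_{p'}\max_g\hat{\ell}_g(p')$, and since $|\max_g\hat{\ell}_g(p)-\max_g\ell_g(p)|\le\epsilon/8$ uniformly, $\min_{p'}\max_g\hat{\ell}_g(p')\ge\min_{p'}\max_g\ell_g(p')-\epsilon/8$. Combining with the distributional feasibility of $p^\star$,
\begin{align*}
\max_g\hat{\ell}_g(p^\star)
&\le\max_g\ell_g(p^\star)+\tfrac{\epsilon}{8}
\le\min_{p'}\max_g\ell_g(p')+\gamma+\tfrac{\epsilon}{8}
\le\hat{\gamma}+\gamma+\tfrac{\epsilon}{4}
\le\hat{\gamma}+\gamma+\epsilon,
\end{align*}
so $p^\star$ is feasible for $\widehat{\text{OPT}}\left(\Delta(\mathcal{H}),\gamma+\epsilon\right)$. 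Hence $\widehat{\text{OPT}}\left(\Delta(\mathcal{H}),\gamma+\epsilon\right)\le\hat{\ell}(p^\star)\le\ell(p^\star)+\tfrac{\epsilon}{8}=\text{OPT}\left(\Delta(\mathcal{H}),\gamma\right)+\tfrac{\epsilon}{8}\le\text{OPT}\left(\Delta(\mathcal{H}),\gamma\right)+\tfrac{\epsilon}{4}$, which is the claim.

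The proof is mostly bookkeeping, and the one delicate point is the two-sided accounting around $\hat{\gamma}$. Overshooting the empirical minmax value (by up to $\epsilon/2$ from Algorithm~\ref{alg:general1}'s optimization error) is harmless since it only loosens the empirical constraint; the real issue is that $\hat{\gamma}$ can dip \emph{below} the true minmax value due to generalization error, and it is exactly to absorb this together with the generalization gap incurred in passing from $\max_g\ell_g(p^\star)$ to $\max_g\hat{\ell}_g(p^\star)$ that the statement exchanges an extra $\epsilon$ of fairness slack on the left for an $\epsilon/4$ accuracy loss on the right. One should also note in passing that $\widehat{\text{OPT}}\left(\Delta(\mathcal{H}),\gamma+\epsilon\right)$ is finite, since $\tilde{p}$ itself satisfies $\max_g\hat{\ell}_g(\tilde{p})=\hat{\gamma}\le\hat{\gamma}+\gamma+\epsilon$.
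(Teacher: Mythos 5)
Your proof is correct and is essentially the paper's argument: the paper phrases the same feasibility-transfer step as a set containment $C_\gamma \subseteq \hat{C}_{\gamma+\epsilon}$ (established via the same chain of uniform-convergence bounds and the fact that $\hat{\gamma}$ upper-bounds the empirical minmax value) and then compares minima over the nested feasible sets, whereas you instantiate it at a single distributional optimizer $p^\star$. The only cosmetic differences are your tighter $\epsilon/8$ accounting versus the paper's $\epsilon/4$, and your explicit derivation of uniform convergence for the overall loss $\ell$, which the paper uses implicitly.
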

    \begin{proof}[Proof of Lemma~\ref{lem:opt2}]
        Let us define, for any $\gamma$,
        \[
        C_\gamma \triangleq \left\{ p \in \Delta (\mathcal H): \max_g {\ell}_g(p) \le \min_{p \in \Delta (\mathcal H)} \max_g {\ell}_g(p) +  \gamma \right\}
        \]

        \[
        \hat{C}_\gamma \triangleq \left\{ p \in \Delta (\mathcal H): \max_g \hat{\ell}_g(p) \le \hat{\gamma} +  \gamma \right\}
        \]
        We show that with probability at least $1-\delta$, $C_\gamma \subseteq \hat{C}_{\gamma + \epsilon}$. To show this, take a $p \in C_\gamma$. Observe that
        \begin{align*}
         \max_g \hat{\ell}_g(p) 
         \le \max_g {\ell}_g(p) + \frac{\epsilon}{4} 
            \le \min_{p \in \Delta (\mathcal H)} \max_g {\ell}_g(p) + \gamma + \frac{\epsilon}{4}
            \le \min_{p \in \Delta (\mathcal H)} \max_g \hat{\ell}_g(p) + \gamma + \frac{\epsilon}{2} 
            \le \hat{\gamma} + \gamma + \epsilon
        \end{align*}
        Therefore $p \in \hat{C}_{\gamma + \epsilon}$. Here, the first inequality follows from our generalization guarantees (Lemma~\ref{lem:gen2}) and the sample complexity bound of the theorem. The second inequality follows from the fact that $p \in C_\gamma$, the third is another application of Lemma~\ref{lem:gen2}, and the last one follows because $\hat{\gamma} \ge \min_{p \in \Delta (\mathcal H)} \max_g \hat{\ell}_g(p)$, i.e., the estimated minmax value from Algorithm~\ref{alg:general1} is always greater than or equal to the true minmax value on the dataset.
        
        Next, we have, with probability at least $1-\delta$,
        \begin{align*}
             \widehat{\text{OPT}} \left( \Delta (\mathcal{H}), \gamma + \epsilon \right) &= \min_{p \in \Delta (\mathcal{H})} \left\{ \hat{\ell} (p): \max_{g \in \mathcal{G}} \hat{\ell}_g(p) \le \hat{\gamma} + \gamma + \epsilon \right\} \\
             &= \min_{p \in \hat{C}_{\gamma + \epsilon}} \hat{\ell} (p) \\
             &\le \min_{p \in C_\gamma} \hat{\ell} (p) \\
             &\le \min_{p \in C_\gamma} {\ell} (p) + \frac{\epsilon}{4} \\
             &= \min_{p \in \Delta (\mathcal{H})} \left\{ \ell (p): \max_{g \in \mathcal{G}} \ell_g(p) \le \min_{p' \in \Delta (\mathcal{H})} \max_{g \in \mathcal{G}} \ell_g(p') + \gamma \right\} + \frac{\epsilon}{4} \\
             &= {\text{OPT}} \left( \Delta (\mathcal{H}), \gamma \right)+ \frac{\epsilon}{4}
        \end{align*}
        where the first inequality follows from the fact that $C_\gamma \subseteq \hat{C}_{\gamma + \epsilon}$, and the second is an application of Lemma~\ref{lem:gen2} and the sample complexity bound of the theorem.
        \end{proof}
        
        We are now ready to complete the proof of Theorem~\ref{thm:general2}. We start by proving the error guarantee of the theorem. We have that with probability at least $1-\delta$,
        \begin{align*}
         \ell (\hat{p}) 
         &\le \hat{\ell} (\hat{p}) + \frac{\epsilon}{4} %\\
         \le \widehat{\text{OPT}} \left( \Delta (\mathcal{H}), \gamma + \epsilon \right) + \frac{3\epsilon}{4} %\\
         \le {\text{OPT}} \left( \Delta (\mathcal{H}), \gamma \right)+ \epsilon
        \end{align*}
        where the first inequality follows from Lemma~\ref{lem:gen2} and the sample complexity bound of the theorem, the second follows from Equation~(\ref{eq:emp2}), and the last one follows from Lemma~\ref{lem:opt2}. 
        For fairness guarantees of the theorem, observe that with probability at least $1-\delta$,
        \begin{align*}
            \max_g \ell_g (\hat{p} ) \le \max_g \hat{\ell}_g (\hat{p} ) + \frac{\epsilon}{4} 
            \le \hat{\gamma} + \gamma +  \frac{7 \epsilon}{4}
            = \max_g \hat{\ell}_g (\tilde{p}) + \gamma +  \frac{7 \epsilon}{4}
            &\le \max_g {\ell}_g (\tilde{p}) +  \gamma +  2\epsilon \\
            &\le \min_{p \in \Delta (\mathcal{H})} \max_g \ell_g (p) + \gamma +  3\epsilon
        \end{align*}
        Here, the first inequality follows from Lemma~\ref{lem:gen2} and the sample complexity bound of the theorem, the second follows from Equation~(\ref{eq:emp2}). Let us call the output of Algorithm~\ref{alg:general1} that we use in Algorithm~\ref{alg:general2} by $\tilde{p}$: $\hat{\gamma} = \max_g \hat{\ell}_g (\tilde{p})$. The third inequality follows from the generalization guarantees in Lemma~\ref{lem:gen2}, and the last one follows from the minmax guarantee of $\tilde{p}$ from Theorem~\ref{thm:general1}.
\end{proof}

\section{Implementation of the Heuristic in Experiments}\label{app:exp}

In our experiments, we use the paired regression classifiers (PRC) heuristic, used previously in~\citep{gerrymandering,agarwal2018reductions,diana2021minimax} in the non-strategic setting. In our strategic setting, we modify this heuristic by shifting the linear classifier that it outputs.

The PRC has the notable feature of requiring the solution of a convex optimization problem, even in the presence of negative sample weights.
\begin{definition}[Paired Regression Classifier \cite{gerrymandering}]
\label{def:prc}
Given a vector of sample weights $\{ w_i \}_{i=1}^n$, the paired regression classifier operates as follows: We form two weight vectors, $z^0$ and $z^1$, where $z^k_i$ corresponds to the penalty assigned to sample $i$ in the event that it is labeled $k$. For the correct labeling of $x_i$, the penalty is $0$. For the incorrect labeling, the penalty is the current sample weight of the point, $w_i$. We fit two linear regression models $h^0$ and $h^1$ to predict $z^0$ and $z^1$, respectively, on all samples. Then, given a new point $x$, we calculate $h^0(x)$ and $h^1(x)$ and output $h(x) = \argmin_{k\in\{0,1\}} h^k(x)$.
\end{definition}
%\ali{In our implementation, are we still using two weight vectors?}
In our implementation, we translate the costs so that the cost of labeling any example with 0 is 0, and therefore we need only train $h^1$ to predict the cost of predicting 1. The sign of $h^1(x)$ then allows us to classify $x$.

\end{document}